%% file: neurips_2026.tex
\documentclass{article}

 \usepackage[main, final]{neurips_2026}

\usepackage[utf8]{inputenc} 
\usepackage[T1]{fontenc}    
\usepackage{hyperref}       
\usepackage{url}            
\usepackage{booktabs}       
\usepackage{amsfonts}       
\usepackage{nicefrac}       
\usepackage{microtype}      
\usepackage{xcolor}         

\usepackage{amssymb}
\usepackage{bm}
\usepackage{caption}
\usepackage{enumitem}
\usepackage{subfigure}
\usepackage{graphicx}

\usepackage{amssymb}
\usepackage{amsmath}
\usepackage{appendix}
\usepackage{wrapfig}
\usepackage{threeparttable}
\usepackage{amsthm}
\newtheorem{myPro}{Proposition}
\newtheorem{appendixPro}{Proposition}
\usepackage{algorithm}
\usepackage{algpseudocode}
\usepackage{multirow}
\usepackage{makecell}
\usepackage{balance}
\usepackage{wrapfig}
\usepackage{float}
\usepackage{xspace}
\usepackage{subcaption} 
\usepackage{pifont}

\usepackage{graphicx}

\usepackage{caption}  

\title{Paloma: Learning Residual Deviations via Phase\\–Amplitude Modulation for Time Series Forecasting}
\title{Paloma: Phase-Conditioned Residual Modulation for Time Series Forecasting}

\author{
Jingru Fei$^1$, Wei Fan$^2$, Kun Yi$^3$, Qi Zhang$^4$, Zhendong Niu$^1$\thanks{Corresponding author}\\
$^1$Beijing Institute of Technology,
$^2$University of Auckland,\\
$^3$State Information Center,
$^4$Tongji University,\\
\{jingrufei, zniu\}@bit.edu.cn, wei.fan@auckland.ac.nz, \\
kunyi.cn@gmail.com, zhangqi\_cs@tongji.edu.cn
}

\begin{document}

\maketitle

\begin{abstract}
\input{0_abstract}
\end{abstract}

\input{1_introduction_v4}
\input{2_related_work_v2}

\input{3_preliminary_v2}
\input{4_method_v2}
\input{5_experiments}

\input{6_conclusion_v2}

\bibliography{ref}
\bibliographystyle{unsrt}

\newpage
\appendix
\input{7_appendix}

\clearpage
\input{checklist.tex}

\end{document}

%% file: 0_abstract.tex
Explicitly modeling periodicity has proven to be an efficient approach for time series forecasting, where the residual, obtained after removing the periodic component from the time series, is typically treated as unstructured noise.
However, in this work, we show that such residuals still exhibit structured, \textit{phase-dependent} deviations, as supported by both empirical observations and theoretical analysis.
This suggests that residuals should not be treated merely as unstructured noise, but instead as \textit{phase-conditioned signals} whose variations arise in both direction and magnitude.
Motivated by this perspective and inspired by phase and amplitude modulation in classical signal processing, we propose \textit{Paloma}, a simple yet effective framework for modeling residual deviations.
Specifically, Paloma comprises two complementary modules: \textit{a Phase Rotation module}, which performs phase-conditioned rotations to capture directional changes in the feature space, and \textit{an Amplitude Scaling module}, which applies phase-conditioned affine transformations to model magnitude variations.
We further abstract the two modules into a general-purpose plugin, termed the \textit{Paloma technique}, which can be readily integrated into existing forecasting architectures to modulate input sequences in a phase-aware manner, enabling backbone models to capture phase-dependent variations in both direction and magnitude.
Extensive experiments on twelve datasets demonstrate that Paloma achieves state-of-the-art and robust forecasting performance. Additional results show that the Paloma technique consistently improves a wide range of baseline models, validating its effectiveness as a general-purpose component.
Our Code is available at \url{https://github.com/TSTS13/Paloma}.


%% file: 1_introduction_v4.tex
\section{Introduction}


Time series forecasting plays a fundamental role in many real-world applications, including energy systems~\cite{zhou2022fedformer,fan2023dish,fei2026olivia}, urban transportation planning~\cite{bai2020adaptive,wang2025investigating,qiu2024tfb}, and financial markets~\cite{liu2021pyraformer,feng2019temporal,qiu2025DBLoss,qiu2026bridging}. In these scenarios, accurately capturing the temporal dynamics is critical for effective decision-making, resource allocation, and system reliability. Recent advances in deep learning have developed various forecasting architectures for modeling temporal dynamics, including Transformer-based~\cite{nie2022time,liu2023itransformer,qiu2025duet,qiu2026dag}, MLP-based~\cite{zeng2023transformers,fei2025amplifier}, CNN-based~\cite{luo2024moderntcn,liu2022scinet}, and RNN-based~\cite{lin2025segrnn} models. Built upon different modeling principles, these methods have substantially enriched the methodological landscape of time series forecasting.

Among the diverse temporal structures in real-world time series, periodicity is one of the most prominent and informative patterns~\cite{lin2024sparsetsf,fan2022depts}. Explicitly modeling periodicity has therefore attracted increasing attention~\cite{lin2024cyclenet,zhang2026freqcycle,cao2026enhancing}, as it provides a principled and parameter-efficient way to capture recurring temporal structures while achieving competitive performance with substantially lower computational cost than Transformer-based models. For example, CycleNet~\cite{lin2024cyclenet} represents periodic components as learnable recurrent cycles, decomposing a time series into a phase-aligned periodic component and a residual term. The residual is then modeled by a simple predictor, such as a single-layer linear model or a shallow MLP. This design implicitly assumes that, once the periodic component is removed, the remaining residual is largely unstructured noise and can be adequately captured by a lightweight predictor. \textit{However, it remains unclear whether such residuals are truly unstructured, or whether they still contain systematic, phase-dependent patterns}.

\begin{figure}[t]
    \centering
    \includegraphics[width=1\linewidth]{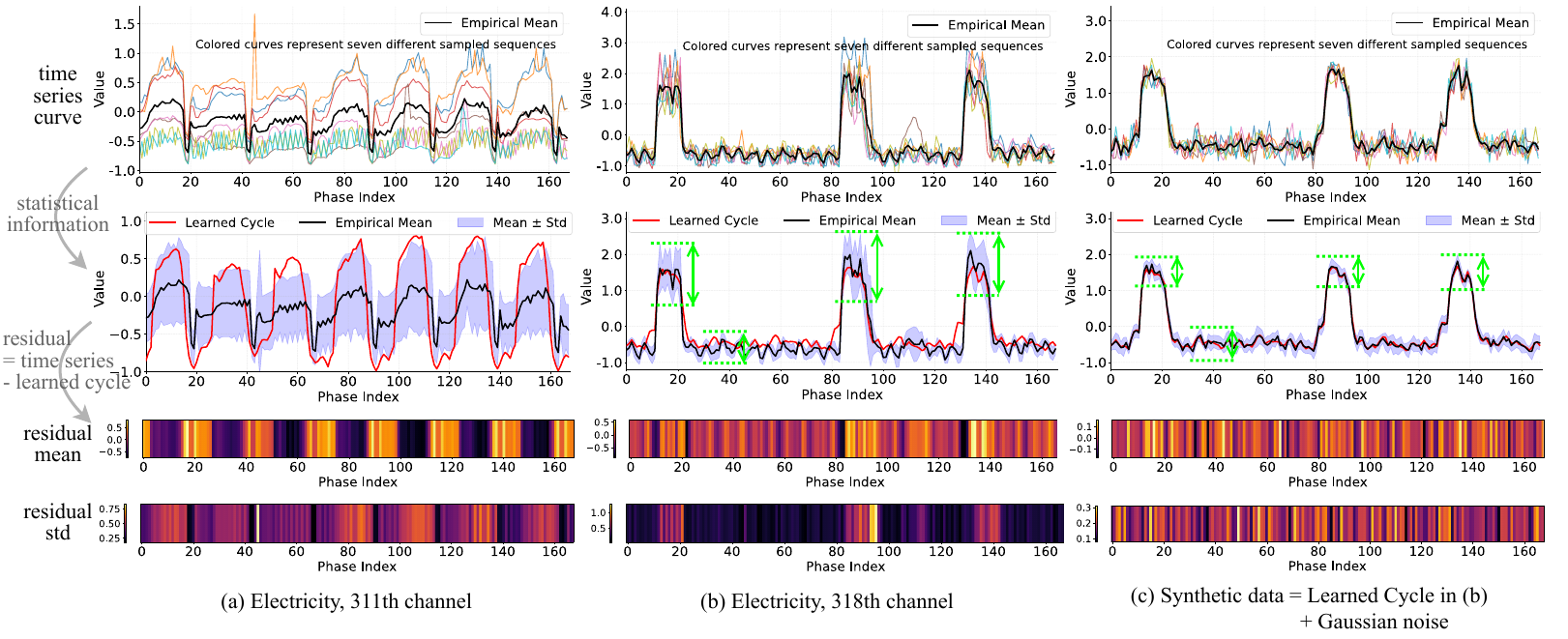}
    \vspace{-4mm}
    \caption{Visualization of structured residual deviations.
 (a) and (b) present two representative channels from the Electricity dataset.
From top to bottom, the rows show sampled sequences, learned cycles from CycleNet, and the residual mean and standard deviation visualized as one-dimensional heatmaps under a cycle length of 168. (c) is constructed by combining the learned cycle from (b) with Gaussian noise, with the row-wise definitions consistent with those in (a) and (b).
Additional experiments for (c) and visualizations on the Weather dataset are provided in Appendix~\ref{appendix_motivation}.}
    \label{fig:motivation}
    \vspace{-6.5mm}
\end{figure}



To investigate this question, we visualize the learned cycles produced by CycleNet~\cite{lin2024cyclenet} and the corresponding residual terms on the real-world Electricity dataset, as shown in Figure~\ref{fig:motivation}(a) and (b), which present two representative yet distinct cases.
In Figure~\ref{fig:motivation}(a), the first row shows multiple cycle instances with significantly varying peak magnitudes, leading to inconsistent periodic patterns and hindering the learning of a unified periodic structure; consequently, the learned cycle in the second row (red curve) deviates substantially from the empirical mean (black curve).
This clearly indicates that the residual term is not unstructured noise, but instead exhibits structured deviations, as evidenced by the heatmap visualizations of its mean and standard deviation, which reveal clear phase-dependent patterns.
In contrast to Figure~\ref{fig:motivation}(a), Figure~\ref{fig:motivation}(b) presents a markedly different scenario.
The multiple cycle instances in the first row follow a consistent periodic pattern, and the learned cycle in the second row closely matches the empirical mean, indicating that the periodic component is well captured.
Nevertheless, further analysis still reveals persistent phase-dependent structures in the residual, evident in both its mean and standard deviation, particularly around phase indices near 20, 90, and 140, with more pronounced effects observed in the standard deviation.




We further examine this phenomenon from a counterfactual perspective by assuming that the residual term is purely random noise, and analyzing the resulting time series and its statistical properties. To this end, we conduct a controlled experiment, as illustrated in Figure~\ref{fig:motivation}(c), where synthetic sequences are constructed by combining the learned cycle from Figure~\ref{fig:motivation}(b) with Gaussian noise as the residual component.
A clear contrast can be observed between Figure~\ref{fig:motivation}(b) and (c) at the locations highlighted by the green arrows: in (c), the standard deviation remains nearly invariant across phases, whereas in (b), it consistently increases around peak regions.
In summary, Figure~\ref{fig:motivation}(a) and (b) show that \textit{the residual consistently exhibits phase-dependent deviations even after the periodic component is effectively removed}, while the comparison between Figure~\ref{fig:motivation}(b) and (c) further suggests that \textit{the residual cannot be regarded as purely random noise}.
These observations are theoretically supported in Section~\ref{sec_theoretical_analysis}, where we show that such structured deviations inherently persist in the residual.

To this end, we interpret the residual as a phase-conditioned signal with structured variations across phases. In classical signal processing~\cite{anderson2013digital,joris2004neural,proakis2001digital,oppenheim1999discrete}, 
phase refers to the position of a periodic signal within its cycle at a specific moment and is usually represented by an angle, where different phase angles correspond to different directional states, while amplitude controls the strength of signal variations. Motivated by this perspective, we propose \textit{Paloma}, a simple yet effective framework that models residual deviations via phase--amplitude modulation through two complementary modules: 
(i) a \textit{Phase Rotation module} that applies phase-conditioned rotations to model directional transformations in the feature space, and (ii) an \textit{Amplitude Scaling module} that performs phase-conditioned affine transformations to capture variation intensity across phases. 
Moreover, the two modules can be combined into a general-purpose plugin, termed the \textit{Paloma technique}.
Beyond modeling residuals, Paloma technique can be readily integrated into existing forecasting architectures and directly applied to input sequences, enabling them to capture phase-dependent variations in both direction and magnitude that are otherwise overlooked.
Extensive experiments on twelve datasets demonstrate that Paloma achieves state-of-the-art forecasting performance, while additional plug-in experiments further verify the effectiveness and generality of the \textit{Paloma technique} across diverse forecasting architectures. In summary, our contributions are as follows:
\begin{itemize}[leftmargin=*]
    \item We identify that the residual, obtained after removing the periodic component in time series, exhibits structured, phase-dependent deviations rather than unstructured noise, and provide theoretical analysis explaining their persistence.

    \item We propose \textit{Paloma}, a simple framework that models residual deviations via phase–amplitude modulation, featuring a Phase Rotation module and an Amplitude Scaling module, and further abstract the two modules as a general-purpose plugin, termed the \textit{Paloma technique}.
    
    \item Extensive experiments demonstrate that Paloma achieves state-of-the-art and robust performance, and validate the effectiveness of Paloma technique across diverse forecasting models.
\end{itemize}
\vspace{-2mm}

%% file: 2_related_work_v2.tex
\vspace{-3mm}
\section{Related Work}
\vspace{-2mm}
\paragraph{Temporal Periodicity Modeling.}
Temporal periodicity has long been recognized as a key factor in improving forecasting accuracy by capturing recurring patterns in time series data.
Representative models, including Autoformer~\cite{wu2021autoformer}, FEDformer~\cite{zhou2022fedformer}, and DLinear~\cite{zeng2023transformers}, leverage seasonal–trend decomposition (STD) to disentangle the original time series into seasonal and trend components, followed by independent modeling of each component.
Building on these approaches, Leddam~\cite{yu2024revitalizing} replaces the fixed moving-average kernel in STD with a learnable decomposition kernel, providing greater flexibility and yielding improved performance.
Recent advances have explored leveraging the intrinsic periodicity of time series as prior knowledge for efficient forecasting. 
SparseTSF~\cite{lin2024sparsetsf} introduces a cross-period sparse forecasting technique to decouple cycles and trends, achieving competitive performance with low computational cost. 
Along this line of research, CycleNet~\cite{lin2024cyclenet} introduces Residual Cycle Forecasting (RCF), which models periodic patterns using learnable recurrent cycles and predicts the resulting residuals. 
While effective and efficient, the residual component is typically modeled using a simple predictor, such as a linear layer or a shallow MLP, implicitly assuming that it mainly consists of unstructured noise. However, our analysis suggests that the residual may still contain structured patterns beyond the modeled periodic component.
This motivates a more refined modeling of the residual component.


\vspace{-2mm}
\paragraph{Modulation–Inspired Time Series Modeling.}
Recent advances in time series modeling have increasingly drawn inspiration from classical signal processing~\cite{proakis2001digital,oppenheim1999discrete,asselin1972frequency,yi2025survey}.
FilterNet~\cite{yi2024filternet} introduces learnable frequency filters to enhance the utilization of full-spectrum information, effectively performing frequency-domain shaping.
In contrast, TimeEmb~\cite{xia2025timeemb} applies frequency filtering specifically to the time-varying component, targeting the modeling of non-stationary temporal dynamics.
Subsequently, FreqCycle~\cite{zhang2026freqcycle} enhances mid-to-high frequency components via learnable filters and adaptive weighting, improving the modeling of short-term variations.
These filtering processes can be viewed as a form of implicit spectral modulation.
Unlike these approaches, which implicitly realize modulation through frequency filtering, our method explicitly modulates the signal by separately modeling its phase and amplitude variations.
Beyond frequency filtering and spectral modulation, PhaseFormer~\cite{niu2025phaseformer} explores the role of phase information in time series forecasting by replacing conventional patch-based tokenization with phase-based tokens, emphasizing phase as a more compact and coherent representation of temporal dynamics.
Different from PhaseFormer, Paloma treats phase as a conditional variable for controlling residual deviations.


%% file: 3_preliminary_v2.tex
\section{Preliminaries}
\subsection{Problem Formulation}
\paragraph{Time Series Forecasting.}
Given a time series input $\mathbf{X} = \{\mathbf{x}_l\}_{l=0}^{T-1} \in \mathbb{R}^{T \times N}$ and its corresponding future sequence $\mathbf{Y} = \{\mathbf{x}_l\}_{l=T}^{T+\tau-1} \in \mathbb{R}^{\tau \times N}$, where $N$ denotes the number of channels; $T$ and $\tau$ denote the lookback window length and prediction horizon, respectively.
The time series forecasting task is to learn a parametric model $f_{\theta}(\cdot)$ that maps the historical observations $\mathbf{X}$ to the predicted future sequence $\hat{\mathbf{Y}}$, i.e., $\hat{\mathbf{Y}}=f_{\theta}(\mathbf{X})$, with $\theta$ being the model parameters.

\paragraph{Phase Index Construction.}\label{section_phase_index}
Given a time series input $\mathbf{X} = \{\mathbf{x}_l\}_{l=0}^{T-1} \in \mathbb{R}^{T \times N}$, we define a corresponding phase index sequence 
$\mathbf{i} = \{i_l\}_{l=0}^{T-1}$, 
where $i_l \in \{0, 1, \ldots, W-1\}$, and $W$ denotes the cycle length.
For a sequence starting at absolute time $t_{0}$, the phase index at relative step $l$ is defined as 
$i_{l}=\left[\left(t_{0} \bmod W\right)+ l \right] \bmod W,\quad l=0,1, \ldots, T-1$.
This formulation assigns each time step to a specific phase within the cycle, enabling alignment of temporal observations across different periods and facilitating phase-conditioned modeling of temporal dynamics.

\subsection{Understanding Phase-Dependent Deviations in Residuals}\label{sec_theoretical_analysis}
Given a time series input $\mathbf{X}\in \mathbb{R}^{T \times N}$ with phase index sequence $\mathbf{i} = \{i_l\}_{l=0}^{T-1}$, a common strategy for explicitly modeling periodic 
components, adopted by methods such as CycleNet~\cite{lin2024cyclenet}, is to employ a learnable prototype matrix $\mathbf{Q} \in \mathbb{R}^{W \times N}$, where the $k$-th row $\mathbf{Q}_k \in \mathbb{R}^{N}$ represents the reference pattern 
associated with phase index $k$.
To better understand the role of the phase-dependent prototype matrix $\mathbf{Q}$, we consider a simplified prediction setting in which the prediction depends solely on $\mathbf{Q}$, i.e., $\hat{\mathbf{x}}_l = \mathbf{Q}_{i_l}$, where $\hat{\mathbf{x}}_l \in \mathbb{R}^N$ denotes the predicted value at time step $l$ (with $\mathbf{x}_l \in \mathbb{R}^N$ being the corresponding ground-truth).

\begin{myPro}[\textbf{Optimal Phase Prototype under MSE}]
\label{proposition_1}
Under the above setting, consider the widely used mean squared error (MSE) objective
$\mathcal{L}(\mathbf{Q}) = \mathbb{E}\left[\left\|\mathbf{x}_l - \mathbf{Q}_{i_l}\right\|_2^2\right]$.
Then the optimal solution satisfies
$\mathbf{Q}_k^* = \mathbb{E}\left[\mathbf{x}_l \mid i_l = k\right], \quad \forall k \in \{0,1,\dots,W-1\}$.
\end{myPro}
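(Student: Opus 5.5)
The plan is to decompose the objective by conditioning on the phase index and then solve $W$ decoupled least-squares problems. By the law of total expectation, with $p_k = \Pr(i_l = k)$,
\begin{equation*}
\mathcal{L}(\mathbf{Q}) = \sum_{k=0}^{W-1} p_k \, \mathbb{E}\left[\left\|\mathbf{x}_l - \mathbf{Q}_k\right\|_2^2 \,\middle|\, i_l = k\right].
\end{equation*}
On the event $i_l = k$, the prediction $\mathbf{Q}_{i_l}$ equals the constant vector $\mathbf{Q}_k$, so each summand depends only on the single row $\mathbf{Q}_k$. Hence minimizing $\mathcal{L}$ over $\mathbf{Q}$ is equivalent to minimizing each conditional term separately over $\mathbf{Q}_k \in \mathbb{R}^N$, for every $k$ with $p_k > 0$.

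For a fixed $k$, write $\boldsymbol{\mu}_k = \mathbb{E}[\mathbf{x}_l \mid i_l = k]$. I will use the standard bias--variance decomposition, obtained by adding and subtracting $\boldsymbol{\mu}_k$ and expanding the square:
\begin{equation*}
\mathbb{E}\left[\|\mathbf{x}_l - \mathbf{Q}_k\|_2^2 \mid i_l = k\right] = \mathbb{E}\left[\|\mathbf{x}_l - \boldsymbol{\mu}_k\|_2^2 \mid i_l = k\right] + \|\boldsymbol{\mu}_k - \mathbf{Q}_k\|_2^2 .
\end{equation*}
The cross term $2\,\mathbb{E}[(\mathbf{x}_l - \boldsymbol{\mu}_k)^\top(\boldsymbol{\mu}_k - \mathbf{Q}_k) \mid i_l = k]$ vanishes because $\boldsymbol{\mu}_k - \mathbf{Q}_k$ is deterministic and $\mathbb{E}[\mathbf{x}_l - \boldsymbol{\mu}_k \mid i_l = k] = 0$. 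The first term does not depend on $\mathbf{Q}_k$, and the second is nonnegative and equals zero exactly when $\mathbf{Q}_k = \boldsymbol{\mu}_k$. This gives $\mathbf{Q}_k^* = \mathbb{E}[\mathbf{x}_l \mid i_l = k]$, and the minimizer is unique for each $k$ with $p_k > 0$. The same conclusion also follows from the first-order condition $\nabla_{\mathbf{Q}_k}\mathcal{L} = -2p_k(\boldsymbol{\mu}_k - \mathbf{Q}_k) = 0$ together with strict convexity.

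There is no real obstacle here; the only care needed is with technical assumptions. I would state that the expectation is over the joint distribution of $(\mathbf{x}_l, i_l)$, induced by the data and a random time index. I would also assume $\mathbb{E}\|\mathbf{x}_l\|_2^2 < \infty$ so that the conditional means exist and $\mathcal{L}$ is finite. Finally, I would note that any phase with $p_k = 0$ leaves $\mathbf{Q}_k$ unconstrained, so the statement should be read for phases that occur with positive probability.
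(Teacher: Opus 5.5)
Your proposal is correct and follows essentially the same argument as the paper's proof. Both decompose the objective by the law of total expectation into decoupled per-phase terms. Both then use the bias--variance split around $\mathbb{E}[\mathbf{x}_l \mid i_l = k]$ with the vanishing cross term, under the same finite-second-moment assumption and the same restriction to phases with $\mathbb{P}(i_l = k) > 0$.
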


Proposition~\ref{proposition_1} shows that, under the MSE objective, the prototype matrix $\mathbf{Q}$ learns the phase-conditioned mean of the time series, i.e., the dominant first-order periodic structure associated with each phase.
However, removing this structure alone is insufficient to guarantee statistical invariance of the residual across phases. 
In particular, higher-order statistics, such as variance, may still depend on the phase index, as illustrated in Figure~\ref{fig:motivation}. 
As a result, the residual may retain structured, phase-dependent deviations beyond the mean, reflecting intrinsic heterogeneity in the underlying data distribution. 
We formalize this phenomenon in the following proposition.

\begin{myPro}[\textbf{Exact Preservation of Phase-Conditional Second-Order Structure}]
\label{proposition_2}
Let $\mathbf{Q}^*$ be the optimal solution given in Proposition~\ref{proposition_1}, namely,
$\mathbf{Q}_k^* = \mathbb{E}[\mathbf{x}_l \mid i_l = k]$ for all $k \in \{0,1,\dots,W-1\}$ such that $\mathbb{P}(i_l = k) > 0$.
Define the residual as $\mathbf{r}_l = \mathbf{x}_l - \mathbf{Q}^*_{i_l}$.
Assume that the conditional second moments of $\mathbf{x}_l$ given $i_l$ exist.
Then, for any $k \in \{0,1,\dots,W-1\}$ such that $\mathbb{P}(i_l = k) > 0$, the residual satisfies
$\mathbb{E}[\mathbf{r}_l \mid i_l = k] = \mathbf{0}$, and
$\mathrm{Cov}(\mathbf{r}_l \mid i_l = k) = \mathrm{Cov}(\mathbf{x}_l \mid i_l = k)$.
\end{myPro}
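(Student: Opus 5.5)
The argument is a direct computation with conditional expectations. Throughout, I fix an index $k$ with $\mathbb{P}(i_l = k) > 0$. Conditioning on the event $\{i_l = k\}$ is then elementary: it amounts to passing to the probability measure $\mathbb{P}(\,\cdot \mid i_l = k)$, so no regular conditional distributions are needed. The key observation is that on this event the residual is $\mathbf{r}_l = \mathbf{x}_l - \mathbf{Q}^*_k$. In other words, it is $\mathbf{x}_l$ shifted by the deterministic vector $\mathbf{Q}^*_k = \mathbb{E}[\mathbf{x}_l \mid i_l = k]$. Both claims follow from this, using Proposition~\ref{proposition_1} only for the identification of $\mathbf{Q}^*_k$ as the conditional mean.

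\textbf{First moment.} The assumption that conditional second moments exist implies that conditional first moments are finite, for instance by Cauchy--Schwarz under $\mathbb{P}(\cdot \mid i_l = k)$. So $\mathbf{Q}^*_k$ is well defined and finite. Linearity of conditional expectation then gives
\[
\mathbb{E}[\mathbf{r}_l \mid i_l = k] = \mathbb{E}[\mathbf{x}_l \mid i_l = k] - \mathbf{Q}^*_k = \mathbf{0}.
\]
Here I use that a constant vector has conditional expectation equal to itself.

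\textbf{Second moment.} The conditional mean of $\mathbf{r}_l$ is zero, so
\[
\mathrm{Cov}(\mathbf{r}_l \mid i_l = k) = \mathbb{E}\bigl[\mathbf{r}_l \mathbf{r}_l^\top \mid i_l = k\bigr] = \mathbb{E}\bigl[(\mathbf{x}_l - \mathbf{Q}^*_k)(\mathbf{x}_l - \mathbf{Q}^*_k)^\top \mid i_l = k\bigr].
\]
The right-hand side is by definition $\mathrm{Cov}(\mathbf{x}_l \mid i_l = k)$, because $\mathbf{Q}^*_k$ is exactly the conditional mean of $\mathbf{x}_l$. Equivalently, covariance is invariant under deterministic translations. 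Finiteness of every entry follows from the second-moment assumption.

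\textbf{Main obstacle.} There is no real obstacle; the only care needed is measure-theoretic bookkeeping. The first point is to restrict to phases with positive probability, so that conditioning is well defined and $\mathbf{Q}^*_k$ is pinned down; phases of probability zero leave $\mathbf{Q}^*_k$ arbitrary, which is why the statement excludes them. The second point is to note explicitly that $\mathbf{Q}^*_{i_l}$ is $\sigma(i_l)$-measurable, so it can be pulled out of the conditional expectation. I would close by remarking on the interpretation: subtracting the phase means removes only first-order structure, so any phase dependence in $\mathrm{Cov}(\mathbf{x}_l \mid i_l = k)$ is inherited unchanged by the residual.
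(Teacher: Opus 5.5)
Your proposal is correct and follows essentially the same route as the paper's proof. On $\{i_l = k\}$ the residual equals $\mathbf{x}_l - \boldsymbol{\mu}_k$, so linearity gives a zero conditional mean, and substituting into the definition of conditional covariance yields $\mathrm{Cov}(\mathbf{x}_l \mid i_l = k)$; your extra remarks on the finiteness of first moments and on conditioning only on positive-probability events are sound details that the paper leaves implicit.
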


Propositions~\ref{proposition_1} and~\ref{proposition_2} together imply that, even when the phase-conditioned mean is optimally eliminated under the MSE objective (with the MAE counterpart provided in Appendix~\ref{appendix_proposition_MAE}), the residual inherently preserves higher-order phase-dependent structures. 
Consequently, when the conditional covariance $\mathrm{Cov}(\mathbf{x}_l \mid i_l = k)$ varies across phases, this variability is reflected in the residual, giving rise to the structured deviations observed in Figure~\ref{fig:motivation}. 
In practice, such phase-dependent effects are often further amplified, as the learned periodic pattern $\mathbf{Q}$ may deviate from the optimal solution due to finite data, optimization challenges, and model capacity limitations.

%% file: 4_method_v2.tex
\section{Method}

In this section, we elaborate on our proposed method, \textit{Paloma}, which is inspired by phase and amplitude modulation in signal processing~\cite{anderson2013digital,joris2004neural,proakis2001digital,oppenheim1999discrete} and improves time series forecasting by explicitly modeling phase- and amplitude-dependent deviations in the residual.

\vspace{-1.5mm}
\subsection{Overview of Paloma}
\vspace{-1.5mm}
As illustrated in Figure~\ref{fig:framework}, given a time series input $\mathbf{X} \in \mathbb{R}^{T \times N}$, we first apply instance normalization~\cite{kim2021reversible} to mitigate non-stationarity~\cite{kim2021reversible,liu2022non} in time series data, 
and construct the phase indices $\mathbf{i}$ based on the cycle length $W$, following the formulation described in Section~\ref{section_phase_index}.
Based on $\mathbf{i}$, we retrieve phase-aligned rows from three learnable matrices: the prototype matrix $\mathbf{Q}$, the phase control matrix $\mathbf{P}$, and the amplitude control matrix $\mathbf{A}$, obtaining $\tilde{\mathbf{Q}}$, $\tilde{\mathbf{P}}$, and $\tilde{\mathbf{A}}$, respectively, as described in Sections~\ref{section_matrix_Q},~\ref{section_matrix_P},~\ref{section_matrix_A}.
We then remove the periodic component to obtain the residual term $\mathbf{R} = \mathbf{X} - \tilde{\mathbf{Q}}$.
The residual is processed by two complementary modules: the \textit{Phase Rotation Module}, which uses $\tilde{\mathbf{P}}$ to model directional variations via phase-conditioned rotations, and the \textit{Amplitude Scaling Module}, which uses $\tilde{\mathbf{A}}$ to capture magnitude variations through adaptive scaling and shifting.
Finally, the outputs of the two modules are fused and projected, combined with the input embedding via a residual connection, and mapped to the prediction horizon, followed by inverse instance normalization to produce the final forecasting result $\hat{\mathbf{Y}} \in \mathbb{R}^{\tau \times N}$.

\vspace{-2mm}
\paragraph{Obtaining the Residual Term.}\label{section_matrix_Q}
Before introducing the two proposed modules designed to model the residual, we first provide additional details on how the residual is obtained.
Given $N$ channels and a priori cycle length $W$, we first define a learnable prototype matrix $\mathbf{Q} \in \mathbb{R}^{W \times N}$, where the $k$-th row $\mathbf{Q}_k \in \mathbb{R}^{N}$ represents the reference pattern associated with phase index $k$.
The matrix $\mathbf{Q}$ is initialized with a standard Gaussian distribution and updated via backpropagation.
Then the residual term $\mathbf{R} \in \mathbb{R}^{T \times N}$ is defined as $\mathbf{R} = \mathbf{X} - \tilde{\mathbf{Q}}$,
where $\tilde{\mathbf{Q}} = [\mathbf{Q}_{i_0}, \mathbf{Q}_{i_1}, \ldots, \mathbf{Q}_{i_{T-1}}]^\top \in \mathbb{R}^{T \times N}$ denotes the phase-aligned prototype matrix.


\begin{figure}[t]
    \centering
    \includegraphics[width=0.92\linewidth]{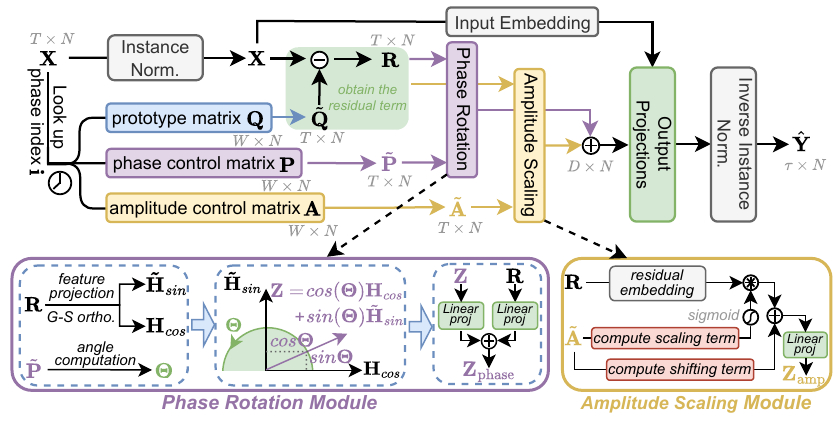}
    \vspace{-1.5mm}
    \caption{Overall architecture of Paloma. Paloma is a lightweight framework for modeling residual deviations, centered on (i) the Phase Rotation module, which captures phase-conditioned directional variations via rotation in the feature space, where G-S ortho. denotes Gram–Schmidt orthogonalization~\cite{golub2013matrix,greub2012linear}, and (ii) the Amplitude Scaling module, which models phase-dependent magnitude variations through affine modulation.}
    \label{fig:framework}
    \vspace{-4.5mm}
\end{figure}

\vspace{-2mm}
\subsection{Phase Rotation Module}\label{section_matrix_P}
\vspace{-1.5mm}
Motivated by Proposition~\ref{proposition_2}, the residual term exhibits phase-dependent second-order structure, suggesting that the residual may undergo phase-dependent geometric variations in the feature space. To capture such variations, we design a phase rotation module that performs a phase-conditioned transformation of residual features.
Given the residual term $\mathbf{R} \in \mathbb{R}^{T \times N}$, 
we first map it into two latent feature representations along the temporal dimension:
\begin{equation}
\mathbf{H}_{\cos} = \operatorname{Linear}(\mathbf{R}) \in \mathbb{R}^{D \times N}, \quad
\mathbf{H}_{\sin} = \operatorname{Linear}(\mathbf{R}) \in \mathbb{R}^{D \times N},
\end{equation}
where $D$ denotes the hidden dimension.
The subscripts $\cos$ and $\sin$ are used to emphasize that these two components will be combined through a trigonometric rotation, forming a phase-dependent transformation.
To ensure a well-defined rotation, we construct an orthogonal basis from $(\mathbf{H}_{\cos}, \mathbf{H}_{\sin})$ using Gram–Schmidt orthogonalization~\cite{golub2013matrix,greub2012linear}:
\begin{equation}
\mathbf{H}_{\sin}^{\perp}
=
\mathbf{H}_{\sin}
-
\frac{\langle \mathbf{H}_{\sin}, \mathbf{H}_{\cos} \rangle}{\|\mathbf{H}_{\cos}\|^2}
\mathbf{H}_{\cos} \in \mathbb{R}^{D \times N},
\end{equation}
where $\langle \cdot, \cdot \rangle$ denotes the inner product, and the resulting $\mathbf{H}_{\sin}^{\perp}$ is orthogonal to $\mathbf{H}_{\cos}$.
This orthogonalization stabilizes the subsequent rotation and avoids degenerate solutions where the two directions become collinear~\cite{golub2013matrix,greub2012linear}.

To generate the phase-conditioned rotation angle, we introduce a learnable phase control matrix 
$\mathbf{P} \in \mathbb{R}^{W \times N}$, where the $k$-th row $\mathbf{P}_k \in \mathbb{R}^{N}$ 
encodes phase-specific information that modulates the rotation angle for phase $k$.
The matrix $\mathbf{P}$ is defined and learned in the same manner as $\mathbf{Q}$.
Accordingly, the phase-aligned control matrix is constructed as
$\tilde{\mathbf{P}} = [\mathbf{P}_{i_0}, \mathbf{P}_{i_1}, \ldots, \mathbf{P}_{i_{T-1}}]^\top \in \mathbb{R}^{T \times N}$.
We then compute the phase-dependent rotation angle $\boldsymbol{\Theta}$ and obtain the rotated representation $\mathbf{Z}$ as follows:
\begin{equation}
\boldsymbol{\Theta} = \pi \cdot \tanh(\operatorname{Linear}(\tilde{\mathbf{P}})) \in \mathbb{R}^{D \times N},
\end{equation}
\begin{equation}
\mathbf{Z} = \cos(\boldsymbol{\Theta}) \odot \mathbf{H}_{\cos} + \sin(\boldsymbol{\Theta}) \odot \mathbf{H}_{\sin}^{\perp} \in \mathbb{R}^{D \times N}.
\end{equation}
This formulation performs a phase-conditioned rotation of residual features within a learned orthogonal subspace, enabling the model to capture phase-dependent geometric variations.
The rotated representation $\mathbf{Z}$ is further processed and integrated with $\mathbf{R}$ through a residual connection:
\begin{equation}
\mathbf{Z}_{\text{phase}} = \mathrm{LayerNorm}\left(\operatorname{Linear}(\mathbf{Z}) + \operatorname{Linear}(\mathbf{R})\right) \in \mathbb{R}^{D \times N},
\end{equation}
where $\mathrm{LayerNorm}(\cdot)$ denotes layer normalization applied along the hidden dimension $D$.

\vspace{-1.5mm}
\subsection{Amplitude Scaling Module}\label{section_matrix_A}
\vspace{-1.5mm}
Proposition~\ref{proposition_2} indicates that the residual exhibits phase-dependent second-order structure.
From a covariance perspective, this structure is characterized by both principal directions and variance magnitudes, corresponding to the eigenvectors and eigenvalues of the covariance matrix, respectively.
While the phase rotation module models phase-dependent variations in feature direction, variations in residual magnitude also play an important role in characterizing such structure. 
To capture this complementary aspect, we introduce an amplitude scaling module based on phase-conditioned affine modulation.
Specifically, we introduce a learnable amplitude control matrix $\mathbf{A} \in \mathbb{R}^{W \times N}$, which parameterizes phase-conditioned scaling and shifting of residual features, where the $k$-th row $\mathbf{A}_k \in \mathbb{R}^{N}$ encodes the amplitude modulation parameters for phase $k$.
The matrix $\mathbf{A}$ shares the same learning mechanism as the prototype matrix $\mathbf{Q}$ and the phase control matrix $\mathbf{P}$, ensuring a consistent parameterization across modules.
Accordingly, the phase-aligned amplitude control matrix is constructed as
$
\tilde{\mathbf{A}} = [\mathbf{A}_{i_0}, \mathbf{A}_{i_1}, \ldots, \mathbf{A}_{i_{T-1}}]^\top \in \mathbb{R}^{T \times N}.
$

We first embed the residual $\mathbf{R}$ into a hidden space as $\mathbf{R}_e = \operatorname{Embedding}(\mathbf{R}) \in \mathbb{R}^{D \times N}$, where $\operatorname{Embedding}(\cdot)$ denotes a linear projection, and $\mathbf{R}_e$ serves as the base representation for subsequent phase-conditioned modulation.
We then compute phase-dependent scaling and shifting terms from the phase-aligned amplitude control matrix $\tilde{\mathbf{A}}$ as follows:
\begin{equation}
\mathbf{S} = \operatorname{Sigmoid}(\operatorname{Linear}(\tilde{\mathbf{A}})) \in \mathbb{R}^{D \times N}, \quad
\mathbf{B} = \operatorname{Linear}(\tilde{\mathbf{A}}) \in \mathbb{R}^{D \times N}.
\end{equation}
The amplitude-modulated representation is obtained via a phase-conditioned affine transformation followed by a linear output projection:
\begin{equation}
\mathbf{Z}_{\mathrm{amp}} = \operatorname{Linear}(\mathbf{S} \odot \mathbf{R}_e + \mathbf{B}) \in \mathbb{R}^{D \times N},
\end{equation}
where $\odot$ denotes element-wise multiplication.
The amplitude scaling module captures phase-dependent magnitude variations in the residual representation via affine modulation, complementing the phase rotation module to jointly characterize residual deviations in terms of directional and magnitude components.

\vspace{-2.5mm}
\subsection{Output Projections}
\vspace{-1mm}

We fuse the outputs of the phase rotation and amplitude scaling modules as $\mathbf{H} = \mathbf{Z}_{\mathrm{phase}} + \mathbf{Z}_{\mathrm{amp}} \in \mathbb{R}^{D \times N}$, and further process $\mathbf{H}$ with a lightweight projection to introduce nonlinearity and enable feature interaction, yielding $\mathbf{H}' = \operatorname{GELU}(\operatorname{Linear}(\mathbf{H})) \in \mathbb{R}^{D \times N}$.
Meanwhile, the original input sequence $\mathbf{X}$ is projected into the same hidden space via a linear embedding, yielding $\mathbf{X}_{e} = \operatorname{Embedding}(\mathbf{X}) \in \mathbb{R}^{D \times N}$, where $\mathbf{X}_{e}$ acts as a shortcut pathway that preserves the original input information and facilitates stable optimization.
Finally, the hidden representation $\mathbf{H}'$ is combined with the input embedding $\mathbf{X}_{e}$ via a residual connection and projected to the prediction horizon:
\begin{equation}
\hat{\mathbf{Y}} = \operatorname{Linear}(\mathbf{H}' + \mathbf{X}_{e}) \in \mathbb{R}^{\tau \times N}.
\end{equation}

\vspace{-2mm}
\subsection{Complexity Analysis}
\vspace{-1mm}
Paloma adopts a channel-independent~\cite{nie2022time,zeng2023transformers} linear architecture, where the main computations are performed by linear projections along the temporal dimension and lightweight projections in the hidden space. 
The temporal projections in the Phase Rotation and Amplitude Scaling modules incur a cost of $\mathcal{O}(NTD)$, while the hidden-space projections introduce an additional cost of $\mathcal{O}(ND^2)$. The final prediction head contributes $\mathcal{O}(ND\tau)$. Other operations, including residual construction, phase-aligned prototype lookup, element-wise modulation, normalization, and the Gram--Schmidt orthogonalization step, are linear in the input or hidden size and are dominated by the projection operations. Therefore, the overall computational complexity of Paloma is
$\mathcal{O}\big(N(TD + D^2 + D\tau)\big)$.
Importantly, Paloma maintains linear complexity with respect to both the lookback length $T$ and the number of channels $N$, making it suitable for efficient multivariate time series forecasting.

%% file: 5_experiments.tex
\vspace{-2mm}
\section{Experiments}
\vspace{-2mm}
\subsection{Experimental Setup}
\input{tables/input_96_avg}

\vspace{-1.5mm}
\paragraph{Baselines and Datasets}
We consider 17 representative and widely recognized state-of-the-art forecasting models as baselines, including 
(i) Linear-based: GTR~\cite{cao2026enhancing}, RAFT~\cite{han2025retrieval}, CycleNet~\cite{lin2024cyclenet}, DLinear~\cite{zeng2023transformers}, SparseTSF~\cite{lin2024sparsetsf}, OLinear~\cite{yue2025olinear}, TimeMixer~\cite{wang2024timemixer}; 
(ii) Transformer-based: TimeXer~\cite{wang2024timexer}, iTransformer~\cite{liu2023itransformer}, PatchTST~\cite{nie2022time}; 
(iii) Frequency-based: FilterNet~\cite{yi2024filternet}, FreTS~\cite{yi2023frequency}, FITS~\cite{xu2023fits}, Fredformer~\cite{piao2024fredformer}; 
(iv) TCN-based: TimesNet~\cite{wu2022timesnet}; 
(v) Two different training frameworks: DistDF~\cite{wang2025distdf} and FreDF~\cite{wang2024fredf}.
To evaluate the effectiveness of our model, we conduct extensive experiments on eight benchmark datasets for long-term forecasting, including ETT (four subsets)~\cite{zhou2021informer}, Electricity~\cite{wu2021autoformer}, Traffic~\cite{wu2021autoformer}, Solar-Energy~\cite{zhou2021informer}, and Weather~\cite{wu2021autoformer}, and PEMS (four subsets)~\cite{liu2022scinet} for short-term forecasting.
Detailed information on the baselines and datasets can be found in Appendix~\ref{appendix_baselines} and Appendix~\ref{appendix_datasets}.
And the implementation details are provided in Appendix~\ref{appendix_implementation_detials}.


\vspace{-1mm}
\subsection{Main Results}
\vspace{-1mm}
Table~\ref{tab:input_96_avg} reports the forecasting performance for both long-term and short-term settings, where Paloma consistently outperforms a wide range of state-of-the-art methods across diverse datasets.
We further report the standard deviation of Paloma across five runs with different random seeds in Table~\ref{tab:input_96_additional}, highlighting its stable and robust performance.
We attribute this performance gain to the proposed phase–amplitude modulation mechanism for modeling structured residual deviations.
The phase rotation module captures phase-dependent variations by rotating feature representations in a learned feature space according to the cycle position, while the amplitude scaling module models the corresponding magnitude changes.
Together, they enable Paloma to capture phase-dependent residual deviations, resulting in improved accuracy and robustness.


We also observe relatively higher MSE value on the Traffic dataset, which is likely related to the complex spatiotemporal correlations and temporal lag effects inherent in traffic data, particularly affecting long-term forecasting performance. 
While Paloma adopts a channel-independent modeling strategy that effectively captures per-channel temporal patterns, it may be less suited for modeling complex channel interactions compared to methods explicitly designed for such dependencies, such as iTransformer~\cite{liu2023itransformer}. 
Nevertheless, Paloma remains highly competitive in short-term traffic forecasting, achieving 5 top-1 and 3 top-2 results across 8 evaluation metrics on the four PEMS datasets.


\input{tables/ablation_avg}
\input{tables/general_technique}

\vspace{-1mm}
\subsection{Model Analysis}
\vspace{-1mm}
\paragraph{Ablation Studies.}
We conduct ablation studies to evaluate the contributions of the Phase Rotation (PR) and Amplitude Scaling (AS) modules across six datasets, as shown in Table~\ref{tab:ablation_avg}. 
Specifically, removing either the PR or AS module leads to noticeable performance drops, underscoring the importance of modeling phase-dependent variations via directional transformations (PR) and magnitude modulation (AS).
When both modules are removed, the performance further deteriorates, revealing their complementary roles in modeling structured residual deviations. 
These results validate the effectiveness of the proposed phase--amplitude modulation mechanism and emphasize the necessity of explicitly modeling structured residual deviations.


\vspace{-2mm}
\paragraph{Paloma Tech. as plugin.}
Beyond the ablation study within the Paloma model, we further evaluate the generalizability of the proposed phase--amplitude modulation mechanism by applying it as a plugin module to other baselines. 
Specifically, we integrate the combination of the Phase Rotation and Amplitude Scaling modules, referred to as \emph{Paloma Tech.}, into several representative forecasting architectures, including Transformer-based models (iTransformer~\cite{liu2023itransformer} and PatchTST~\cite{nie2022time}), the Linear-based model DLinear~\cite{zeng2023transformers}, and the frequency-based model FilterNet~\cite{yi2024filternet}. 
As shown in Table~\ref{tab:general_tech}, Paloma Tech. consistently improves the performance of all baseline models across different datasets and prediction horizons, demonstrating that the proposed modulation mechanism is architecture-agnostic and effective for capturing phase-dependent variations, whether applied to residuals or directly to input sequences.
Notably, the performance gains are particularly significant when integrating Paloma Tech. into DLinear. 
Although DLinear benefits from seasonal--trend decomposition to separate coarse temporal components, its forecasting process still mainly relies on simple linear transformations and lacks an explicit mechanism for modeling phase-dependent temporal variations. 
By introducing the proposed phase--amplitude modulation mechanism, Paloma Tech. effectively complements this limitation, enabling DLinear to capture more structured phase-aware temporal dynamics.

\begin{figure}[h]
    \centering
    \includegraphics[width=0.93\linewidth]{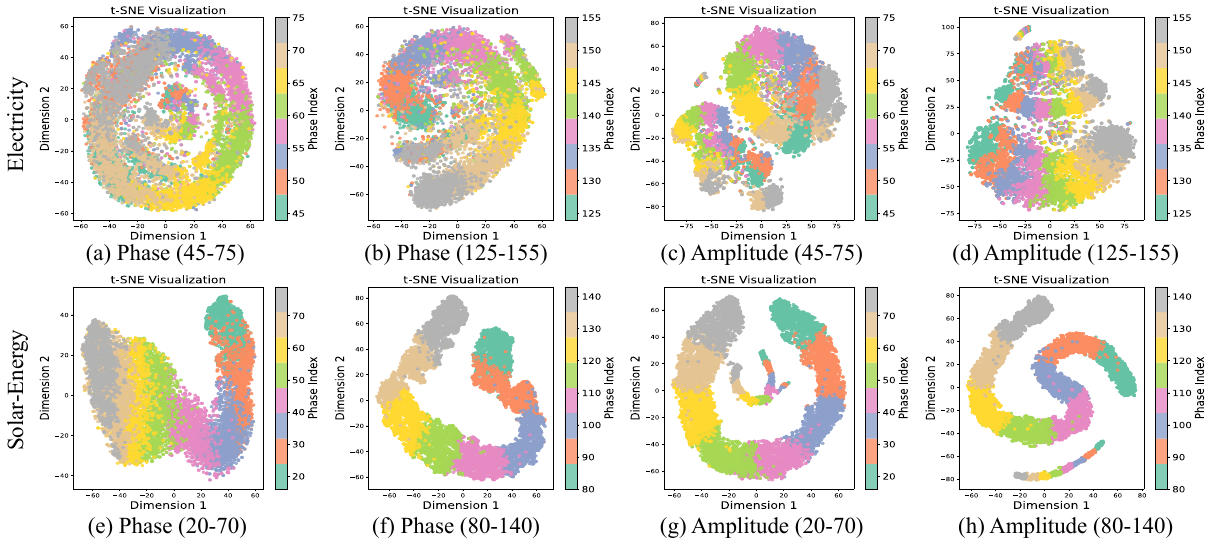}
    \caption{t-SNE visualizations of $\mathbf{Z}_{\mathrm{phase}}$ and $\mathbf{Z}_{\mathrm{amp}}$ on Electricity and Solar-Energy datasets under the setting of lookback window $T = 96$ and prediction length $\tau = 96$. 
The inherent periodicities of the Electricity and Solar-Energy datasets are 168 and 144, respectively, and the numbers in parentheses denote the corresponding ranges of phase indices.}
    \label{fig:tSNE_ECL_solar}
    \vspace{-3mm}
\end{figure}

\vspace{-2.5mm}
\subsection{Visualization Analysis}
\vspace{-1.5mm}
To better understand the roles of the proposed Phase Rotation and Amplitude Scaling modules, we conduct t-SNE visualization analysis of the learned representations $\mathbf{Z}_{\text{phase}}$ and $\mathbf{Z}_{\text{amp}}$ on the Electricity and Solar-Energy datasets, as shown in Figures~\ref{fig:tSNE_ECL_solar}. 
This analysis aims to qualitatively examine how the proposed phase--amplitude modulation mechanism organizes the representation space in a structured manner, where phase-aware structures and magnitude-related variations are modeled by complementary components

\vspace{-1.5mm}
\paragraph{Phase representations.}
As illustrated in Figures~\ref{fig:tSNE_ECL_solar}(a)--(b) and (e)--(f), the phase representations $\mathbf{Z}_{\text{phase}}$ exhibit clear phase-aware organization in the hidden space. 
Samples with nearby phase indices are arranged along continuous trajectories, forming distinct geometric patterns, such as a spiral-like structure on the Electricity dataset and an irregular U-shaped structure on the Solar-Energy dataset. 
The color progression along the trajectories indicates that the learned representations follow the intrinsic periodic ordering of the data, suggesting that the Phase Rotation module captures phase-dependent variations and arranges features according to their cycle positions.

\vspace{-2.5mm}
\paragraph{Amplitude representations.}
As shown in Figures~\ref{fig:tSNE_ECL_solar}(c)--(d) and (g)--(h), the amplitude representations $\mathbf{Z}_{\text{amp}}$ exhibit organization patterns that are clearly distinct from those of $\mathbf{Z}_{\text{phase}}$. 
On the Electricity dataset (Figures~\ref{fig:tSNE_ECL_solar}(c)--(d)), the amplitude representations appear more scattered and intermixed, reflecting the more complex and heterogeneous consumption patterns. 
In contrast, On the Solar-Energy dataset (Figures~\ref{fig:tSNE_ECL_solar}(g)--(h)), $\mathbf{Z}_{\text{amp}}$ exhibits a structured yet non-uniform geometry, forming a dominant continuous manifold alongside a thinner secondary branch. 
The main manifold captures the majority of samples and reflects the principal variation in residual magnitudes, while the secondary branch corresponds to a smaller subset with distinct amplitude characteristics. 
Taken together, these patterns suggest that the Amplitude Scaling module captures heterogeneous amplitude variations conditioned on phase.

\vspace{-2.5mm}
\subsection{Efficiency Analysis}
\vspace{-1.5mm}
To comprehensively assess both effectiveness and efficiency, we further conduct efficiency experiments on the Weather dataset (21 channels), as reported in Table~\ref{tab:efficiency_weather}. 
To ensure a fair comparison, all models are evaluated under a unified training framework with a fixed batch size of 32. 
From the results, Paloma achieves a favorable balance between predictive performance and computational cost. 
\input{tables/efficiency_weather}
Compared to lightweight linear models such as GTR, CycleNet, and DLinear, Paloma introduces only a moderate increase in parameters and MACs, while delivering notably improved forecasting accuracy. 
In contrast to Transformer-based models (e.g., iTransformer and PatchTST), Paloma demonstrates advantages in both efficiency and effectiveness, requiring substantially lower computational resources while achieving better predictive performance. 
Efficiency results on the ETTh1 dataset (7 channels) and the Traffic dataset (862 channels) are provided in Tables~\ref{tab:efficiency_ETTh1} and~\ref{tab:efficiency_traffic}, further confirming the efficiency of Paloma across datasets with varying channel sizes. 

%% file: tables/input_96_avg.tex
\begin{table*}[t]
    \centering
    \caption{Performance comparison with a fixed lookback window $T=96$. 
    The prediction lengths are set as $\tau \in \{12, 24, 48, 96\}$ for PEMS datasets for short-term forecasting, and $\tau \in \{96, 192, 336, 720\}$ for the others for long-term forecasting. 
    Results are averaged over the four prediction lengths. The best and second-best results are highlighted in \textcolor{red}{\textbf{red}} and \textcolor{blue}{\textbf{blue}}, respectively. 
    Full results are provided in Table \ref{tab:input_96_appendix}, and additional baseline results are reported in Table \ref{tab:input_96_additional}.}
    \vspace{-1.5mm}
    \scalebox{0.48}{
    \begin{tabular}{c c|c c|c c|c c| c c|c c |c c |c c |c c |c c |c c |c c  }
    \toprule[2pt]
       \multicolumn{2}{c|}{Models}  &\multicolumn{2}{c|}{\textbf{Paloma}} &\multicolumn{2}{c|}{DistDF} &\multicolumn{2}{c|}{GTR} &\multicolumn{2}{c|}{RAFT} &\multicolumn{2}{c|}{FreDF}  &\multicolumn{2}{c|}{TimeXer}  &\multicolumn{2}{c|}{CycleNet} &\multicolumn{2}{c|}{FilterNet} &\multicolumn{2}{c|}{iTransformer} &\multicolumn{2}{c|}{PatchTST}  &\multicolumn{2}{c}{DLinear}  \\

        \multicolumn{2}{c|}{}  &\multicolumn{2}{c|}{[\textbf{ours}]} &\multicolumn{2}{c|}{[ICLR'26]} &\multicolumn{2}{c|}{[ICLR'26]} &\multicolumn{2}{c|}{[ICML'25]} &\multicolumn{2}{c|}{[ICLR'25]}  &\multicolumn{2}{c|}{[NeurIPS'24]} &\multicolumn{2}{c|}{[NeurIPS'24]} &\multicolumn{2}{c|}{[NeurIPS'24]} &\multicolumn{2}{c|}{[ICLR'24]} &\multicolumn{2}{c|}{[ICLR'23]} &\multicolumn{2}{c}{[AAAI'23]} \\
       
       \cmidrule(r){1-2}\cmidrule(lr){3-4}\cmidrule(lr){5-6}\cmidrule(lr){7-8}\cmidrule(lr){9-10}\cmidrule(lr){11-12}\cmidrule(lr){13-14}\cmidrule(lr){15-16}\cmidrule(lr){17-18}\cmidrule(lr){19-20}\cmidrule(lr){21-22}\cmidrule(lr){23-24}
       
       \multicolumn{2}{c|}{Metrics} &MSE &MAE &MSE &MAE &MSE &MAE &MSE &MAE &MSE &MAE &MSE &MAE &MSE &MAE &MSE &MAE &MSE &MAE &MSE &MAE &MSE &MAE   \\
    
         \midrule[1pt]
         \multicolumn{2}{c|}{ETTm1}  &\textcolor{red}{\textbf{0.363}}   &\textcolor{red}{\textbf{0.372}}  &0.378 &0.394  &\textcolor{blue}{\textbf{0.367}} &\textcolor{blue}{\textbf{0.389}} &0.381 &0.400  &0.392 &0.399  &0.382 &0.397 &0.379 &0.396 &0.392 &0.401  &0.407  &0.410  &0.387  &0.400  &0.403  &0.407 \\  
         \midrule[1pt]

         \multicolumn{2}{c|}{ETTm2}  &\textcolor{blue}{\textbf{0.267}}   &\textcolor{red}{\textbf{0.308}}  &0.277 &0.321  &0.268  &0.315  &0.281  &0.330  &0.278 &0.319  &0.274 &0.322 &\textcolor{red}{\textbf{0.266}} &\textcolor{blue}{\textbf{0.314}}  &0.285 &0.328 &0.288 &0.332 &0.281 &0.326 &0.350 &0.401 \\ 
         \midrule[1pt]

         \multicolumn{2}{c|}{ETTh1}  &\textcolor{red}{\textbf{0.421}}   &\textcolor{red}{\textbf{0.416}}  &0.430 &\textcolor{blue}{\textbf{0.429}}  &0.439 &0.434 &\textcolor{blue}{\textbf{0.428}} &0.433  &0.437 &0.435 &0.437 &0.437 &0.457 &0.441  &0.441 &0.439  &0.454 &0.448 &0.469 &0.455 &0.456 &0.452 \\ 
         \midrule[1pt]

         \multicolumn{2}{c|}{ETTh2}  &\textcolor{red}{\textbf{0.361}}   &\textcolor{red}{\textbf{0.386}}  &\textcolor{blue}{\textbf{0.367}} &\textcolor{blue}{\textbf{0.393}}  &0.372 &0.400 &0.382 &0.410 &0.371 &0.396 &0.368 &0.396 &0.388 &0.409  &0.383 &0.407 &0.383 &0.407 &0.387 &0.407 &0.559 &0.515 \\ 
         \midrule[1pt]

         \multicolumn{2}{c|}{Electricity}  &\textcolor{red}{\textbf{0.166}}   &\textcolor{red}{\textbf{0.251}}  &0.172 &0.267 &\textcolor{red}{\textbf{0.166}} &0.260 &0.175 &0.272  &0.170 &\textcolor{blue}{\textbf{0.259}} &0.171 &0.270 &\textcolor{blue}{\textbf{0.168}} &\textcolor{blue}{\textbf{0.259}} &0.173 &0.268 &0.178 &0.270 &0.205 &0.290 &0.212 &0.300 \\ 
         \midrule[1pt]

         \multicolumn{2}{c|}{Traffic}  &0.461   &\textcolor{red}{\textbf{0.260}}  &\textcolor{blue}{\textbf{0.417}} &\textcolor{blue}{\textbf{0.279}} &0.470 &0.280 &\textcolor{red}{\textbf{0.414}} &0.284  &0.421 &\textcolor{blue}{\textbf{0.279}} &0.466 &0.287 &0.472 &0.301  &0.463 &0.310 &0.428 &0.282 &0.481 &0.300 &0.625 &0.383 \\ 
         \midrule[1pt]

         \multicolumn{2}{c|}{Weather}  &\textcolor{red}{\textbf{0.231}}   &\textcolor{red}{\textbf{0.262}}   &0.248 &0.275  &\textcolor{blue}{\textbf{0.239}} &\textcolor{blue}{\textbf{0.268}} &0.270 &0.309  &0.254 &0.274  &0.241 &0.271 &0.243 &0.271  &0.245 &0.272  &0.258 &0.278 &0.259 &0.273 &0.265 &0.317 \\ 
         \midrule[1pt]

         \multicolumn{2}{c|}{Solar-Energy}  &\textcolor{blue}{\textbf{0.197}}   &\textcolor{red}{\textbf{0.234}}   &0.207   &0.255    &\textcolor{red}{\textbf{0.194}} &\textcolor{blue}{\textbf{0.245}} &0.301 &0.303  &0.211   &0.262   &0.237 &0.302 &0.210 &0.261  &0.235 &0.266  &0.233 &0.262 &0.270 &0.307 &0.330 &0.401 \\ 
         \midrule[1pt]

        
       \multicolumn{2}{c|}{PEMS03} &\textcolor{blue}{\textbf{0.094}}   &\textcolor{blue}{\textbf{0.194}}  &0.123   &0.233    &\textcolor{red}{\textbf{0.087}} &\textcolor{red}{\textbf{0.189}} &0.144 &0.230  &0.113   &0.219   &0.112 &0.214 &0.118 &0.226 &0.145 &0.251  &0.113 &0.222 &0.180 &0.291 &0.278 &0.375 \\ 
         \midrule[1pt]

        \multicolumn{2}{c|}{PEMS04}  &\textcolor{red}{\textbf{0.085}}   &\textcolor{red}{\textbf{0.182}}   &0.092   &0.195     &\textcolor{blue}{\textbf{0.087}}  &\textcolor{blue}{\textbf{0.189}} &0.104 &0.210   &0.096   &0.202   &0.105 &0.209 &0.119 &0.232  &0.146 &0.258  &0.111 &0.221 &0.195 &0.307 &0.295 &0.388 \\ 
         \midrule[1pt]

         \multicolumn{2}{c|}{PEMS07}  &\textcolor{blue}{\textbf{0.078}}   &\textcolor{red}{\textbf{0.164}}    &0.084   &0.177     &\textcolor{red}{\textbf{0.076}} &\textcolor{blue}{\textbf{0.169}}  &0.094 &0.193    &0.089   &0.182  &0.085 &0.182 &0.113 &0.214  &0.123 &0.229  &0.101 &0.204 &0.211 &0.303 &0.329 &0.396 \\ 
         \midrule[1pt]

        \multicolumn{2}{c|}{PEMS08}  &\textcolor{red}{\textbf{0.111}}  &\textcolor{red}{\textbf{0.194}}  &0.151  &0.244  &\textcolor{blue}{\textbf{0.142}} &\textcolor{blue}{\textbf{0.222}}  &0.151 &0.234 &0.173  &0.255   &0.175 &0.250 &0.150 &0.246 &0.172 &0.260  &0.150 &0.226 &0.280 &0.321 &0.379 &0.416 \\ 
         \midrule[1pt]

        \multicolumn{2}{c|}{$1^{st}$ Count} &7  &11  &0  &0  &4  &1  &1  &0  &0  &0 &0 &0 &1  &0  &0  &0  &0  &0  &0  &0    &0  &0  \\
         \bottomrule[2pt]
    \end{tabular}
    }
    \label{tab:input_96_avg}
    \vspace{-4mm}
\end{table*}

%% file: tables/ablation_avg.tex
    

\begin{wraptable}{r}{9.8cm}
\vspace{-5mm}
\captionsetup{type=table}
\captionof{table}{Ablation study of the Phase Rotation (PR) and Amplitude Scaling (AS) modules on six datasets. 
The reported results are averaged over all prediction horizons. Detailed results are provided in Table~\ref{tab:ablation_appendix}.}
\vspace{-1.5mm}
\centering
\scalebox{0.59}{
\begin{tabular}{ccc|cc|cc|cc|cc|cc}
\toprule[2pt]
\multirow{2}{*}{Design} &\multicolumn{2}{c|}{ETTm1} &\multicolumn{2}{c|}{ETTh1}  &\multicolumn{2}{c|}{Electricity}  &\multicolumn{2}{c|}{Weather}  &\multicolumn{2}{c|}{Traffic}  &\multicolumn{2}{c}{Solar-Energy} \\
\cmidrule(lr){2-3} \cmidrule(lr){4-5} \cmidrule(lr){6-7} \cmidrule(lr){8-9} \cmidrule(lr){10-11} \cmidrule(lr){12-13} 
 & MSE & MAE & MSE & MAE & MSE & MAE & MSE & MAE & MSE & MAE & MSE & MAE  \\
\midrule
    \textbf{Paloma}   &\textbf{0.363}  &\textbf{0.372}  &\textbf{0.421}  &\textbf{0.416}  &\textbf{0.166}  &\textbf{0.251}  &\textbf{0.231}  &\textbf{0.262}   &\textbf{0.461}  &\textbf{0.260}   &\textbf{0.197}  &\textbf{0.234}   \\
    w/o PR   &0.368   &0.374  &0.427  &0.420   &0.170  &0.258   &0.238  &0.270   &0.471  &0.267   &0.206   &0.247   \\
    w/o AS   &0.369  &0.374   &0.430  &0.424    &0.170  &0.259   &0.238  &0.270   &0.468  &0.269   &0.201  &0.243    \\
    w/o PR,AS  &0.376  &0.380   &0.428  &0.425   &0.174  &0.262   &0.243  &0.280   &0.474  &0.283   &0.209  &0.248   \\
    
\bottomrule[2pt]
\end{tabular}
}\label{tab:ablation_avg}
\vspace{-4mm}
\end{wraptable}


%% file: tables/general_technique.tex
\begin{table*}[t]
    \centering
    \caption{Generalization performance of Paloma Tech. on different models. The lookback window is fixed at 96 for all the experiments.
    }
    \vspace{-1.5mm}
    \scalebox{0.405}{
    \begin{tabular}{l | c|c c|c c|c c||c c |c c |c c ||c c |c c |c c  ||c c |c c |c c }
    \toprule[2pt]
        \multicolumn{2}{c|}{LookBack} &\multicolumn{6}{c||}{iTransformer} &\multicolumn{6}{c||}{PatchTST} &\multicolumn{6}{c||}{DLinear} &\multicolumn{6}{c}{FilterNet}\\
         \midrule[1pt]
    
       \multicolumn{2}{c|}{Setup}  &\multicolumn{2}{c|}{Original} &\multicolumn{2}{c|}{+ Paloma Tech.} &\multicolumn{2}{c||}{Imporve} &\multicolumn{2}{c|}{Original} &\multicolumn{2}{c|}{+ Paloma Tech.} &\multicolumn{2}{c||}{Imporve}  &\multicolumn{2}{c|}{Original} &\multicolumn{2}{c|}{+ Paloma Tech.} &\multicolumn{2}{c||}{Imporve}   &\multicolumn{2}{c|}{Original} &\multicolumn{2}{c|}{+ Paloma Tech.} &\multicolumn{2}{c}{Imporve}\\
       
       \cmidrule(r){1-2}\cmidrule(lr){3-4}\cmidrule(lr){5-6}\cmidrule(lr){7-8}\cmidrule(lr){9-10}\cmidrule(lr){11-12}\cmidrule(lr){13-14}\cmidrule(lr){15-16}\cmidrule(lr){17-18}\cmidrule(lr){19-20}\cmidrule(lr){21-22}\cmidrule(lr){23-24}\cmidrule(lr){25-26}
       
       \multicolumn{2}{c|}{Metrics} &MSE &MAE &MSE &MAE &MSE &MAE &MSE &MAE &MSE &MAE &MSE &MAE   &MSE &MAE &MSE &MAE &MSE &MAE  &MSE &MAE &MSE &MAE &MSE &MAE \\
    
         \midrule[1pt]
         \multirow{5}*{\rotatebox{90}{Electricity}}
         & 96  &0.150  &0.241    &0.137  &0.233    &\textcolor{red}{8.67\%}  &\textcolor{red}{3.32\%}    &0.164  &0.254    &0.137  &0.234    &\textcolor{red}{16.46\%}  &\textcolor{red}{7.87\%}   &0.195  &0.277    &0.139  &0.239    &\textcolor{red}{28.72\%}  &\textcolor{red}{13.72\%}  &0.147  &0.245  &0.137  &0.239  &\textcolor{red}{6.80\%}  &\textcolor{red}{2.45\%}      \\
         & 192  &0.164  &0.255    &0.157  &0.250    &\textcolor{red}{4.27\%}  &\textcolor{red}{1.96\%}    &0.174  &0.265    &0.155  &0.250    &\textcolor{red}{10.92\%}  &\textcolor{red}{5.66\%}   &0.194  &0.279    &0.156  &0.256    &\textcolor{red}{19.59\%}  &\textcolor{red}{8.24\%}  &0.160  & 0.250  &0.151  &0.245  &\textcolor{red}{5.63\%}  &\textcolor{red}{2.00\%}   \\
         & 336  &0.178  &0.272    &0.172  &0.267    &\textcolor{red}{3.37\%}  &\textcolor{red}{1.84\%}    &0.193  &0.285    &0.176  &0.272    &\textcolor{red}{8.81\%}  &\textcolor{red}{4.56\%}   &0.207  &0.295    &0.172  &0.275    &\textcolor{red}{16.91\%}  &\textcolor{red}{6.78\%}  &0.173  &0.267  &0.164  &0.261  &\textcolor{red}{5.20\%}  &\textcolor{red}{2.25\%}  \\
         & 720  &0.210  &0.300    &0.192  &0.290    &\textcolor{red}{8.57\%}  &\textcolor{red}{3.33\%}    &0.232  &0.320    &0.220  &0.309    &\textcolor{red}{5.17\%}  &\textcolor{red}{3.44\%}  &0.244  &0.330    &0.207  &0.309    &\textcolor{red}{15.16\%}  &\textcolor{red}{6.36\%}  &0.210  &0.309  &0.201  &0.302  &\textcolor{red}{4.29\%}  &\textcolor{red}{2.27\%}   \\
         
         \cmidrule(r){2-26}
            &Avg  &0.176  &0.267    &0.165   &0.260    &\textcolor{red}{6.25\%}   &\textcolor{red}{2.62\%}    &0.191  &0.281    &0.172  &0.266    &\textcolor{red}{9.85\%}  &\textcolor{red}{5.23\%}  &0.210  &0.295    &0.169  &0.270    &\textcolor{red}{19.76\%}  &\textcolor{red}{8.63\%}  &0.173  & 0.268  &0.163  &0.262  &\textcolor{red}{5.78\%}  &\textcolor{red}{2.24\%}  \\
            
         \midrule[1pt]
         \multirow{5}*{\rotatebox{90}{PEMS03}}
         & 12  &0.078  &0.185    &0.062  &0.164    &\textcolor{red}{20.51\%}  &\textcolor{red}{11.35\%}    &0.072  &0.179    &0.065  &0.172    &\textcolor{red}{9.72\%}  &\textcolor{red}{3.91\%}   &0.105  &0.221    &0.061  &0.165    &\textcolor{red}{41.90\%}  &\textcolor{red}{25.34\%}   &0.071  &0.177  &0.062  &0.165  &\textcolor{red}{12.68\%}  &\textcolor{red}{6.78\%}  \\
         & 24  &0.089  &0.197    &0.076  &0.180    &\textcolor{red}{14.61\%}  &\textcolor{red}{8.63\%}    &0.102  &0.213    &0.079  &0.189    &\textcolor{red}{22.55\%}  &\textcolor{red}{11.27\%}   &0.183  &0.299    &0.079  &0.186    &\textcolor{red}{56.83\%}  &\textcolor{red}{37.79\%}   &0.102  &0.213  &0.086  &0.198  &\textcolor{red}{15.69\%}  &\textcolor{red}{7.04\%}  \\
         & 48  &0.176  &0.288    &0.110  &0.213    &\textcolor{red}{37.50\%}   &\textcolor{red}{26.04\%}    &0.155  &0.263    &0.107  &0.215    &\textcolor{red}{30.97\%}  &\textcolor{red}{18.25\%}   &0.315  &0.407    &0.110  &0.214    &\textcolor{red}{65.08\%}  &\textcolor{red}{47.42\%}   &0.162  & 0.272  &0.134  &0.255  &\textcolor{red}{17.28\%}  &\textcolor{red}{6.25\%}  \\
         & 96  &0.228  &0.335    &0.178  &0.280    &\textcolor{red}{21.93\%}  &\textcolor{red}{16.42\%}    &0.204  &0.305    &0.145  &0.246    &\textcolor{red}{28.92\%}  &\textcolor{red}{19.34\%}  &0.455  &0.508    &0.146  &0.245    &\textcolor{red}{67.91\%}  &\textcolor{red}{51.77\%}   &0.244  &0.340  &0.208  &0.318  &\textcolor{red}{14.75\%}  &\textcolor{red}{6.47\%}  \\
         
         \cmidrule(r){2-26}
            &Avg  &0.143  &0.251    &0.107  &0.209    &\textcolor{red}{25.42\%}  &\textcolor{red}{16.72\%}    &0.133  &0.240    &0.099  &0.206    &\textcolor{red}{25.73\%}  &\textcolor{red}{14.38\%}   &0.265  &0.359    &0.099  &0.203    &\textcolor{red}{62.57\%}  &\textcolor{red}{43.56\%}  &0.145  &0.251  &0.123  &0.234  &\textcolor{red}{15.17\%}  &\textcolor{red}{6.77\%}  \\
            \midrule[1pt]
         \multirow{5}*{\rotatebox{90}{PEMS04}}
         & 12  &0.081  &0.189    &0.067  &0.168    &\textcolor{red}{17.28\%}  &\textcolor{red}{11.11\%}    &0.087  &0.195    &0.080  &0.190    &\textcolor{red}{8.05\%}  &\textcolor{red}{2.56\%}   &0.114  &0.228    &0.076  &0.184    &\textcolor{red}{33.33\%}  &\textcolor{red}{19.30\%}   &0.082  & 0.190  &0.066  &0.178  &\textcolor{red}{19.51\%}  &\textcolor{red}{6.32\%}  \\
         & 24  &0.097  &0.207    &0.078  &0.183    &\textcolor{red}{19.59\%}  &\textcolor{red}{11.59\%}    &0.119  &0.231    &0.092  &0.205    &\textcolor{red}{22.69\%}  &\textcolor{red}{11.26\%}   &0.187  &0.297   &0.091  &0.203     &\textcolor{red}{51.34\%}  &\textcolor{red}{31.65\%}   &0.110  & 0.224  &0.083  &0.195  &\textcolor{red}{24.55\%}  &\textcolor{red}{12.95\%}  \\
         & 48  &0.128  &0.241    &0.100  &0.208    &\textcolor{red}{21.88\%}  &\textcolor{red}{13.69\%}    &0.172  &0.279    &0.113  &0.226    &\textcolor{red}{34.30\%}  &\textcolor{red}{19.00\%}  &0.319  &0.402    &0.119  &0.235    &\textcolor{red}{62.70\%}  &\textcolor{red}{41.54\%}   &0.160  & 0.276  &0.116  &0.238  &\textcolor{red}{27.50\%}  &\textcolor{red}{13.77\%}  \\
         & 96  &0.176  &0.284    &0.124  &0.236    &\textcolor{red}{29.55\%}  &\textcolor{red}{16.90\%}    &0.220  &0.323    &0.140  &0.253    &\textcolor{red}{36.36\%}  &\textcolor{red}{21.67\%}   &0.424  &0.481    &0.155  &0.269    &\textcolor{red}{63.44\%}  &\textcolor{red}{44.07\%}   &0.234  & 0.343  &0.186  &0.304  &\textcolor{red}{20.51\%}  &\textcolor{red}{11.37\%}  \\
         
         \cmidrule(r){2-26}
            &Avg  &0.121  &0.230    &0.092  &0.199    &\textcolor{red}{23.40\%}  &\textcolor{red}{13.68\%}    &0.150  &0.257    &0.106  &0.219    &\textcolor{red}{28.90\%}  &\textcolor{red}{14.98\%}  &0.261  &0.352    &0.110  &0.223    &\textcolor{red}{57.74\%}  &\textcolor{red}{36.70\%}  &0.146  &0.258  &0.113  &0.229  &\textcolor{red}{22.60\%}  &\textcolor{red}{11.24\%}  \\

         \bottomrule[2pt]
    \end{tabular}
    }\label{tab:general_tech}
    \vspace{-5mm}
\end{table*}

%% file: tables/efficiency_weather.tex
\begin{wraptable}{r}{9cm}
\vspace{-5mm}
\captionsetup{type=table}
\captionof{table}{Efficiency comparison between Paloma and other models on the Weather dataset with lookback window T = 96 and forecast horizon $\tau$ = 720. Training Time denotes the average time required per epoch for the model.}
\vspace{-1mm}
\centering
\scalebox{0.67}{
\begin{tabular}{lcccccc}
\toprule[2pt]
Model &Parameters &MACs &Training Time &Memory Usage &MSE &MAE \\
\midrule
GTR  &150.71K   &3.21M   &9.4s   &1754MB   &0.341  &0.342  \\
CycleNet  &108.32K   &2.22M   &8.2s   &1414MB    &0.344   &0.344  \\
DLinear  &139.68K   &2.94M   &7.4s    &1420MB   &0.345   &0.381  \\
FilterNet  &369.40K   &7.19M   &12.1s    &1434MB   &0.342   &0.340 \\
iTransformer  &1.29M   &27.11M   &16.6s    &1858MB   &0.358   &0.349\\
PatchTST  &2.30M   &326.33M   &21.2s    &2130MB   &0.354   &0.340  \\
TimeXer  &1.98M   &191.15M   &32.6s    &2102MB   &0.340   &0.341  \\
\midrule
Paloma(ours)  &260.38K   &5.08M   &12.9s   &1424MB    &0.324   &0.336  \\
\bottomrule[2pt]
\end{tabular}}\label{tab:efficiency_weather}
\vspace{-3.5mm}
\end{wraptable}

%% file: 6_conclusion_v2.tex
\section{Conclusion and Limitation}
\vspace{-2mm}
In this work, we show that the residual term in cycle--residual decomposition retains intrinsic phase-dependent structures rather than behaving as unstructured noise. 
Motivated by this observation, we propose \textit{Paloma}, a simple yet effective framework inspired by phase and amplitude modulation in classical signal processing. 
By introducing a Phase Rotation module for modeling directional variations and an Amplitude Scaling module for capturing magnitude variations, Paloma explicitly models structured temporal dynamics conditioned on phase.
Beyond the proposed architecture, we further abstract the two modules into a general-purpose plugin, termed \textit{Paloma technique}, which can be readily integrated into existing models to enhance phase-aware temporal modeling. 
Extensive experiments demonstrate the effectiveness, robustness, and generalizability of the proposed approach.

Nevertheless, the proposed method has several limitations. 
First, its performance depends on the availability of a reasonably accurate prior for the cycle length $W$. 
As shown in the sensitivity analysis in Appendix~\ref{appendix_para_cycle_length}, Paloma is sensitive to periodicity misalignment. 
Interestingly, this sensitivity may also serve as a diagnostic signal for estimating the underlying periodicity of a dataset, since forecasting performance tends to deteriorate when the assumed cycle length becomes closer to, yet remains misaligned with, the intrinsic period (see Appendix~\ref{appendix_para_cycle_length} for detailed analysis).
Second, the current formulation is primarily designed for periodic or quasi-periodic signals, and extending the framework to more general non-periodic scenarios remains an important direction for future work.


\section*{Acknowledgments}
This work was supported in part by the National Natural Science Foundation of China (62272048); and in part by the National Key Research and Development Program of China (2019YFB1406302).

%% file: 7_appendix.tex
\section{More Explanations about Motivation}\label{appendix_motivation}

\begin{figure}[h]
    \centering
    \includegraphics[width=1\linewidth]{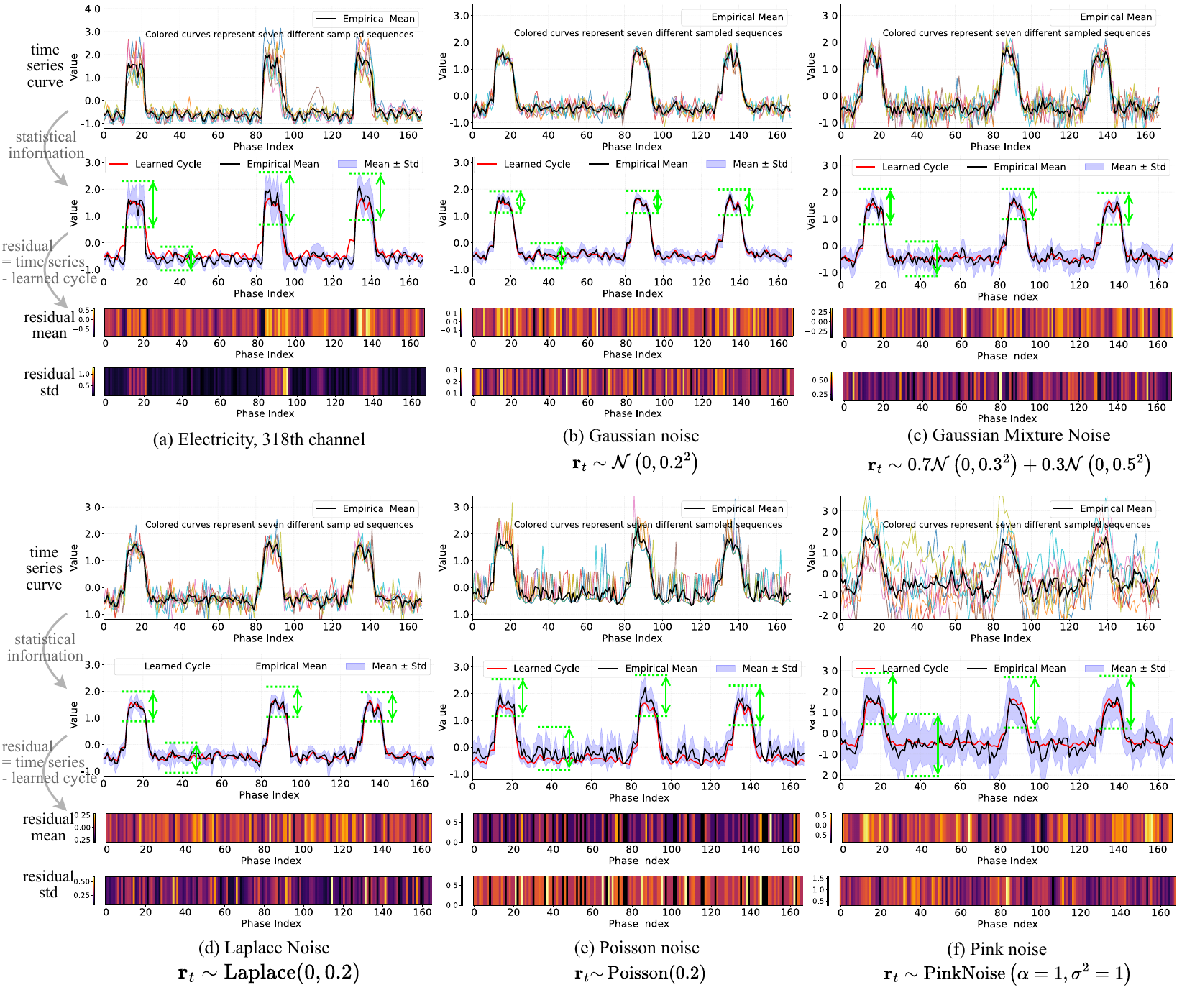}
    \vspace{-6mm}
    \caption{Comparison between real-world residuals and synthetic noise-based counterparts under different noise types. 
    Subfigure (a) shows the real-world case from the Electricity dataset (318th channel).
    Subfigures (b)–(f) present synthetic sequences constructed by combining the learned cycle in (a) with different noise types, including Gaussian, Gaussian mixture, Laplace, Poisson, and pink noise. 
    The meaning of each row is consistent with that in Figure~\ref{fig:motivation}.
    Unlike real-world data, where residual statistics exhibit clear phase-dependent patterns, all noise-based counterparts fail to produce consistent phase-aligned structure, regardless of distributional complexity or variability characteristics.}
    \label{fig:motivatiom_appendix}
    \vspace{-2mm}
\end{figure}

To further validate the observations in Figure~\ref{fig:motivation}(c), we construct additional synthetic examples by combining the learned cycle from the Electricity dataset (318th channel) with various types of noise. The corresponding visualizations are shown in Figure~\ref{fig:motivatiom_appendix}.
Specifically, we consider a diverse set of noise distributions, including Gaussian noise, Gaussian mixture noise, Laplace noise, Poisson noise, and Pink noise. For each case, the synthetic time series is generated as
$\mathbf{x}_t = \mathbf{Q}_{i_t} + \mathbf{r}_t$,
where $\mathbf{Q}_{i_t}$ denotes the learned cycle and $\mathbf{r}_t$ is sampled from the corresponding noise distribution.
The results reveal several important observations:

(1) Independently sampled noise does not induce phase-dependent variance patterns.
For noise types such as Gaussian (Figure~\ref{fig:motivatiom_appendix}(b)) and Laplace (Figure~\ref{fig:motivatiom_appendix}(d)) distributions, the residual standard deviation remains approximately homogeneous across phase indices. The residual mean is close to zero mainly by construction, as the sampled noise is zero-mean. Therefore, the key observation is not the mean behavior, but the absence of systematic phase-dependent variation in the residual standard deviation. This suggests that independently sampled noise alone cannot reproduce the phase-dependent residual variability observed in real-world data.

(2) More flexible noise distributions still fail to produce phase-aligned structure.
For more flexible noise models, such as Gaussian mixture noise (Figure~\ref{fig:motivatiom_appendix}(c)) and Poisson noise (Figure~\ref{fig:motivatiom_appendix}(e)), the residual variance may exhibit increased local variability due to the underlying distributional properties. However, these variations remain unstructured with respect to the phase index, indicating that distributional complexity alone is insufficient to induce phase-dependent patterns.
A similar observation holds for colored noise (e.g., pink noise in Figure~\ref{fig:motivatiom_appendix}(f)), where the residual exhibits smoother and more continuous fluctuations compared to independently sampled noise. Nevertheless, these variations still do not exhibit consistent dependence on the phase index, and thus fail to form structured, phase-dependent patterns.
Overall, these results demonstrate that diverse noise characteristics, including heterogeneous distributions and temporally smoother fluctuations, are insufficient to reproduce the stable, phase-dependent variance structures observed in real-world data.

Taken together, these results demonstrate that the structured residual deviations observed in real-world time series (Figure~\ref{fig:motivatiom_appendix}(a)) cannot be simply explained as noise. Instead, they reflect intrinsic, phase-dependent statistical properties of the underlying data distribution. This provides strong empirical motivation for explicitly modeling phase-dependent residual dynamics, as proposed in our method.

\begin{figure}[h]
    \centering
    \includegraphics[width=1\linewidth]{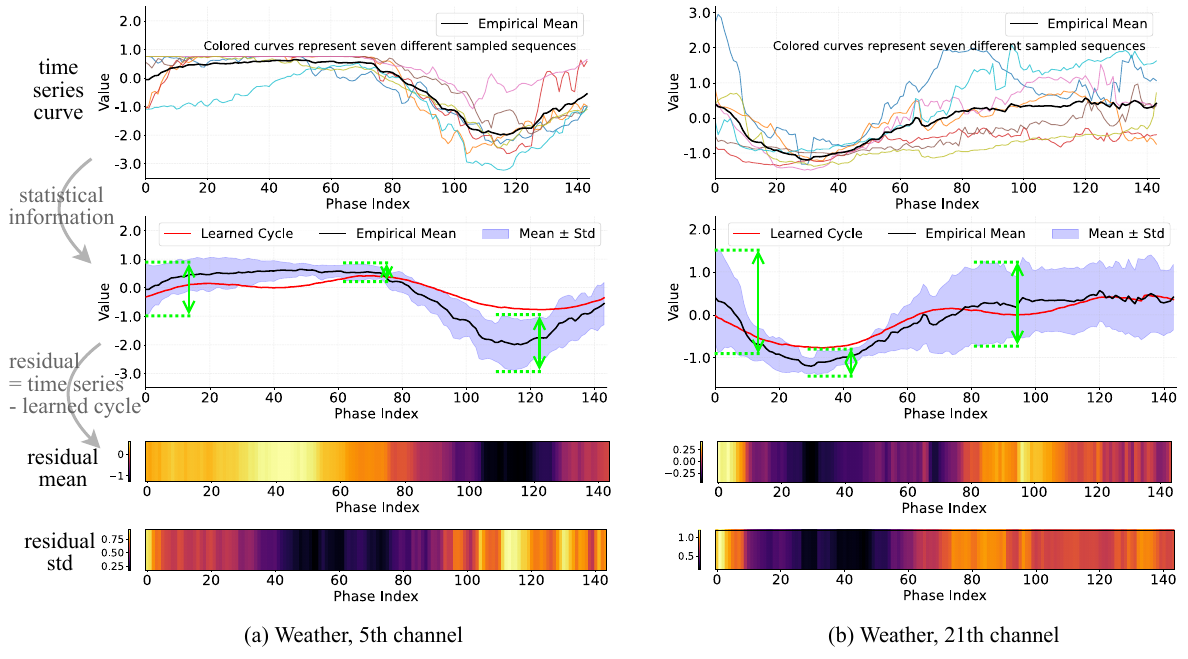}
    \vspace{-2.5mm}
    \caption{Visualization of structured residual deviations on the Weather dataset. The row definitions are consistent with those in Figure~\ref{fig:motivation}.}
    \label{fig:motivatiom_weather}
\end{figure}

Figure~\ref{fig:motivatiom_weather} provides additional visualizations of residual deviations on the Weather dataset. 
Consistent with the observations in Figure~\ref{fig:motivation}, the residual statistics exhibit clear phase-dependent patterns across different channels. 
In particular, both the residual mean and residual standard deviation vary systematically along the phase index, indicating that the residual cannot be regarded as unstructured noise after cycle removal. 
These results further support our motivation that residual dynamics retain structured phase-dependent variations beyond the learned periodic component.

\section{Theoretical Supplement}\label{appendix_Theoretical_Supplement}
\subsection{Proof of Proposition~\ref{proposition_1}}

We begin by restating the setting used in Proposition~\ref{proposition_1}.
Let $\mathbf{X} = \{\mathbf{x}_l\}_{l=0}^{T-1} \in \mathbb{R}^{T \times N}$ be a sequence with phase index sequence $\mathbf{i} = \{i_l\}_{l=0}^{T-1}$, where $i_l \in \{0,1,\dots,W-1\}$.
We consider a simplified prediction setting in which the prediction depends solely on the learnable prototype matrix $\mathbf{Q} \in \mathbb{R}^{W \times N}$, i.e., $\hat{\mathbf{x}}_l = \mathbf{Q}_{i_l}$, where $\hat{\mathbf{x}}_l \in \mathbb{R}^N$ and $\mathbf{x}_l \in \mathbb{R}^N$ denote the predicted and corresponding ground-truth values at time step $l$, respectively.

\begin{appendixPro}[\textbf{Optimal Phase Prototype under MSE}]
\label{proposition_1}
Under the above setting, consider the widely used mean squared error (MSE) objective
$
\mathcal{L}(\mathbf{Q}) = \mathbb{E}\left[\left\|\mathbf{x}_l - \mathbf{Q}_{i_l}\right\|_2^2\right].
$
Then the optimal solution satisfies
\begin{equation}
\mathbf{Q}_k^* = \mathbb{E}\left[\mathbf{x}_l \mid i_l = k\right], 
\quad \forall k \in \{0,1,\dots,W-1\}.
\end{equation}
\end{appendixPro}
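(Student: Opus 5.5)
The plan is to decompose the objective by conditioning on the phase index. This decouples the $W$ rows of $\mathbf{Q}$, so each row can be minimized separately by the standard bias--variance identity.

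\textbf{Step 1: condition on the phase.} I treat the pair $(\mathbf{x}_l, i_l)$ as random, with $\mathbf{x}_l$ having finite second moments. Write $\pi_k = \mathbb{P}(i_l = k)$. By the tower property,
\begin{equation*}
\mathcal{L}(\mathbf{Q}) = \sum_{k=0}^{W-1} \pi_k \, \mathbb{E}\left[\left\|\mathbf{x}_l - \mathbf{Q}_k\right\|_2^2 \mid i_l = k\right].
\end{equation*}
The $k$-th summand depends only on the row $\mathbf{Q}_k$. Minimizing $\mathcal{L}$ over $\mathbf{Q}$ is therefore equivalent to minimizing each conditional term over $\mathbf{Q}_k \in \mathbb{R}^N$ separately.

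\textbf{Step 2: solve each row.} Fix $k$ with $\pi_k > 0$ and set $\boldsymbol{\mu}_k = \mathbb{E}[\mathbf{x}_l \mid i_l = k]$. Add and subtract $\boldsymbol{\mu}_k$ and expand the square:
\begin{equation*}
\mathbb{E}\left[\|\mathbf{x}_l - \mathbf{Q}_k\|_2^2 \mid i_l = k\right] = \mathbb{E}\left[\|\mathbf{x}_l - \boldsymbol{\mu}_k\|_2^2 \mid i_l = k\right] + \|\boldsymbol{\mu}_k - \mathbf{Q}_k\|_2^2 .
\end{equation*}
The cross term is $2\,\mathbb{E}[\mathbf{x}_l - \boldsymbol{\mu}_k \mid i_l = k]^\top(\boldsymbol{\mu}_k - \mathbf{Q}_k)$, and it vanishes because the conditional expectation of $\mathbf{x}_l - \boldsymbol{\mu}_k$ is zero. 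The first term does not depend on $\mathbf{Q}_k$, and the second is nonnegative and equals zero exactly when $\mathbf{Q}_k = \boldsymbol{\mu}_k$. Hence $\boldsymbol{\mu}_k$ is the unique minimizer of the $k$-th term. An equivalent route is to set the gradient $2(\mathbf{Q}_k - \boldsymbol{\mu}_k)$ to zero and note that the term is strictly convex in $\mathbf{Q}_k$.

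\textbf{Step 3: assemble the rows.} Since the summands are decoupled, $\mathbf{Q}^*$ with rows $\mathbf{Q}_k^* = \boldsymbol{\mu}_k$ minimizes $\mathcal{L}$.

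The only delicate point is rigor rather than calculation. The statement asserts the formula for all $k$, but for phases with $\pi_k = 0$ both the conditional expectation and the corresponding row are undetermined. I would note this explicitly: such rows do not affect $\mathcal{L}$, and the claim holds for all $k$ with $\pi_k > 0$, which is the qualification used in Proposition~\ref{proposition_2}.

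I would also interpret the expectation over $l$. If $l$ is uniform over the window, or over the training set, then the "random" pair is just the empirical distribution. In that case the argument applies verbatim and $\mathbf{Q}_k^*$ is the empirical phase-wise mean.
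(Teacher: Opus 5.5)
Your proposal is correct and matches the paper's proof essentially step for step: it decomposes the objective via the law of total expectation, adds and subtracts $\boldsymbol{\mu}_k$ so the cross term vanishes, and concludes $\mathbf{Q}_k^* = \boldsymbol{\mu}_k$. The paper's proof likewise assumes finite second moments and ends by restricting the conclusion to phases with $\mathbb{P}(i_l = k) > 0$.
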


\begin{proof}
Assume that $\mathbb{E}\|\mathbf{x}_l\|_2^2 < \infty$.
By the law of total expectation, the objective can be decomposed according to the phase index:
\begin{equation}
\mathcal{L}(\mathbf{Q})
= \mathbb{E}\left[\left\|\mathbf{x}_l - \mathbf{Q}_{i_l}\right\|_2^2\right]
= \sum_{k=0}^{W-1} \mathbb{P}(i_l = k)\,
\mathbb{E}\left[\left\|\mathbf{x}_l - \mathbf{Q}_{i_l}\right\|_2^2 \mid i_l = k\right].
\end{equation}
Conditioned on the event $i_l = k$, we have $\mathbf{Q}_{i_l} = \mathbf{Q}_k$, and hence
\begin{equation}
\mathcal{L}(\mathbf{Q})
= \sum_{k=0}^{W-1} \mathbb{P}(i_l = k)\,
\mathbb{E}\left[\left\|\mathbf{x}_l - \mathbf{Q}_k\right\|_2^2 \mid i_l = k\right].
\end{equation}

Therefore, for any $k$ such that $\mathbb{P}(i_l = k) > 0$, minimizing $\mathcal{L}(\mathbf{Q})$
with respect to $\mathbf{Q}_k$ is equivalent to minimizing the phase-wise term
\begin{equation}
\mathcal{L}_k(\mathbf{Q}_k)
:= \mathbb{E}\left[\left\|\mathbf{x}_l - \mathbf{Q}_k\right\|_2^2 \mid i_l = k\right].
\end{equation}

For a fixed $k \in \{0,1,\dots,W-1\}$, let
\begin{equation}
\boldsymbol{\mu}_k := \mathbb{E}\left[\mathbf{x}_l \mid i_l = k\right].
\end{equation}
Then
\begin{equation}
\mathbf{x}_l - \mathbf{Q}_k
= (\mathbf{x}_l - \boldsymbol{\mu}_k) + (\boldsymbol{\mu}_k - \mathbf{Q}_k).
\end{equation}
Substituting this into $\mathcal{L}_k(\mathbf{Q}_k)$ yields
\begin{align}
\mathcal{L}_k(\mathbf{Q}_k)
&= \mathbb{E}\left[\left\|(\mathbf{x}_l - \boldsymbol{\mu}_k) + (\boldsymbol{\mu}_k - \mathbf{Q}_k)\right\|_2^2 \mid i_l = k\right] \\
&= \mathbb{E}\left[\left\|\mathbf{x}_l - \boldsymbol{\mu}_k\right\|_2^2 \mid i_l = k\right]
+ 2\,\mathbb{E}\left[(\mathbf{x}_l - \boldsymbol{\mu}_k)^\top(\boldsymbol{\mu}_k - \mathbf{Q}_k)\mid i_l = k\right]
+ \left\|\boldsymbol{\mu}_k - \mathbf{Q}_k\right\|_2^2.
\end{align}

Since $\boldsymbol{\mu}_k - \mathbf{Q}_k$ is deterministic under the conditional distribution given $i_l = k$, the cross term becomes
\begin{equation}
\mathbb{E}\left[(\mathbf{x}_l - \boldsymbol{\mu}_k)^\top(\boldsymbol{\mu}_k - \mathbf{Q}_k)\mid i_l = k\right]
=
\mathbb{E}\left[\mathbf{x}_l - \boldsymbol{\mu}_k \mid i_l = k\right]^\top
(\boldsymbol{\mu}_k - \mathbf{Q}_k).
\end{equation}
By the definition of $\boldsymbol{\mu}_k$,
\begin{equation}
\mathbb{E}\left[\mathbf{x}_l - \boldsymbol{\mu}_k \mid i_l = k\right]
=
\mathbb{E}\left[\mathbf{x}_l \mid i_l = k\right] - \boldsymbol{\mu}_k
= \mathbf{0}.
\end{equation}
Hence the cross term vanishes, and we obtain
\begin{equation}
\mathcal{L}_k(\mathbf{Q}_k)
=
\mathbb{E}\left[\left\|\mathbf{x}_l - \boldsymbol{\mu}_k\right\|_2^2 \mid i_l = k\right]
+
\left\|\boldsymbol{\mu}_k - \mathbf{Q}_k\right\|_2^2.
\end{equation}

The first term is independent of $\mathbf{Q}_k$, while the second term is nonnegative and is minimized if and only if
\begin{equation}
\mathbf{Q}_k = \boldsymbol{\mu}_k
= \mathbb{E}\left[\mathbf{x}_l \mid i_l = k\right].
\end{equation}
Therefore, for every $k \in \{0,1,\dots,W-1\}$ such that $\mathbb{P}(i_l = k) > 0$, the optimal solution satisfies
\begin{equation}
\mathbf{Q}_k^* = \mathbb{E}\left[\mathbf{x}_l \mid i_l = k\right].
\end{equation}
This completes the proof.
\end{proof}

\subsection{Proof of Proposition~\ref{proposition_2}}
\begin{appendixPro}[\textbf{Exact Preservation of Phase-Conditional Second-Order Structure}]
\label{proposition_2}
Let $\mathbf{Q}^*$ be the optimal solution given in Proposition~\ref{proposition_1}, namely,
$\mathbf{Q}_k^* = \mathbb{E}[\mathbf{x}_l \mid i_l = k]$ for all $k \in \{0,1,\dots,W-1\}$ such that $\mathbb{P}(i_l = k) > 0$.
Define the residual as $\mathbf{r}_l = \mathbf{x}_l - \mathbf{Q}^*_{i_l}$.
Assume that the conditional second moments of $\mathbf{x}_l$ given $i_l$ exist.
Then, for any $k \in \{0,1,\dots,W-1\}$ such that $\mathbb{P}(i_l = k) > 0$, the residual satisfies
\begin{equation}
\mathbb{E}[\mathbf{r}_l \mid i_l = k] = \mathbf{0},
\end{equation}
and
\begin{equation}
\mathrm{Cov}(\mathbf{r}_l \mid i_l = k)
= \mathrm{Cov}(\mathbf{x}_l \mid i_l = k).
\end{equation}
\end{appendixPro}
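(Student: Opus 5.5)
The argument is a direct conditioning computation, using the fact that on the event $\{i_l = k\}$ the prototype is a fixed vector. Fix any $k$ with $\mathbb{P}(i_l = k) > 0$, so that conditional expectations given $i_l = k$ are well defined. Write $\boldsymbol{\mu}_k := \mathbb{E}[\mathbf{x}_l \mid i_l = k] = \mathbf{Q}_k^*$, which is finite because the conditional second moments (hence first moments) exist. On the event $\{i_l = k\}$ we have $\mathbf{Q}^*_{i_l} = \mathbf{Q}^*_k = \boldsymbol{\mu}_k$. Hence, conditionally on $i_l = k$, the residual equals $\mathbf{r}_l = \mathbf{x}_l - \boldsymbol{\mu}_k$, i.e., $\mathbf{x}_l$ shifted by a deterministic vector.

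\emph{First moment.} By linearity of conditional expectation, and since a constant has conditional expectation equal to itself,
\begin{equation*}
\mathbb{E}[\mathbf{r}_l \mid i_l = k] = \mathbb{E}[\mathbf{x}_l \mid i_l = k] - \boldsymbol{\mu}_k = \mathbf{0}.
\end{equation*}
This is the same identity that made the cross term vanish in the proof of Proposition~\ref{proposition_1}.

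\emph{Second moment.} Use the definition $\mathrm{Cov}(\mathbf{r}_l \mid i_l = k) = \mathbb{E}\big[(\mathbf{r}_l - \mathbb{E}[\mathbf{r}_l \mid i_l = k])(\mathbf{r}_l - \mathbb{E}[\mathbf{r}_l \mid i_l = k])^\top \mid i_l = k\big]$. By the first-moment step, the centering term is $\mathbf{0}$, so this reduces to $\mathbb{E}[(\mathbf{x}_l - \boldsymbol{\mu}_k)(\mathbf{x}_l - \boldsymbol{\mu}_k)^\top \mid i_l = k]$. That expression is exactly $\mathrm{Cov}(\mathbf{x}_l \mid i_l = k)$. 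Equivalently, conditional covariance is invariant under translation by a vector that is constant given the conditioning event.

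\emph{Integrability.} The only point needing care is that the covariance of $\mathbf{x}_l$ is finite. This follows from the assumed existence of conditional second moments: $\mathbf{x}_l - \boldsymbol{\mu}_k$ is square-integrable under the conditional law, so every entry of the matrix above is finite. With that checked, both claimed identities hold for every phase $k$ with positive probability.
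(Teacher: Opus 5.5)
Your proposal is correct and follows the paper's proof essentially step by step. Both arguments fix $k$ with $\mathbb{P}(i_l=k)>0$, note that $\mathbf{r}_l = \mathbf{x}_l - \boldsymbol{\mu}_k$ on that event, get the zero conditional mean by linearity, and then identify the centered conditional second moment of $\mathbf{r}_l$ with $\mathrm{Cov}(\mathbf{x}_l \mid i_l = k)$ by definition; your integrability remark is a slightly more careful addition that the paper leaves implicit in its moment assumption.
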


\begin{proof}
Fix any $k \in \{0,1,\dots,W-1\}$ such that $\mathbb{P}(i_l = k) > 0$.
Define
\begin{equation}
\boldsymbol{\mu}_k := \mathbb{E}[\mathbf{x}_l \mid i_l = k].
\end{equation}
By Proposition~\ref{proposition_1}, we have
\begin{equation}
\mathbf{Q}_k^* = \boldsymbol{\mu}_k.
\end{equation}
Conditioned on $i_l = k$, the residual becomes
\begin{equation}
\mathbf{r}_l
= \mathbf{x}_l - \mathbf{Q}_{i_l}^*
= \mathbf{x}_l - \mathbf{Q}_k^*
= \mathbf{x}_l - \boldsymbol{\mu}_k.
\end{equation}

We first compute its conditional mean:
\begin{equation}
\mathbb{E}[\mathbf{r}_l \mid i_l = k]
= \mathbb{E}[\mathbf{x}_l - \boldsymbol{\mu}_k \mid i_l = k]
= \mathbb{E}[\mathbf{x}_l \mid i_l = k] - \boldsymbol{\mu}_k
= \boldsymbol{\mu}_k - \boldsymbol{\mu}_k
= \mathbf{0}.
\end{equation}

Next, by the definition of conditional covariance,
\begin{equation}
\mathrm{Cov}(\mathbf{r}_l \mid i_l = k)
=
\mathbb{E}\!\left[
(\mathbf{r}_l - \mathbb{E}[\mathbf{r}_l \mid i_l = k])
(\mathbf{r}_l - \mathbb{E}[\mathbf{r}_l \mid i_l = k])^\top
\mid i_l = k
\right].
\end{equation}
Since $\mathbb{E}[\mathbf{r}_l \mid i_l = k] = \mathbf{0}$, it follows that
\begin{equation}
\mathrm{Cov}(\mathbf{r}_l \mid i_l = k)
=
\mathbb{E}[\mathbf{r}_l \mathbf{r}_l^\top \mid i_l = k].
\end{equation}
Substituting $\mathbf{r}_l = \mathbf{x}_l - \boldsymbol{\mu}_k$, we obtain
\begin{equation}
\mathrm{Cov}(\mathbf{r}_l \mid i_l = k)
=
\mathbb{E}\!\left[
(\mathbf{x}_l - \boldsymbol{\mu}_k)(\mathbf{x}_l - \boldsymbol{\mu}_k)^\top
\mid i_l = k
\right].
\end{equation}
By the definition of conditional covariance and $\boldsymbol{\mu}_k = \mathbb{E}[\mathbf{x}_l \mid i_l = k]$, the right-hand side is exactly $\mathrm{Cov}(\mathbf{x}_l \mid i_l = k)$.
Therefore,
\begin{equation}
\mathrm{Cov}(\mathbf{r}_l \mid i_l = k)
=
\mathrm{Cov}(\mathbf{x}_l \mid i_l = k).
\end{equation}
This completes the proof.
\end{proof}

\subsection{Proposition~\ref{proposition_3} (Optimal Phase Prototype under MAE)}\label{appendix_proposition_MAE}
\begin{appendixPro}[\textbf{Optimal Phase Prototype under MAE}]
\label{proposition_3}
Under the same simplified prediction setting, i.e., $\hat{\mathbf{x}}_l = \mathbf{Q}_{i_l}$, consider the mean absolute error (MAE) objective
$
\mathcal{L}_{\mathrm{MAE}}(\mathbf{Q})
=
\mathbb{E}\left[\left\|\mathbf{x}_l - \mathbf{Q}_{i_l}\right\|_1\right].
$
Assume that the conditional first absolute moments exist. Then, for any 
$k \in \{0,1,\dots,W-1\}$ such that $\mathbb{P}(i_l=k)>0$, the optimal solution satisfies
\begin{equation}
\mathbf{Q}_{k,n}^*
\in 
\operatorname{Median}\left(x_{l,n} \mid i_l = k\right),
\quad
\forall n \in \{1,\dots,N\},
\end{equation}
where $x_{l,n}$ and $\mathbf{Q}_{k,n}$ denote the $n$-th channel of 
$\mathbf{x}_l$ and $\mathbf{Q}_k$, respectively.
\end{appendixPro}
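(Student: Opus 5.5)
The plan mirrors the proof of Proposition~\ref{proposition_1}: decompose the objective over phases, then over channels, and reduce to a scalar median problem. By the law of total expectation, and using $\mathbf{Q}_{i_l}=\mathbf{Q}_k$ on the event $\{i_l=k\}$,
\begin{equation}
\mathcal{L}_{\mathrm{MAE}}(\mathbf{Q})=\sum_{k=0}^{W-1}\mathbb{P}(i_l=k)\sum_{n=1}^{N}\mathbb{E}\left[\,|x_{l,n}-\mathbf{Q}_{k,n}|\;\middle|\; i_l=k\right].
\end{equation}
The $\ell_1$ norm splits coordinatewise, so the objective is a sum of nonnegative terms. Each term depends on a single scalar $\mathbf{Q}_{k,n}$, and the terms share no parameters. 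Hence, for every $k$ with $\mathbb{P}(i_l=k)>0$, $\mathbf{Q}^*$ is a minimizer if and only if each $\mathbf{Q}^*_{k,n}$ minimizes
\begin{equation}
g_{k,n}(q):=\mathbb{E}\left[\,|x_{l,n}-q|\;\middle|\; i_l=k\right].
\end{equation}
By assumption, $g_{k,n}$ is finite everywhere. Phases with zero probability do not affect the objective and are left unconstrained, exactly as in Proposition~\ref{proposition_1}.

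The core step is the classical fact that the minimizers of $g(q)=\mathbb{E}|Y-q|$ over $q\in\mathbb{R}$ are exactly the medians of $Y$, applied to $Y=x_{l,n}$ under the conditional law given $i_l=k$. A median here means any $m$ with $\mathbb{P}(Y\le m)\ge 1/2$ and $\mathbb{P}(Y\ge m)\ge 1/2$. I would prove this directly, avoiding differentiability issues, via the pointwise identity
\begin{equation}
|y-q'|-|y-q| = (q'-q)\big(\mathbf{1}\{y\le q\}-\mathbf{1}\{y> q\}\big) + 2\,(q'-y)\,\mathbf{1}\{q<y<q'\}
\end{equation}
for $q<q'$, possibly after adjusting which endpoints are included. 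Taking expectations gives
\begin{equation}
g(q')-g(q)\ \ge\ (q'-q)\big(2\,\mathbb{P}(Y\le q)-1\big).
\end{equation}
The symmetric identity handles $q'<q$ and gives
\begin{equation}
g(q')-g(q)\ \ge\ (q-q')\big(2\,\mathbb{P}(Y\ge q)-1\big).
\end{equation}
If $q$ is a median, both lower bounds are nonnegative, so $q$ is a global minimizer.

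For the converse, suppose $\mathbb{P}(Y\le q)<1/2$. I would exhibit a strict decrease by moving slightly to the right. By right-continuity of the distribution function, $\mathbb{P}(Y\le q+\varepsilon)<1/2$ for small $\varepsilon>0$. The identity then gives $g(q+\varepsilon)-g(q)\le \varepsilon\,(2\,\mathbb{P}(Y\le q+\varepsilon)-1)<0$. The case $\mathbb{P}(Y\ge q)<1/2$ is symmetric. Therefore every minimizer is a median.

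This step is the main obstacle. It requires bookkeeping with the distribution function, handling atoms via the choice of strict or non-strict indicators, and noting that the median set is a nonempty closed interval. Once it is settled, combining it with the decomposition yields
\begin{equation}
\mathbf{Q}^*_{k,n}\in\operatorname{Median}(x_{l,n}\mid i_l=k)
\end{equation}
for all $n$ and all $k$ with $\mathbb{P}(i_l=k)>0$, which is the claim of Proposition~\ref{proposition_3}.
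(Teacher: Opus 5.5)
Your proposal is correct and takes the same route as the paper: the law of total expectation over phases, coordinatewise separability of the $\ell_1$ norm, and then the fact that the minimizers of $q\mapsto\mathbb{E}|Y-q|$ are exactly the medians. The one difference is that the paper simply cites this last fact (Hastie et al.), whereas you prove it via the pointwise identity; that identity in fact holds exactly as written for $q<q'$, so no endpoint adjustment is needed, and your converse step is sound because bounding the extra term by $2\varepsilon\,\mathbb{P}(q<Y<q+\varepsilon)$ gives $g(q+\varepsilon)-g(q)\le\varepsilon\,(2\,\mathbb{P}(Y<q+\varepsilon)-1)$, which is at most your stated bound.
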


\begin{proof}
By the law of total expectation, the MAE objective can be decomposed according to the phase index:
\begin{equation}
\mathcal{L}_{\mathrm{MAE}}(\mathbf{Q})
=
\sum_{k=0}^{W-1}
\mathbb{P}(i_l = k)
\mathbb{E}\left[
\left\|\mathbf{x}_l - \mathbf{Q}_{i_l}\right\|_1
\mid i_l = k
\right].
\end{equation}
Conditioned on $i_l=k$, we have $\mathbf{Q}_{i_l}=\mathbf{Q}_k$, and hence
\begin{equation}
\mathcal{L}_{\mathrm{MAE}}(\mathbf{Q})
=
\sum_{k=0}^{W-1}
\mathbb{P}(i_l = k)
\mathbb{E}\left[
\left\|\mathbf{x}_l - \mathbf{Q}_k\right\|_1
\mid i_l = k
\right].
\end{equation}

For any $k$ such that $\mathbb{P}(i_l=k)>0$, minimizing 
$\mathcal{L}_{\mathrm{MAE}}(\mathbf{Q})$ with respect to $\mathbf{Q}_k$ is equivalent to minimizing
\begin{equation}
\mathcal{L}_{k}^{\mathrm{MAE}}(\mathbf{Q}_k)
:=
\mathbb{E}\left[
\left\|\mathbf{x}_l - \mathbf{Q}_k\right\|_1
\mid i_l = k
\right].
\end{equation}
Since the $\ell_1$ norm is separable across variables, we have
\begin{equation}
\mathcal{L}_{k}^{\mathrm{MAE}}(\mathbf{Q}_k)
=
\sum_{n=1}^{N}
\mathbb{E}\left[
\left|x_{l,n} - \mathbf{Q}_{k,n}\right|
\mid i_l = k
\right].
\end{equation}
Thus, each variable can be optimized independently.

For a fixed phase $k$ and variable $n$, let $q=\mathbf{Q}_{k,n}$ and define
\begin{equation}
\phi(q)
=
\mathbb{E}\left[
\left|x_{l,n} - q\right|
\mid i_l = k
\right].
\end{equation}
A standard property of the absolute loss is that any minimizer of the expected absolute deviation is a median of the underlying distribution~\cite{hastie2009elements}. Therefore,
\begin{equation}
q^* \in \operatorname{Median}(x_{l,n}\mid i_l=k).
\end{equation}
Applying this result to every variable $n$ yields
\begin{equation}
\mathbf{Q}_{k,n}^*
\in
\operatorname{Median}(x_{l,n}\mid i_l=k),
\quad
\forall n \in \{1,\dots,N\}.
\end{equation}
This completes the proof.
\end{proof}

Proposition~\ref{proposition_3} shows that replacing the MSE objective with the MAE objective changes the optimal prototype from the phase-conditioned mean to the phase-conditioned median. 
However, phase-dependent deviations typically persist in the residual, since subtracting a phase-dependent prototype (such as the mean or median) primarily shifts the center of the distribution, without eliminating differences in variability or overall shape across phases. 
Consequently, the residual continues to reflect the underlying phase-dependent deviations.

\section{More Related Work}\label{appendix_related_work}
\paragraph{Deep Learning-based Time Series Forecasting}
Deep learning approaches have emerged as powerful tools for time series forecasting, driven by their ability to model complex nonlinear temporal dependencies~\cite{lim2021time}.
These approaches can be broadly classified into RNN-based, TCN-based, GNN-based, Transformer-based, and Linear-based architectures.
RNN-based models~\cite{lai2018modeling,lin2025segrnn} capture temporal dependencies by recursively updating hidden states, where each state encodes historical information up to the current time step. 
While effective for sequential modeling, they often suffer from difficulties in capturing long-term dependencies and limited parallelization.
TCN-based models~\cite{wu2022timesnet,luo2024moderntcn} capture temporal dependencies using causal and dilated convolutions, providing large receptive fields while respecting temporal causality. They allow efficient parallel computation and stable training, but may struggle with very long-range dependencies and do not explicitly model temporal dynamics in a step-by-step manner.
GNN-based models~\cite{huang2023crossgnn,yi2023fouriergnn} represent time series as nodes in a graph and capture dependencies by propagating information through graph structures, enabling the modeling of complex spatial–temporal relationships. They are effective for capturing structured and long-range interactions, but often rely on predefined or learned graph structures and may incur higher computational cost and sensitivity to graph quality.
Transformer-based models~\cite{liu2023itransformer,nie2022time} capture temporal dependencies using attention mechanisms, such as self-attention~\cite{vaswani2017attention} and cross-attention~\cite{zhang2023crossformer}, enabling flexible modeling of interactions across embedding tokens.
They typically exhibit strong expressiveness, but incur high computational cost as the lookback window length increases.
Linear-based methods~\cite{zeng2023transformers, li2023revisiting,wang2024timemixer, yi2023frequency} model time series using simple linear mappings, typically capturing temporal dependencies via linear projections of historical observations.
They are computationally efficient and easy to train, often achieving strong performance in practice despite their simplicity.
Recent studies have increasingly emphasized Transformer-based and linear-based approaches, both of which achieve state-of-the-art performance while exhibiting distinct characteristics. Transformer-based models are highly expressive, whereas linear-based methods provide superior computational efficiency.
Paloma, proposed in this work, is a linear-based model that effectively captures phase-dependent residual deviations while maintaining the simplicity and parameter efficiency inherent to linear architectures.



\section{Experimental Details}
\subsection{Baselines}\label{appendix_baselines}

To comprehensively evaluate the effectiveness of the proposed Paloma for time series forecasting, we consider 17 representative and state-of-the-art methods as baselines. These methods span diverse architectural paradigms, including MLP-based, Transformer-based, frequency-based, and TCN-based approaches, along with two recently proposed training frameworks. A brief overview of these methods is provided as follows:

\textbf{Global Temporal Retriever (GTR)}~\cite{cao2026enhancing} is a lightweight, plug-and-play module that addresses the limited historical context of existing multivariate time series forecasting models by enabling access to global temporal information across entire cycles. It adaptively retrieves and integrates relevant global patterns with local inputs, thereby enhancing forecasting performance without modifying the underlying model architecture.
The detailed implementation is available at \url{https://github.com/macovaseas/GTR}.
The performance of GTR reported in Table~\ref{tab:input_96_avg} (Table~\ref{tab:input_96_appendix}) and Table~\ref{tab:input_336_appendix} follows the results presented in Table 7 and Table 11 of the original GTR paper.

\textbf{RAFT}~\cite{han2025retrieval} is a retrieval-augmented time series forecasting method that enhances model capacity by retrieving historically similar patterns from the training data and incorporating their future values into prediction. By providing external pattern-based information as an inductive bias, it reduces the burden on model parameters and consistently improves forecasting performance.
The detailed implementation is available at \url{https://github.com/archon159/RAFT}.
The performance of RAFT reported in Table~\ref{tab:input_96_avg} (Table~\ref{tab:input_96_appendix}) follows the results presented in Table 7 of the GTR~\cite{cao2026enhancing} paper.

\textbf{CycleNet}~\cite{lin2024cyclenet} is a simple yet efficient time series forecasting method that models periodic patterns via Residual Cycle Forecasting (RCF) and predicts residuals using lightweight backbones, achieving state-of-the-art performance while remaining flexible for integration with existing models.
The detailed implementation is available at \url{https://github.com/ACAT-SCUT/CycleNet}.
The performance of CycleNet reported in Table~\ref{tab:input_96_avg} (Table~\ref{tab:input_96_appendix}) and Table~\ref{tab:input_336_appendix} follows the results presented in Table 7 and Table 11 of the GTR~\cite{cao2026enhancing} paper.

\textbf{SparseTSF}~\cite{lin2024sparsetsf} is an extremely lightweight long-term time series forecasting model that leverages cross-period sparse forecasting to efficiently extract periodic patterns, achieving competitive performance with fewer than 1k parameters while maintaining strong generalization in resource-constrained scenarios.
The detailed implementation is available at \url{https://github.com/lss-1138/SparseTSF}.
The performance of SparseTSF reported in Table~\ref{tab:input_336_appendix} follows the results presented in Table 11 of the GTR~\cite{cao2026enhancing} paper.

\textbf{DLinear}~\cite{zeng2023transformers} decomposes the input series into trend and seasonal components and applies separate linear layers to each, thereby improving performance by explicitly modeling trend dynamics.
The detailed implementation is available at \url{https://github.com/cure-lab/LTSF-Linear}.
The performance of DLinear reported in Table~\ref{tab:input_96_avg} (Table~\ref{tab:input_96_appendix}) follows the results presented in Table 7 of the GTR~\cite{cao2026enhancing} paper.

\textbf{OLinear}~\cite{yue2025olinear} is a linear-based forecasting model that leverages an orthogonal transformation (OrthoTrans) to decorrelate temporal dependencies and a normalized linear layer (NormLin) to capture multivariate relationships, achieving state-of-the-art performance while also enhancing existing models as a plug-in.
The detailed implementation is available at \url{https://github.com/jackyue1994/OLinear}.
The performance of OLinear reported in Table~\ref{tab:input_96_additional} follows the results presented in Table 15 of the original OLinear paper.

\textbf{TimeMixer}~\cite{wang2024timemixer} is a fully MLP-based forecasting model that captures multiscale temporal variations through decomposable mixing of seasonal and trend components and multi-predictor fusion, enabling effective modeling of complex time series with high efficiency.
The detailed implementation is available at \url{https://github.com/kwuking/TimeMixer}.
The performance of TimeMixer reported in Table~\ref{tab:input_96_additional} follows the results presented in Table 7 of the GTR~\cite{cao2026enhancing} paper.

\textbf{TimeXer}~\cite{wang2024timexer} is a Transformer-based forecasting model that incorporates exogenous variables through patch-wise self-attention, variate-wise cross-attention, and global tokens to effectively model both temporal dependencies and cross-variable relationships for improved prediction of endogenous series.
The detailed implementation is available at \url{https://github.com/thuml/TimeXer}.
The performance of TimeXer reported in Table~\ref{tab:input_96_avg} (Table~\ref{tab:input_96_appendix}) follows the results presented in Table 7 of the GTR~\cite{cao2026enhancing} paper.

\textbf{iTransformer}~\cite{liu2023itransformer} is a Transformer-based forecasting model that inverts input dimensions to treat variates as tokens, enabling effective modeling of multivariate correlations without modifying the original Transformer architecture.
The detailed implementation is available at \url{https://github.com/thuml/iTransformer}.
The performance of iTransformer reported in Table~\ref{tab:input_96_avg} (Table~\ref{tab:input_96_appendix}) follows the results presented in Table 7 of the GTR~\cite{cao2026enhancing} paper.

\textbf{PatchTST}~\cite{nie2022time} employs a patch-based, channel-independent Transformer design to efficiently model local semantics and long-range dependencies in time series forecasting.
The detailed implementation is available at \url{https://github.com/yuqinie98/patchtst}.
The performance of PatchTST reported in Table~\ref{tab:input_96_avg} (Table~\ref{tab:input_96_appendix}) and Table~\ref{tab:input_336_appendix} follows the results presented in Table 7 and Table 11 of the GTR~\cite{cao2026enhancing} paper.

\textbf{FilterNet}~\cite{yi2024filternet} adopts learnable frequency filters to model time series by selectively passing or attenuating signal components, thereby effectively capturing informative patterns across the full frequency spectrum.
The detailed implementation is available at \url{https://github.com/aikunyi/FilterNet}.
The performance of FilterNet reported in Table~\ref{tab:input_96_avg} (Table~\ref{tab:input_96_appendix}) follows the results presented in Table 15 of the OLinear~\cite{yue2025olinear} paper.

\textbf{FreTS}~\cite{yi2023frequency} adopts frequency-domain MLPs to model time series by transforming signals into the frequency domain and learning real and imaginary components, enabling effective capture of global dependencies and compact signal patterns.
The detailed implementation is available at \url{https://github.com/aikunyi/FreTS}.
The performance of FreTS reported in Table~\ref{tab:input_96_additional} follows the results presented in Table 5 of the FreDF~\cite{wang2024fredf} paper, except for the Solar-Energy and four PEMS datasets, for which the results are obtained by running the official code five times and reporting the average values.


\textbf{FITS}~\cite{xu2023fits} operates in the complex frequency domain to perform time series analysis via interpolation, achieving competitive performance with only about 10k parameters.
The detailed implementation is available at \url{https://github.com/VEWOXIC/FITS}.
The performance of FITS reported in Table~\ref{tab:input_96_additional} follows the results presented in Table 15 of the OLinear~\cite{yue2025olinear} paper.

\textbf{Fredformer}~\cite{piao2024fredformer} addresses frequency bias in Transformer-based forecasting by promoting balanced learning across frequency bands, enabling more effective capture of both high- and low-frequency patterns in time series.
The detailed implementation is available at \url{https://github.com/chenzRG/Fredformer}.
The performance of Fredformer reported in Table~\ref{tab:input_96_additional} follows the results presented in Table 15 of the OLinear~\cite{yue2025olinear} paper.

\textbf{TimesNet}~\cite{wu2022timesnet} models time series by transforming 1D sequences into 2D representations based on multi-periodicity, enabling effective capture of intra- and inter-period temporal variations through parameter-efficient convolutional blocks.
The detailed implementation is available at \url{https://github.com/thuml/TimesNet}.
The performance of TimesNet reported in Table~\ref{tab:input_96_additional} follows the results presented in Table 5 of the FreDF~\cite{wang2024fredf} paper, except for the Solar-Energy and four PEMS datasets, for which the results are obtained by running the official code five times and reporting the average values.


\textbf{DistDF}~\cite{wang2025distdf} trains forecasting models by minimizing a tractable joint-distribution Wasserstein discrepancy that upper bounds conditional distribution differences, thereby aligning forecast and label distributions without relying on biased likelihood estimation.
The detailed implementation is available at \url{https://github.com/Master-PLC/DistDF}.
The performance of DistDF reported in Table~\ref{tab:input_96_avg} (Table~\ref{tab:input_96_appendix}) follows the results presented in Table 6 of the original DistDF~\cite{wang2025distdf} paper, except for the Solar-Energy and four PEMS datasets, for which the results are obtained by running the official code five times and reporting the average values.


\textbf{FreDF}~\cite{wang2024fredf} mitigates the bias of the Direct Forecast paradigm caused by label autocorrelation by transforming forecasting into the frequency domain, enabling more accurate and unbiased time series prediction.
The detailed implementation is available at \url{https://github.com/Master-PLC/FreDF}.
The performance of FreDF reported in Table~\ref{tab:input_96_avg} (Table~\ref{tab:input_96_appendix}) follows the results presented in Table 5 of the original FreDF~\cite{wang2024fredf} paper, except for the Solar-Energy and four PEMS datasets, for which the results are obtained by running the official code five times and reporting the average values.


\subsection{Datasets}\label{appendix_datasets}
To comprehensively evaluate the performance of our model, we conduct extensive experiments on eight benchmark datasets for long-term forecasting and the PEMS datasets for short-term forecasting. Detailed information of the datasets is provided in Table~\ref{tab:datasets}, and further descriptions are given below:

\textbf{ETT} (Electricity Transformer Temperature) dataset~\cite{zhou2021informer} records seven variables related to electricity transformers from July 2016 to July 2018. It consists of four subsets with two temporal granularities: ETTh1 and ETTh2 are sampled hourly, while ETTm1 and ETTm2 are sampled every 15 minutes.

\textbf{Weather} dataset~\cite{wu2021autoformer} records 21 meteorological variables, including air temperature and humidity, collected at 10-minute intervals throughout 2020 from the Weather Station of the Max Planck Institute for Biogeochemistry in Germany.

\textbf{Electricity} dataset~\cite{wu2021autoformer} records hourly electricity consumption (in kWh) for 321 clients, covering a multi-year period.

\textbf{Traffic} dataset~\cite{wu2021autoformer} records hourly road occupancy rates collected by 862 sensors on the freeways of the San Francisco Bay Area, covering the period from 2015 to 2016.

\textbf{Solar-Energy} dataset~\cite{zhou2021informer} records solar power production from 137 photovoltaic plants in Alabama, sampled every 10 minutes in 2006.

\textbf{PEMS} dataset~\cite{liu2022scinet} provides public traffic data collected from the Caltrans Performance Measurement System (PeMS) in California, aggregated at 5-minute intervals. In this study, we use four subsets: PEMS03, PEMS04, PEMS07, and PEMS08.

\input{tables/dataset}
\subsection{Implementation Details}\label{appendix_implementation_detials}
All experiments are implemented in PyTorch~\cite{paszke2019pytorch} and conducted on a single NVIDIA RTX 4090 GPU with 24GB memory. We use the Adam optimizer~\cite{kingma2014adam}, with the learning rate selected from $\{5\mathrm{e}{-4}, 1\mathrm{e}{-3}, 2\mathrm{e}{-3}, 3\mathrm{e}{-3}, 5\mathrm{e}{-3}\}$.
Paloma is trained for up to 30 epochs, with early stopping based on validation performance using a patience of 5.
Following prior works (e.g., CARD~\cite{xue2023card} and OLinear~\cite{yue2025olinear}), we adopt a weighted loss function. 
The hidden dimension of the linear projection, $d_{\text{model}}$, is chosen from $\{64, 128, 256, 512\}$, and the batch size is selected from $\{16, 64, 128, 256\}$.

\section{Additional Results}
\subsection{Full Forecasting Results}
\paragraph{Lookback window $T=96$.}
Table~\ref{tab:input_96_appendix} provides the complete results corresponding to Table~\ref{tab:input_96_avg} in the main text. 
In addition, Table~\ref{tab:input_96_additional} (under the same experimental setting as Table~\ref{tab:input_96_appendix}) includes results from more representative baselines, enabling a more comprehensive comparison with the proposed Paloma. 
Table~\ref{tab:input_96_additional} also reports the standard deviation of Paloma over five runs with different random seeds.
Overall, the results in Tables~\ref{tab:input_96_appendix} and Table~\ref{tab:input_96_additional} demonstrate that Paloma achieves state-of-the-art performance while maintaining strong robustness.

\input{tables/input_96_appendix}
\input{tables/input_96_additional}


\paragraph{Increasing Lookback Length.}
We further examine how model performance evolves as the lookback length increases, as shown in Table~\ref{tab:input_336_appendix} and Figure~\ref{fig:longer_lookback}. 
Paloma exhibits a generally improving and stable performance trend across datasets, indicating its ability to effectively leverage extended temporal context. 
In contrast, competing methods often show diminishing or inconsistent gains with longer input lengths, reflecting difficulty in reliably utilizing long-range historical information. 
This behavior highlights the advantage of explicitly modeling phase-dependent variations when handling longer input sequences.

\begin{figure}[t]
    \centering
    \includegraphics[width=0.90\linewidth]{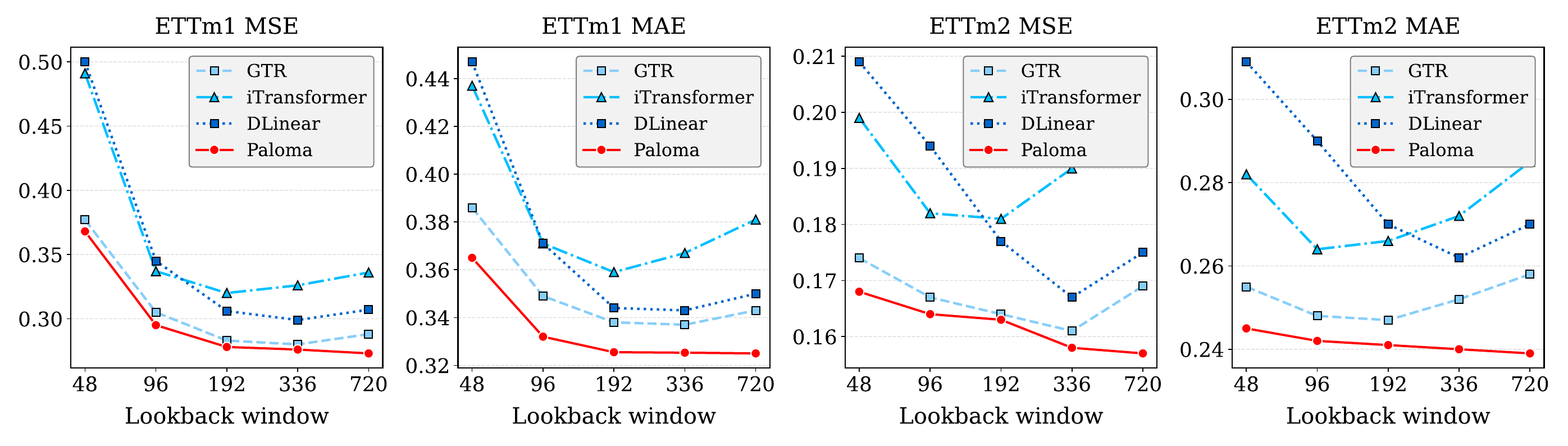}
    \caption{Forecasting performance with the lookback window $T \in\{48,96,192,336,720\}$ and fixed prediction length $\tau =96$. }
    \label{fig:longer_lookback}
\end{figure}

\input{tables/input_336_appendix}

\subsection{Full Ablation Results}
Table~\ref{tab:ablation_appendix} provides the complete results corresponding to Table~\ref{tab:ablation_avg} in the main text. 
Table~\ref{tab:ablation_appendix} presents a comprehensive ablation study of the proposed Phase Rotation (PR) and Amplitude Scaling (AS) modules under the long-term forecasting setting, including variants without PR (“w/o PR”), without AS (“w/o AS”), and without both modules (“w/o PR, AS”).
The experiments are conducted on six representative datasets, including ETTm1, ETTh1, Electricity, Weather, Traffic, and Solar-Energy.
Among them, ETTm1 and ETTh1 are selected as representative subsets from the four ETT datasets, as they cover the two temporal granularities (ETTm and ETTh).
The ablation results in Table~\ref{tab:ablation_appendix} show that removing either the PR or AS module consistently degrades performance across all datasets, indicating that both components contribute positively to the model. 
Notably, removing the PR module leads to slightly larger performance drops compared to removing the AS module, suggesting that phase modeling plays a more dominant role in capturing temporal dynamics. 
When both modules are removed, the performance further deteriorates, demonstrating that PR and AS provide complementary benefits. 
Overall, these results validate the effectiveness of the proposed phase--amplitude modulation mechanism in modeling structured residual variations.

\input{tables/ablation_appendix}

\subsection{More Efficiency Analysis}
To comprehensively evaluate the efficiency of the proposed model Paloma, we conduct three efficiency experiments on datasets with varying numbers of channels, including ETTh1 (7 channels, Table~\ref{tab:efficiency_ETTh1}), Weather (21 channels, Table~\ref{tab:efficiency_weather}), and Traffic (862 channels, Table~\ref{tab:efficiency_traffic}) datasets, covering small-, medium-, and large-scale scenarios.
All experiments are performed under a unified setting with a lookback window $T = 96$ and a prediction length $\tau = 720$. 
The batch size is fixed at 32 for ETTh1 and Weather datasets, and set to 16 for Traffic dataset due to its significantly larger scale.

The efficiency results across Tables~\ref{tab:efficiency_weather}, ~\ref{tab:efficiency_ETTh1}, and ~\ref{tab:efficiency_traffic} demonstrate that Paloma maintains a consistent balance between computational cost and predictive performance across datasets of varying scales. 
On the small-scale ETTh1 dataset (7 channels), Paloma achieves superior accuracy with comparable training time and moderate computational overhead. 
On the medium-scale Weather dataset (21 channels), Paloma continues to deliver better forecasting performance while requiring substantially fewer MACs and memory than Transformer-based models. 
On the large-scale Traffic dataset (862 channels), although the overall computational cost increases for all methods, Paloma remains significantly more efficient than Transformer-based approaches, with markedly lower MACs and training time, while incurring only a moderate increase in computational cost compared to lightweight linear models such as GTR, CycleNet, and DLinear.
These results indicate that Paloma maintains a favorable balance between efficiency and accuracy as the number of channels increases, demonstrating strong scalability to high-dimensional time series.

\input{tables/efficiency_ETTh1}

\input{tables/efficiency_traffic}

\subsection{Parameter Sensitivity Analysis}
\subsubsection{Hidden Dimension $d_{\text{model}}$}
We analyze the sensitivity of Paloma to the hidden dimension $d_{\text{model}}$, as shown in Figure~\ref{fig:hyper_para_d_model}. 
As $d_{\text{model}}$ increases from small to moderate values, the performance consistently improves, indicating that a larger representation capacity helps capture richer temporal patterns. 
However, when $d_{\text{model}}$ becomes sufficiently large, the performance tends to saturate, with only marginal gains or slight fluctuations observed.
Notably, this trend is consistent across all datasets, indicating that while performance improves with increasing $d_{\text{model}}$ at smaller scales, it becomes stable beyond a moderate range. 
This suggests that Paloma is not highly sensitive to the choice of $d_{\text{model}}$ once sufficient model capacity is reached, and can achieve strong performance without requiring excessively large dimensions.

\begin{figure}[h]
    \centering
    \subfigure[ETTm1]{
    \includegraphics[width=0.25\linewidth]{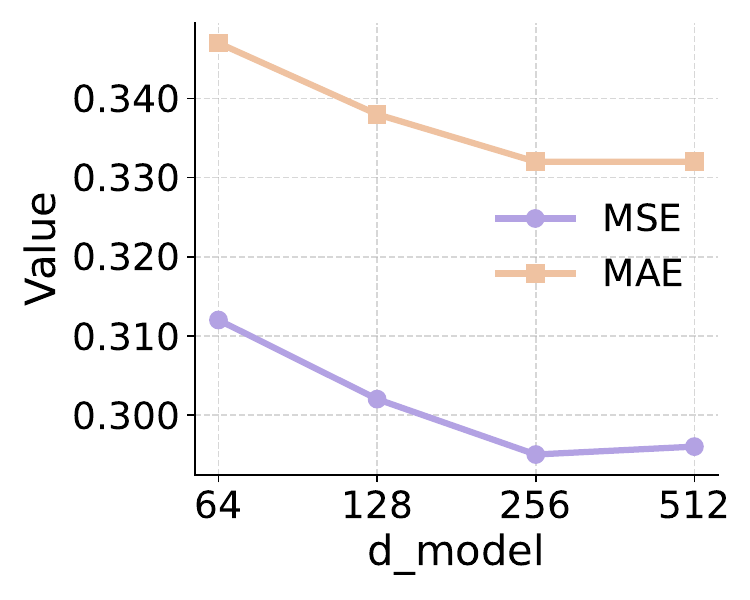}
    }
    \hspace{-4mm}
    \subfigure[ETTh1]{
    \includegraphics[width=0.25\linewidth]{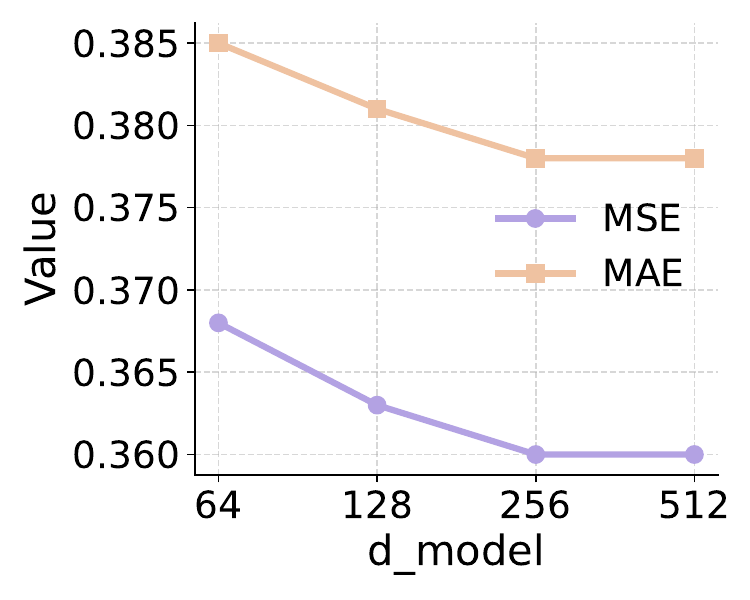}
    }
    \hspace{-4mm}
    \subfigure[Weather]{
    \includegraphics[width=0.25\linewidth]{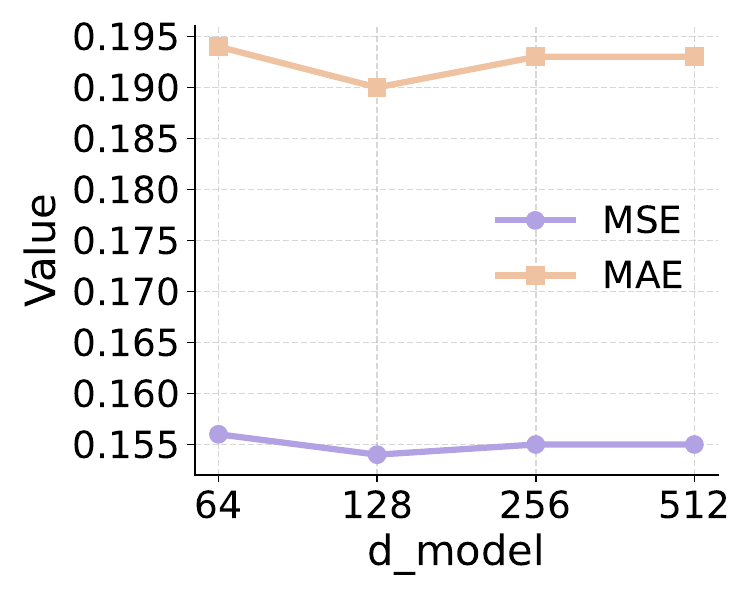}
    }
    \hspace{-4mm}
    \subfigure[Electricity]{
    \includegraphics[width=0.25\linewidth]{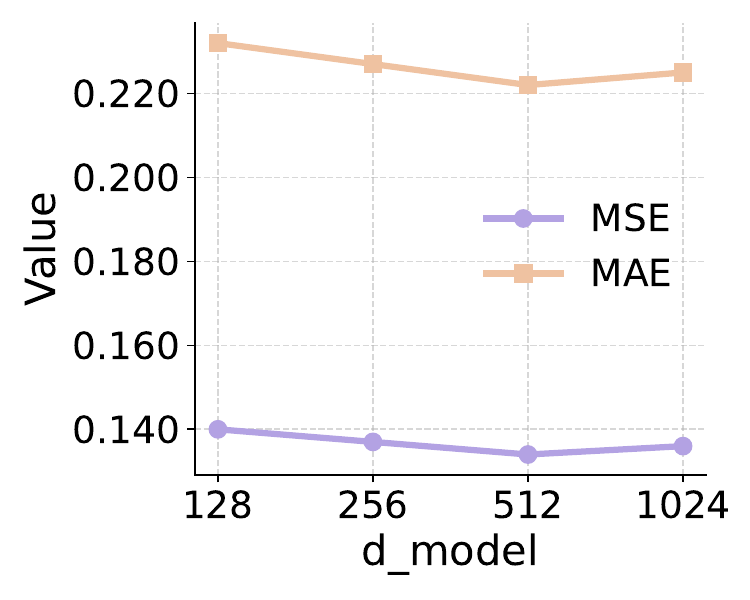}
    }
    \caption{Sensitivity analysis of the hidden dimension $d_{\mathrm{model}}$ under the setting of lookback window $T = 96$ and prediction length $\tau = 96$.}
    \label{fig:hyper_para_d_model}
\end{figure}

\subsubsection{Cycle Length $W$}\label{appendix_para_cycle_length}
We further investigate the sensitivity of Paloma to the cycle length $W$, as shown in Figure~\ref{fig:hyper_para_cycle_length}. 
From Figure~\ref{fig:hyper_para_cycle_length}, two key observations can be drawn. 
First, the best performance is consistently achieved when the cycle length $W$ matches the inherent periodicity of the dataset. 
Second, when $W$ deviates from the true cycle length, the performance degrades to varying degrees, indicating that accurate cycle specification is important for effective phase modeling. 
Notably, the degradation is more pronounced when $W$ is close to but misaligned with the true periodicity (e.g., $W=95$ for ETTm1, $W=23$ or $25$ for ETTh1, $W=143$ for Weather, and $W=167$ for Electricity), suggesting that small mismatches may introduce phase misalignment and disrupt the learned periodic structure.

These observations also reveal a limitation of Paloma, namely that its performance depends on the availability of a reasonably accurate prior for the cycle length $W$. 
When the assumed cycle length is misaligned with the true periodicity, the resulting phase inconsistency can negatively affect model performance.
From another perspective, this sensitivity can be exploited as a diagnostic signal. 
When the true periodicity is unknown, one can evaluate the model under a set of candidate cycle lengths. 
\textit{Empirically, the model tends to exhibit worse predictive performance when $W$ is closer to but not aligned with the true cycle length, which may indicate that the assumed $W$ is approaching the intrinsic periodicity of the dataset. 
This observation provides a practical heuristic for exploring the underlying periodic structure in real-world scenarios.}

\begin{figure}[h]
    \centering
    \subfigure[ETTm1 (96)]{
    \includegraphics[width=0.4\linewidth]{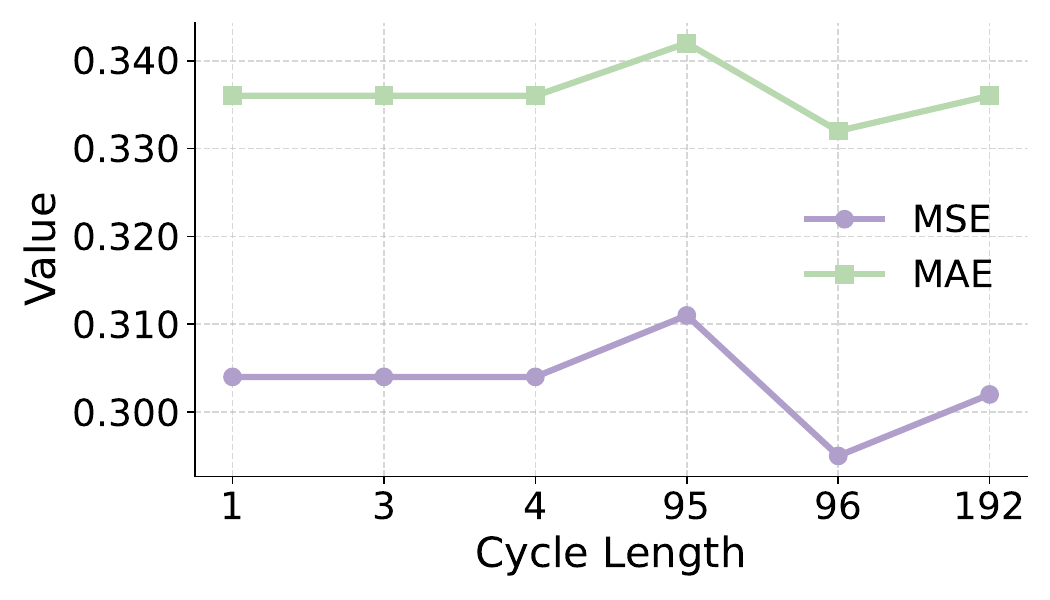}
    }
    \hspace{-2mm}
    \subfigure[ETTh1 (24)]{
    \includegraphics[width=0.4\linewidth]{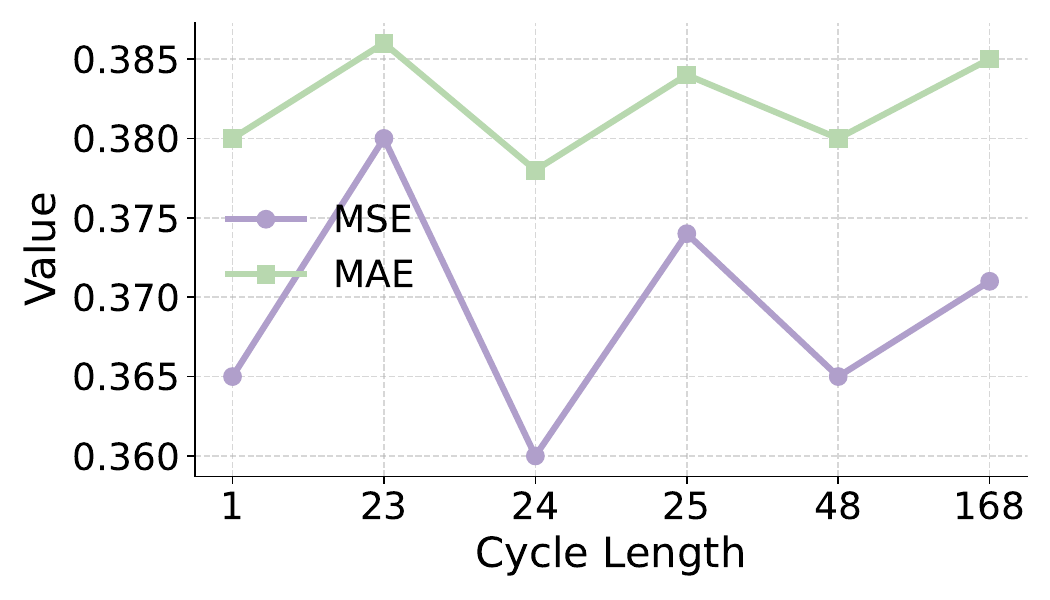}
    }\\
    \subfigure[Weather (144)]{
    \includegraphics[width=0.4\linewidth]{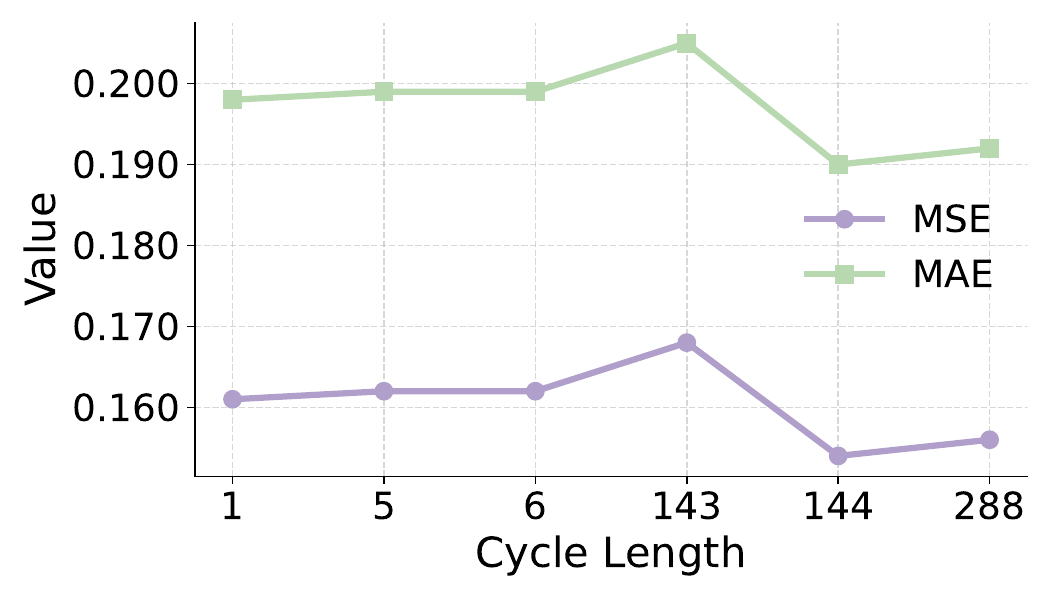}
    }
    \hspace{-2mm}
    \subfigure[Electricity (168)]{
    \includegraphics[width=0.4\linewidth]{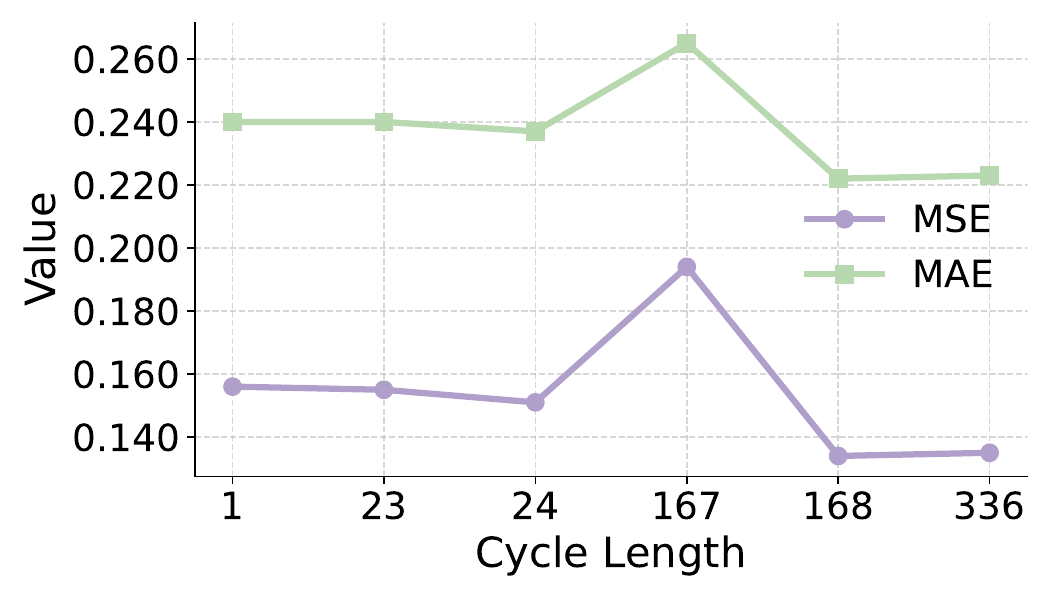}
    }
    \caption{Sensitivity analysis of the cycle length $W$ under the setting of lookback window $T = 96$ and prediction length $\tau = 96$. 
The values in parentheses indicate the inherent periodicity of each dataset.}
    \label{fig:hyper_para_cycle_length}
\end{figure}

\section{Visualizations}
To provide a more intuitive comparison with state-of-the-art methods, we present qualitative prediction results on the Solar-Energy and ETTm1 datasets, as shown in Figures~\ref{fig:forecasting_solar_96_96} and~\ref{fig:forecasting_ETTm1_96_96}. 
We select several representative models as baselines, including GTR~\cite{cao2026enhancing}, CycleNet~\cite{lin2024cyclenet}, TimeXer~\cite{wang2024timexer}, FilterNet~\cite{yi2024filternet}, and iTransformer~\cite{liu2023itransformer}, covering diverse modeling paradigms.
Compared with these approaches, Paloma produces more accurate and stable predictions of future temporal variations, demonstrating its superior ability to capture complex dynamics.

\begin{figure}[h]
    \centering
    \subfigure[Paloma]{
    \includegraphics[width=0.48\linewidth]{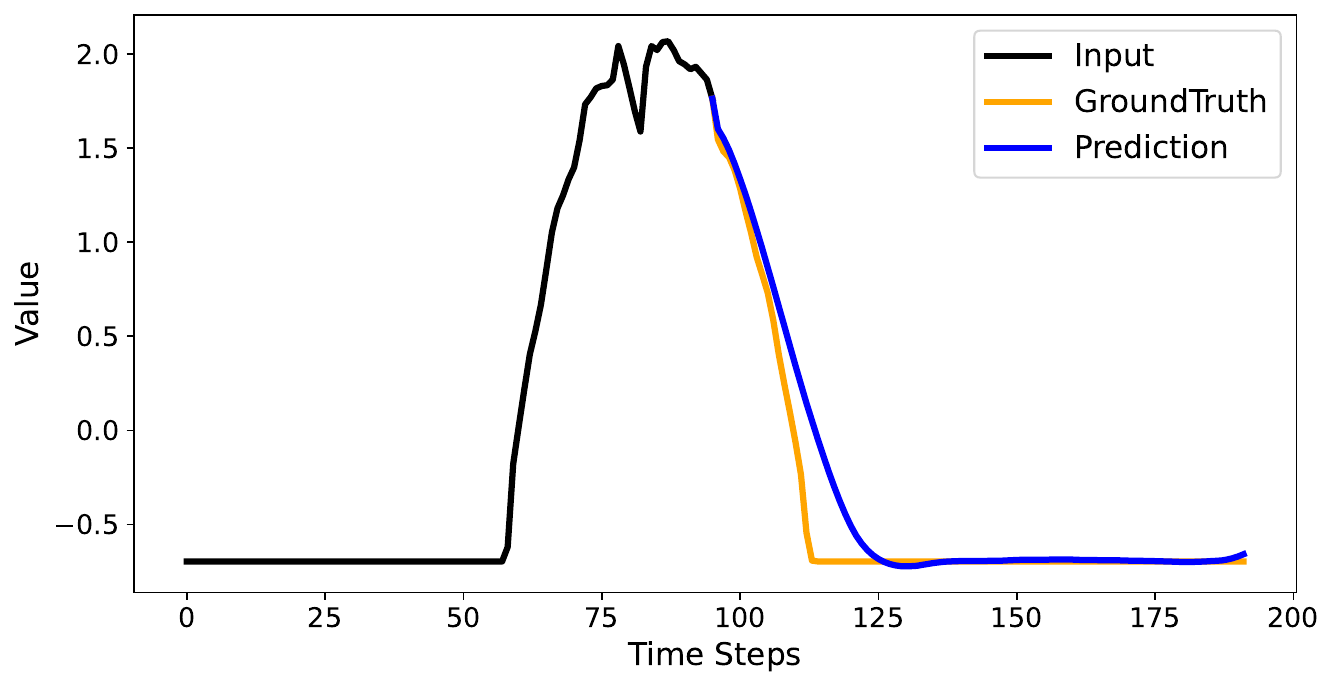}
    }
    \hspace{-4mm}
    \subfigure[GTR]{
    \includegraphics[width=0.48\linewidth]{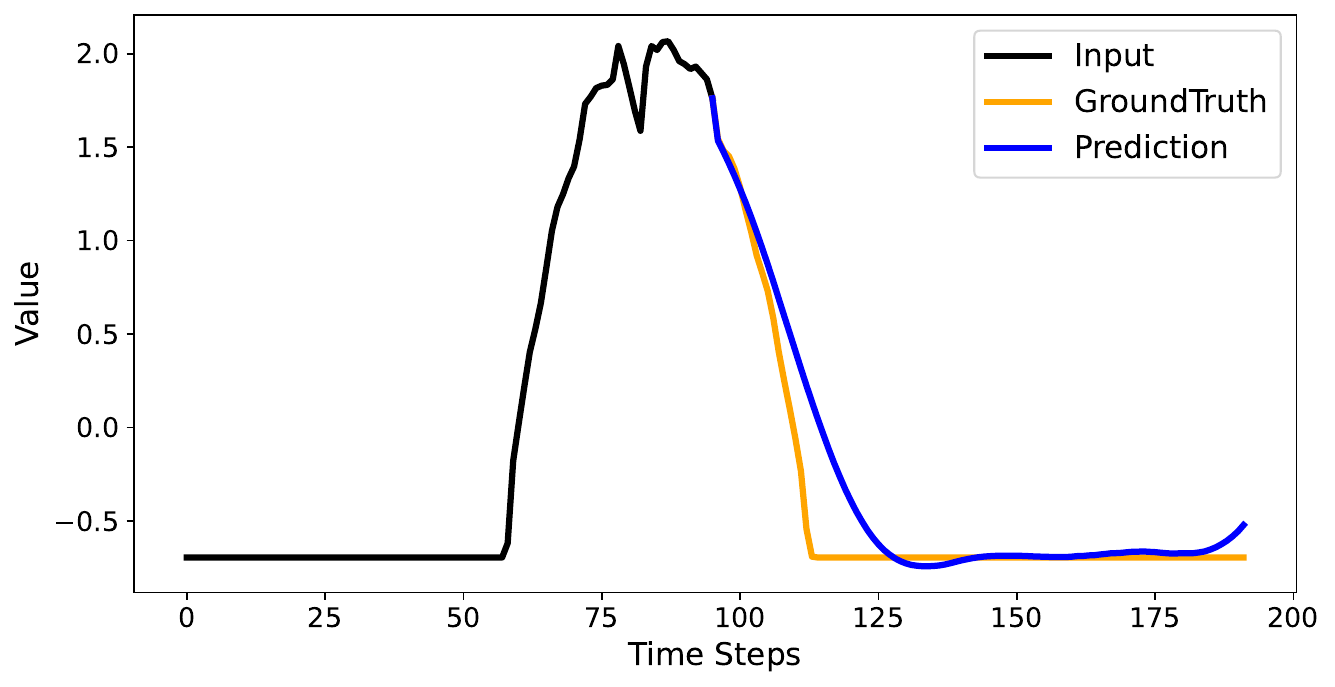}
    }\\
    \hspace{-4mm}
    \subfigure[CycleNet]{
    \includegraphics[width=0.48\linewidth]{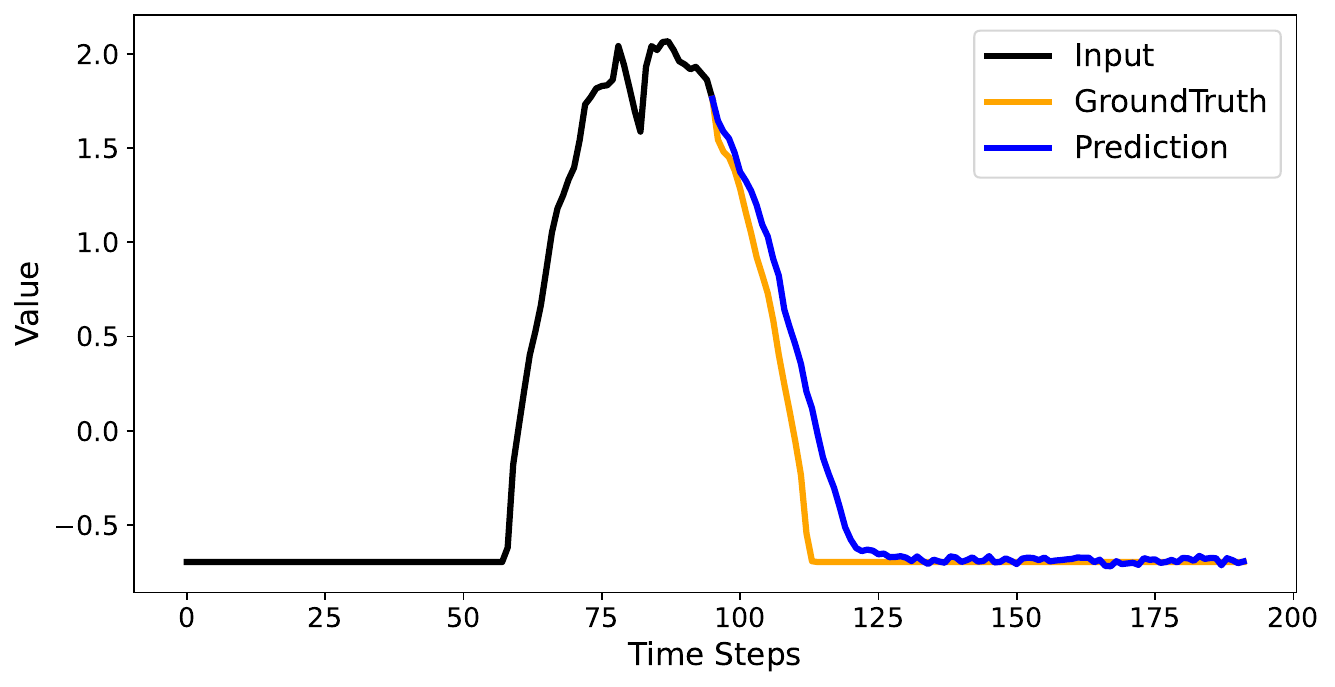}
    }
    \hspace{-4mm}
    \subfigure[TimeXer]{
    \includegraphics[width=0.48\linewidth]{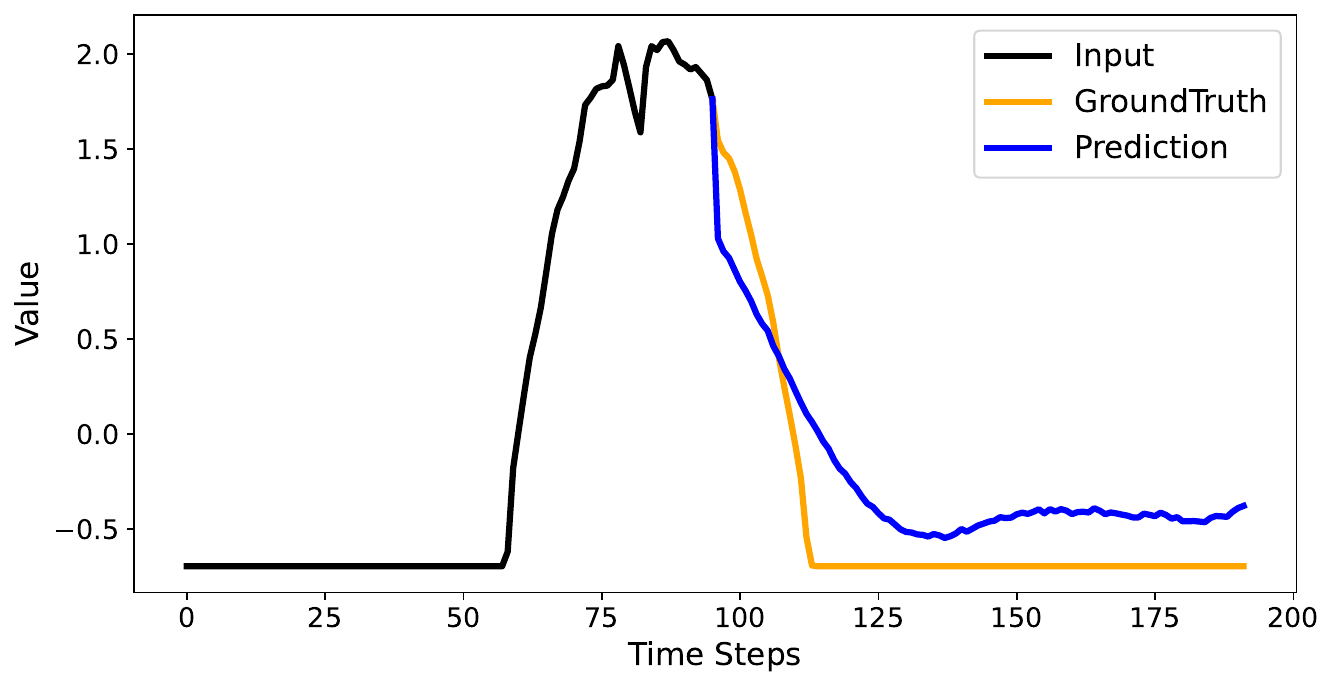}
    }\\
        \hspace{-4mm}
    \subfigure[FilterNet]{
    \includegraphics[width=0.48\linewidth]{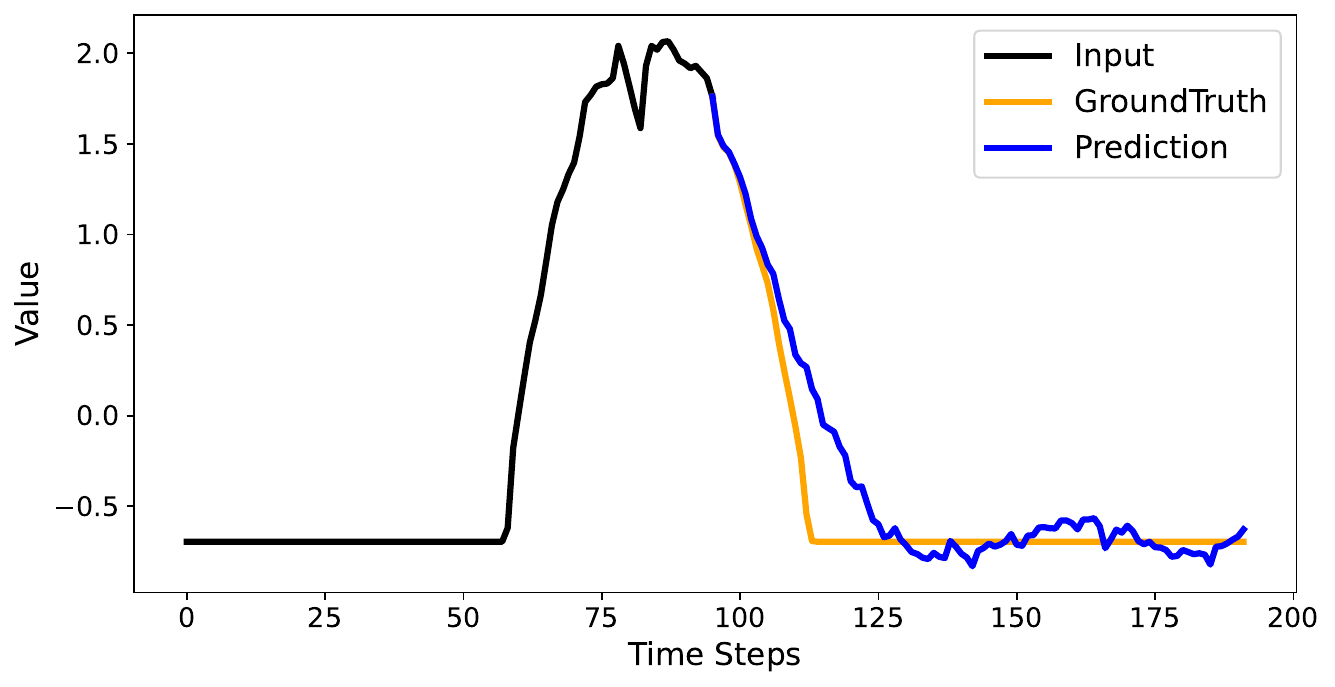}
    }
    \hspace{-4mm}
    \subfigure[iTransformer]{
    \includegraphics[width=0.48\linewidth]{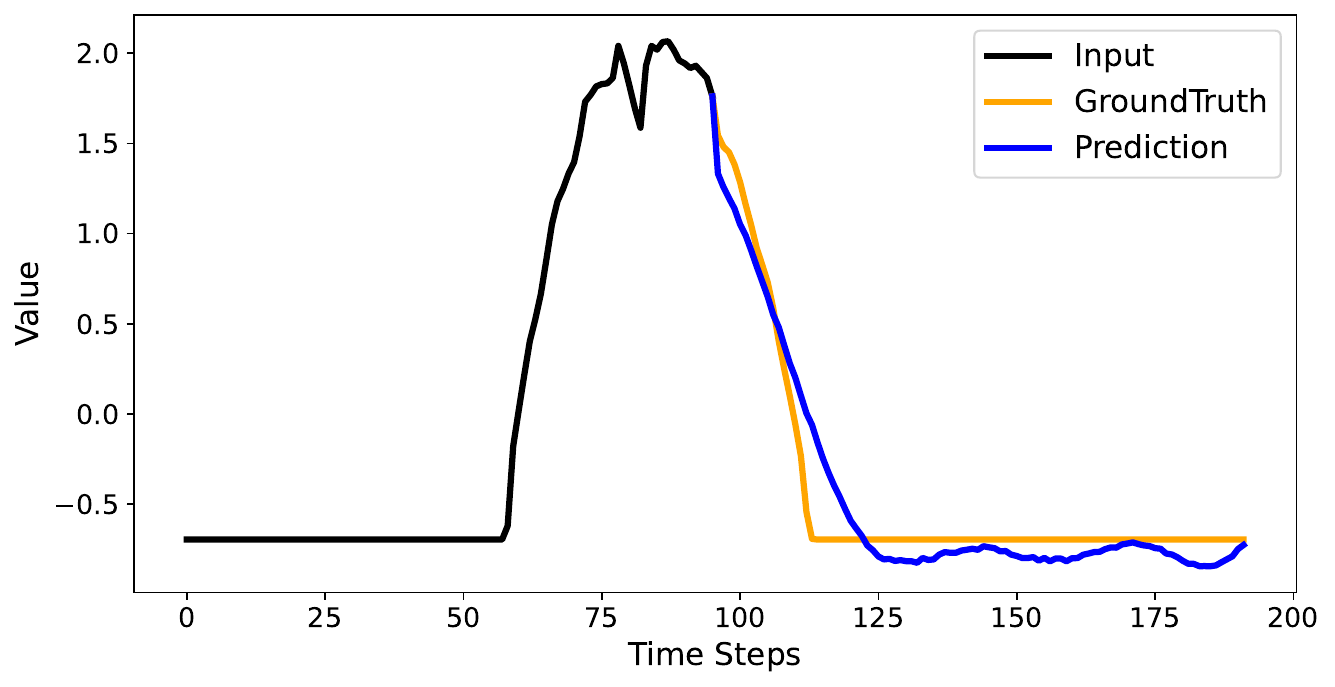}
    }
    \caption{Visualization of forecasting results on the Solar-Energy dataset with lookback window $T=96$ and prediction length $\tau=96$.}
    \label{fig:forecasting_solar_96_96}
\end{figure}

\begin{figure}[h]
    \centering
    \subfigure[Paloma]{
    \includegraphics[width=0.48\linewidth]{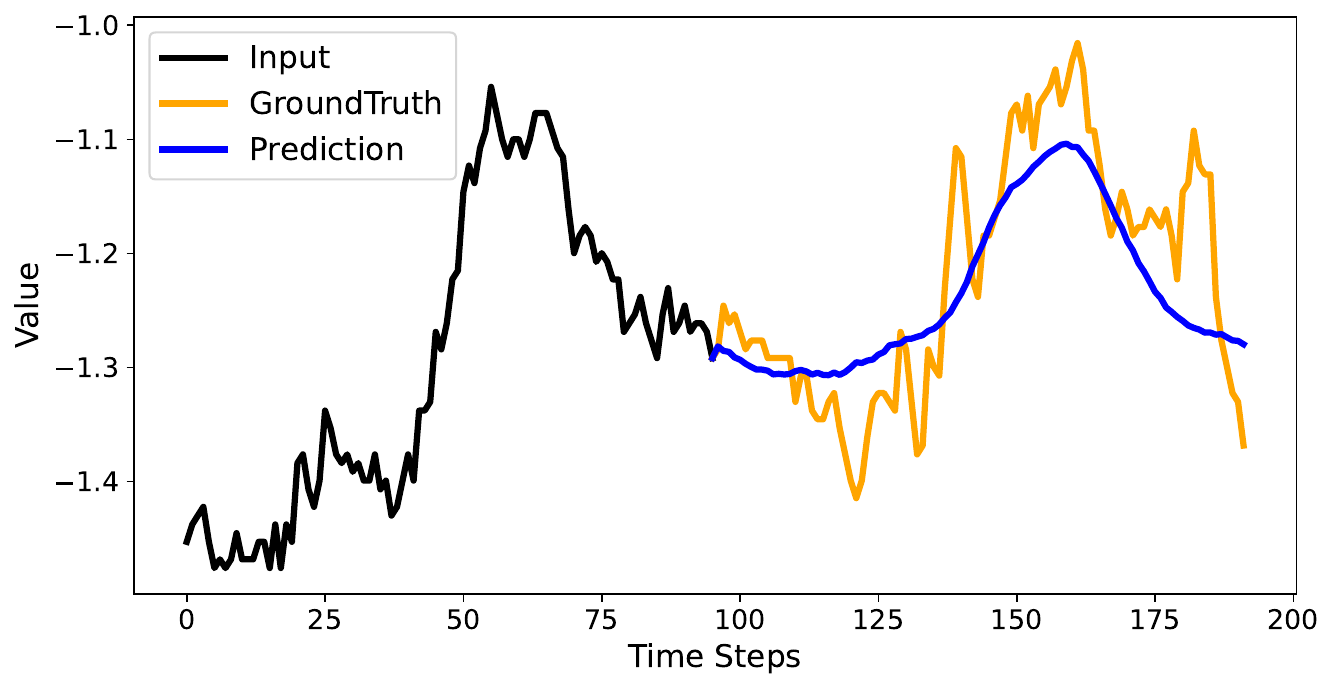}
    }
    \hspace{-4mm}
    \subfigure[GTR]{
    \includegraphics[width=0.48\linewidth]{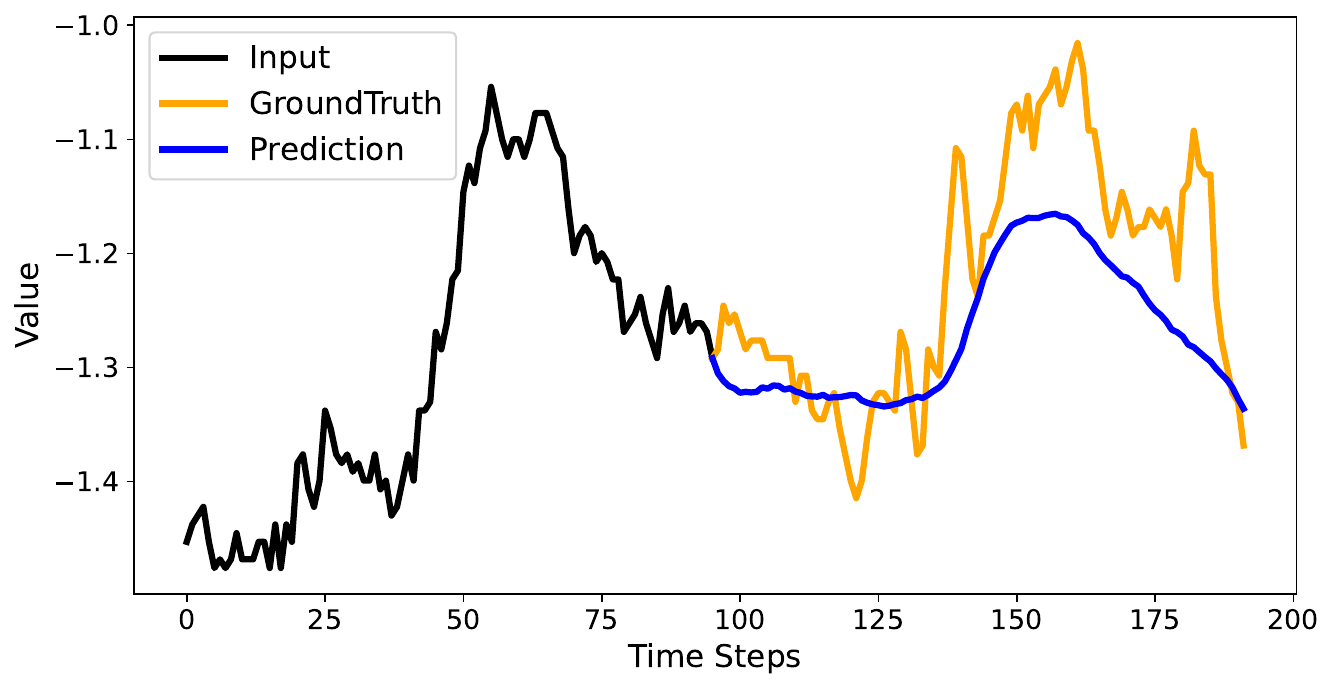}
    }\\
    \hspace{-4mm}
    \subfigure[CycleNet]{
    \includegraphics[width=0.48\linewidth]{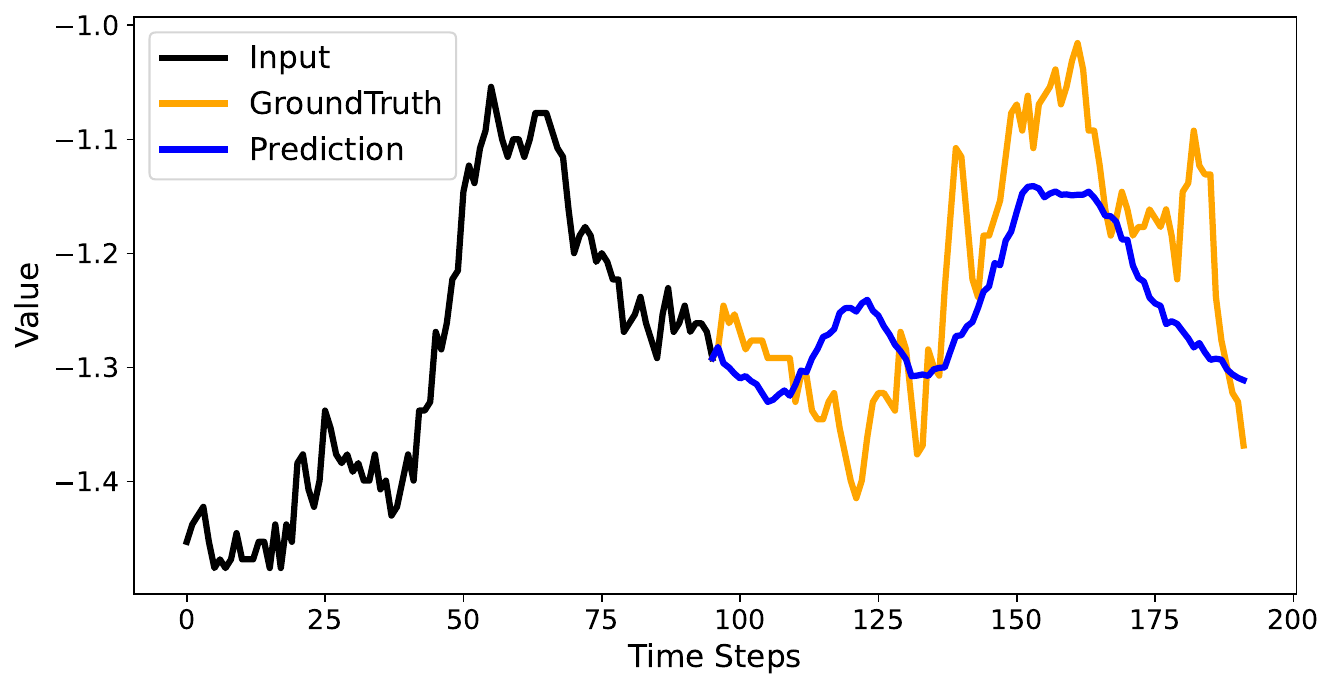}
    }
    \hspace{-4mm}
    \subfigure[TimeXer]{
    \includegraphics[width=0.48\linewidth]{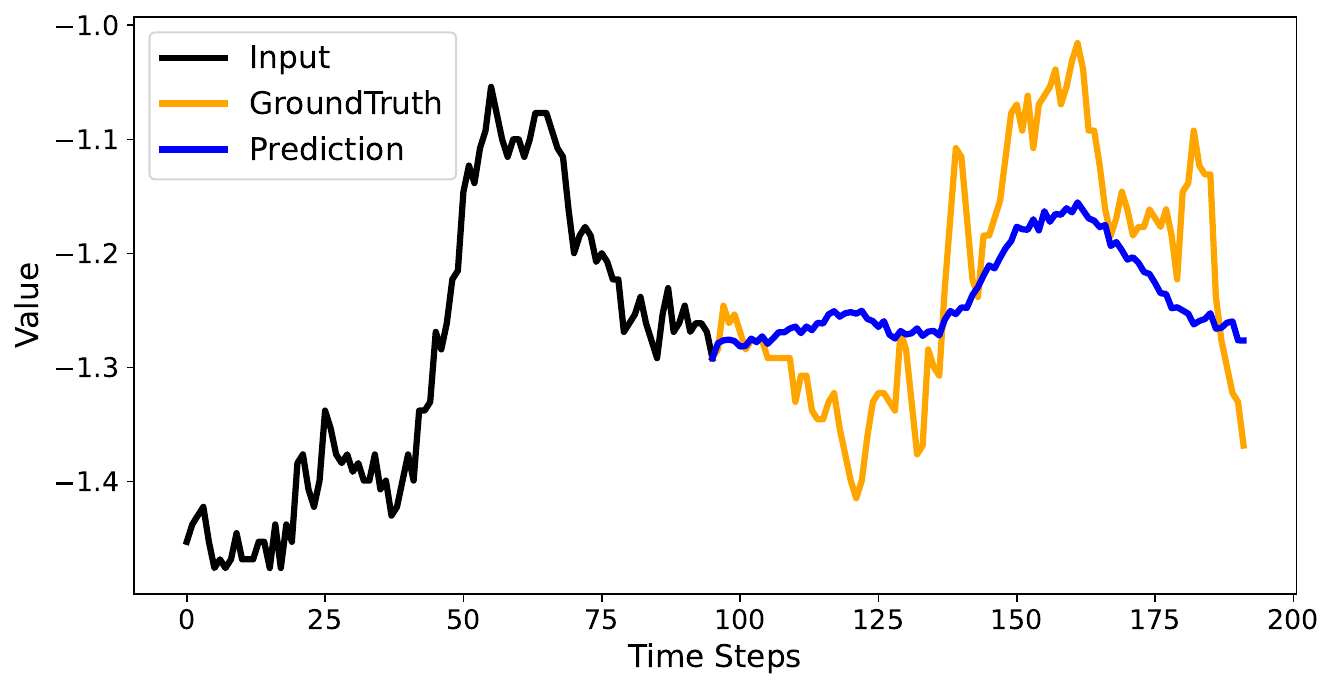}
    }\\
        \hspace{-4mm}
    \subfigure[FilterNet]{
    \includegraphics[width=0.48\linewidth]{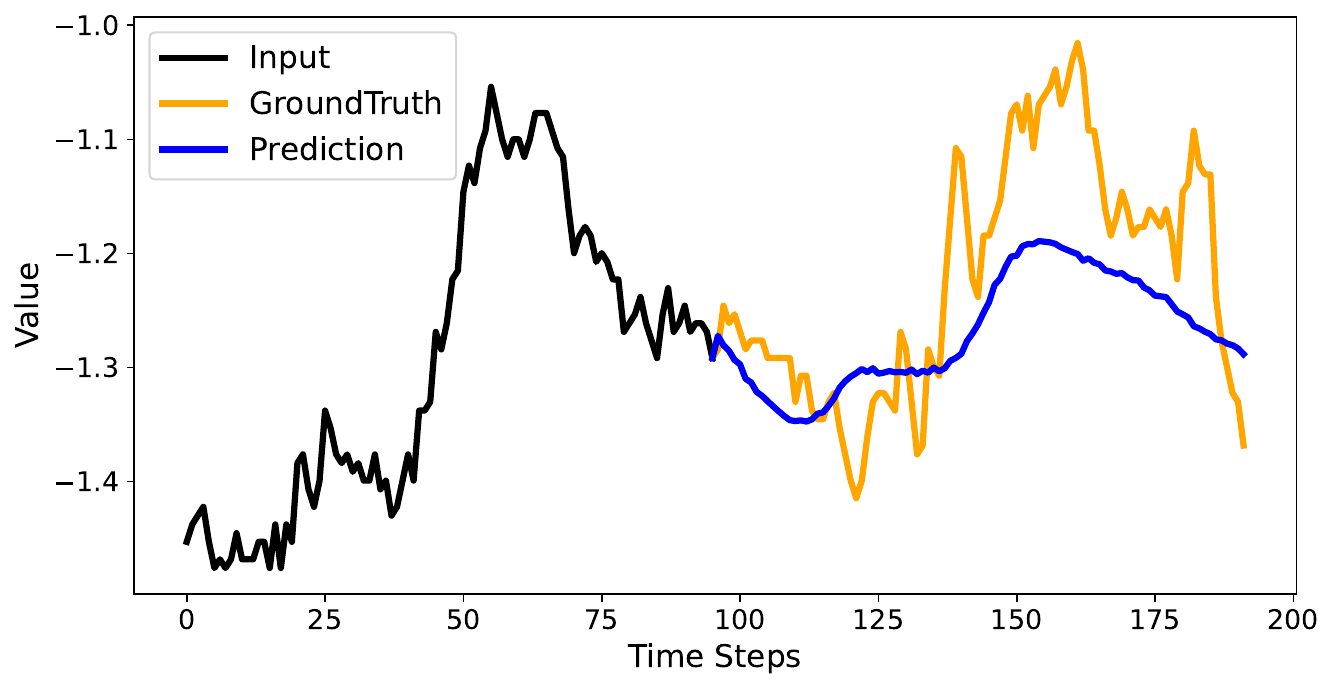}
    }
    \hspace{-4mm}
    \subfigure[iTransformer]{
    \includegraphics[width=0.48\linewidth]{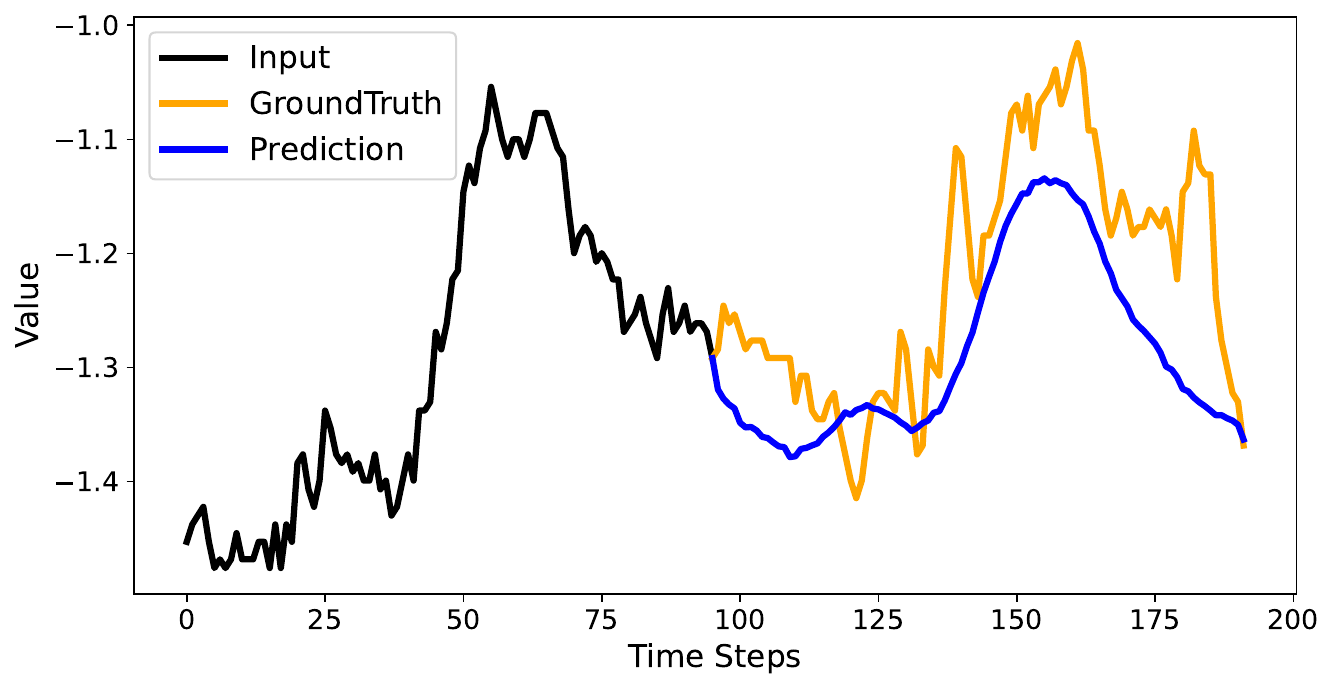}
    }
    \caption{Visualization of forecasting results on the ETTm1 dataset with lookback window $T=96$ and prediction length $\tau=96$.}
    \label{fig:forecasting_ETTm1_96_96}
\end{figure}

%% file: tables/dataset.tex
\begin{table}[h]
\centering
\caption{Detailed descriptions of the datasets. \textit{Dim} indicates the number of variates in each dataset. \textit{Prediction Length} refers to the number of future time steps to be predicted. \textit{Dataset Size} represents the total number of time points in the (Train, Validation, Test) splits, respectively. \textit{Frequency} indicates the sampling interval of the time series. \textit{Cycle Length} denotes the length of the global cycle.}
\vspace{1mm}
\label{tab:datasets}
\scalebox{0.81}{
\begin{tabular}{l c c c c c c}
\toprule[2pt]
Dataset & Dim & Prediction Length & Dataset Size & Frequency &Cycle Length & Domain \\
\midrule
ETTh1, ETTh2 & 7 & \{96, 192, 336, 720\} & (8545, 2881, 2881) & Hourly  &24   & Electricity \\
ETTm1, ETTm2 & 7 & \{96, 192, 336, 720\} & (34465, 11521, 11521) & 15min   &96  & Electricity \\
Weather & 21 & \{96, 192, 336, 720\} & (36792, 5271, 10540) & 10min   &144  & Weather \\
Electricity & 321 & \{96, 192, 336, 720\} & (18317, 2633, 5261) & Hourly   &168  & Electricity \\
Traffic & 862 & \{96, 192, 336, 720\} & (12185, 1757, 3509) & Hourly  &168   & Transportation \\
Solar-Energy & 137 & \{96, 192, 336, 720\} & (36601, 5161, 10417) & 10min   &144  & Energy \\
PEMS03 & 358 & \{12, 24, 48, 96\} & (15617, 5135, 5135) & 5min   &288  & Transportation \\
PEMS04 & 307 & \{12, 24, 48, 96\} & (10172, 3375, 3375) & 5min   &288  & Transportation \\
PEMS07 & 883 & \{12, 24, 48, 96\} & (16911, 5622, 5622) & 5min   &288  & Transportation \\
PEMS08 & 170 & \{12, 24, 48, 96\} & (10690, 3548, 3548) & 5min   &288  & Transportation \\
\bottomrule[2pt]
\end{tabular}
}
\end{table}

%% file: tables/input_96_appendix.tex
\begin{table*}[!b]
    \centering
    \caption{Performance comparison with prediction lengths $\tau \in \{12, 24, 48, 96\}$ for PEMS and $\tau \in \{96, 192, 336, 720\}$ for others and fixed lookback window $T=96$. Results are averaged over four prediction lengths. Highlight the best results in \textcolor{red}{\textbf{red}} and the second-best results in \textcolor{blue}{\textbf{blue}}.}
    \scalebox{0.49}{
    \begin{tabular}{l | c|c c|c c|c c| c c|c c |c c |c c |c c |c c |c c |c c  }
    \toprule[2pt]
       \multicolumn{2}{c|}{Models}  &\multicolumn{2}{c|}{\textbf{Paloma}} &\multicolumn{2}{c|}{DistDF} &\multicolumn{2}{c|}{GTR} &\multicolumn{2}{c|}{RAFT} &\multicolumn{2}{c|}{FreDF}  &\multicolumn{2}{c|}{TimeXer}  &\multicolumn{2}{c|}{CycleNet} &\multicolumn{2}{c|}{FilterNet} &\multicolumn{2}{c|}{iTransformer} &\multicolumn{2}{c|}{PatchTST}  &\multicolumn{2}{c}{DLinear}  \\

        \multicolumn{2}{c|}{}  &\multicolumn{2}{c|}{[\textbf{ours}]} &\multicolumn{2}{c|}{[ICLR'26]} &\multicolumn{2}{c|}{[ICLR'26]} &\multicolumn{2}{c|}{[ICML'25]} &\multicolumn{2}{c|}{[ICLR'25]}  &\multicolumn{2}{c|}{[NeurIPS'24]} &\multicolumn{2}{c|}{[NeurIPS'24]} &\multicolumn{2}{c|}{[NeurIPS'24]} &\multicolumn{2}{c|}{[ICLR'24]} &\multicolumn{2}{c|}{[ICLR'23]} &\multicolumn{2}{c}{[AAAI'23]} \\
       
       \cmidrule(r){1-2}\cmidrule(lr){3-4}\cmidrule(lr){5-6}\cmidrule(lr){7-8}\cmidrule(lr){9-10}\cmidrule(lr){11-12}\cmidrule(lr){13-14}\cmidrule(lr){15-16}\cmidrule(lr){17-18}\cmidrule(lr){19-20}\cmidrule(lr){21-22}\cmidrule(lr){23-24}
       
       \multicolumn{2}{c|}{Metrics} &MSE &MAE &MSE &MAE &MSE &MAE &MSE &MAE &MSE &MAE &MSE &MAE &MSE &MAE &MSE &MAE &MSE &MAE &MSE &MAE &MSE &MAE   \\
    
         \midrule[1pt]
         \multirow{5}*{\rotatebox{90}{ETTm1}}
         & 96  &\textcolor{red}{\textbf{0.295}}   &\textcolor{red}{\textbf{0.332}}   &0.316 &0.357  &\textcolor{blue}{\textbf{0.305}} &\textcolor{blue}{\textbf{0.349}} &0.325 &0.369  &0.324 &0.362  &0.318 &0.356 &0.319 &0.360 &0.321 &0.361   &0.334  &0.368  &0.329  &0.367  &0.345  &0.372 \\  
         
         & 192  &\textcolor{red}{\textbf{0.343}}   &\textcolor{red}{\textbf{0.358}}  &0.358 &0.380   &\textcolor{blue}{\textbf{0.349}} &\textcolor{blue}{\textbf{0.375}} &0.364 &0.388  &0.373 &0.385 &0.362 &0.383 &0.360 &0.381  &0.367 &0.387   &0.377  &0.391  &0.367  &0.385  &0.380  &0.389 \\  
         
         & 336  &\textcolor{red}{\textbf{0.376}}   &\textcolor{red}{\textbf{0.381}}  &0.392 &0.404  &\textcolor{blue}{\textbf{0.380}} &\textcolor{blue}{\textbf{0.398}} &0.391 &0.407 &0.402 &0.404  &0.395 &0.407 &0.389 &0.403 &0.401 &0.409  &0.426  &0.420  &0.399  &0.410  &0.413  &0.413 \\  
         
         & 720  &\textcolor{blue}{\textbf{0.439}}   &\textcolor{red}{\textbf{0.417}}  &0.448 &0.437  &\textcolor{red}{\textbf{0.435}} &\textcolor{blue}{\textbf{0.436}} &0.444 &0.438 &0.469 &0.444  &0.452 &0.441 &0.447 &0.441 &0.477 &0.448  &0.491  &0.459  &0.454  &0.439  &0.474  &0.453 \\  
         
         \cmidrule(lr){2-24}
         & Avg  &\textcolor{red}{\textbf{0.363}}   &\textcolor{red}{\textbf{0.372}}  &0.378 &0.394  &\textcolor{blue}{\textbf{0.367}} &\textcolor{blue}{\textbf{0.389}} &0.381 &0.400  &0.392 &0.399  &0.382 &0.397 &0.379 &0.396 &0.392 &0.401  &0.407  &0.410  &0.387  &0.400  &0.403  &0.407 \\  
         \midrule[1pt]

        \multirow{5}*{\rotatebox{90}{ETTm2}}
         & 96  &\textcolor{blue}{\textbf{0.164}}   &\textcolor{red}{\textbf{0.242}}  &0.174 &0.256  &0.168 &0.249 &0.176 &0.264  &0.173 &0.252  &0.171 &0.256 &\textcolor{red}{\textbf{0.163}} &\textcolor{blue}{\textbf{0.246}}  &0.175 &0.258  &0.180 &0.264 &0.175 &0.259 &0.193 &0.292 \\  
         
         & 192  &\textcolor{red}{\textbf{0.229}}   &\textcolor{red}{\textbf{0.284}}  &0.239 &0.298  &\textcolor{blue}{\textbf{0.232}} &0.292 &0.241 &0.306  &0.241 &0.298  &0.237 &0.299 &\textcolor{red}{\textbf{0.229}} &\textcolor{blue}{\textbf{0.290}}  &0.240 &0.301  &0.250 &0.309 &0.241 &0.302 &0.284 &0.362 \\  
         
         & 336  &0.289   &\textcolor{red}{\textbf{0.323}}  &0.300 &0.338  &\textcolor{blue}{\textbf{0.287}} &0.330 &0.302 &0.345  &0.298 &0.334  &0.296 &0.338 &\textcolor{red}{\textbf{0.284}} &\textcolor{blue}{\textbf{0.327}}  &0.311 &0.347  &0.311 &0.348 &0.305 &0.343 &0.369 &0.427 \\  
         
         & 720  &\textcolor{blue}{\textbf{0.387}}   &\textcolor{red}{\textbf{0.383}}  &0.397 &0.394  &\textcolor{red}{\textbf{0.386}} &\textcolor{blue}{\textbf{0.390}} &0.404 &0.405  &0.398 &0.393  &0.392 &0.394 &0.389 &0.391  &0.414 &0.405 &0.412 &0.407 &0.402 &0.400 &0.554 &0.522 \\  
         
         \cmidrule(lr){2-24}
         & Avg  &\textcolor{blue}{\textbf{0.267}}   &\textcolor{red}{\textbf{0.308}}  &0.277 &0.321  &0.268  &0.315  &0.281  &0.330  &0.278 &0.319  &0.274 &0.322 &\textcolor{red}{\textbf{0.266}} &\textcolor{blue}{\textbf{0.314}}  &0.285 &0.328 &0.288 &0.332 &0.281 &0.326 &0.350 &0.401 \\ 
         \midrule[1pt]

         \multirow{5}*{\rotatebox{90}{ETTh1}}
         & 96  &\textcolor{red}{\textbf{0.360}}   &\textcolor{red}{\textbf{0.378}}   &0.373 &0.393  &\textcolor{blue}{\textbf{0.367}} &\textcolor{blue}{\textbf{0.391}} &0.376 &0.401  &0.382 &0.400  &0.382 &0.403 &0.375 &0.395  &0.382 &0.402  &0.386 &0.405 &0.414 &0.419 &0.386 &0.400 \\  
         
         & 192  &\textcolor{red}{\textbf{0.414}}   &\textcolor{red}{\textbf{0.409}}   &0.428 &0.425  &0.420 &\textcolor{blue}{\textbf{0.421}} &\textcolor{blue}{\textbf{0.417}} &0.424  &0.430 &0.427  &0.429 &0.435 &0.436 &0.428  &0.430 &0.429  &0.441 &0.436 &0.460 &0.445 &0.437 &0.432 \\  
         
         & 336  &\textcolor{red}{\textbf{0.452}}   &\textcolor{red}{\textbf{0.428}}   &0.466 &0.445  &0.476 &0.447 &\textcolor{blue}{\textbf{0.454}} &\textcolor{blue}{\textbf{0.441}}  &0.474 &0.451  &0.468 &0.448 &0.496 &0.455  &0.472 &0.451  &0.487 &0.458 &0.501 &0.466 &0.481 &0.459 \\  
         
         & 720  &\textcolor{blue}{\textbf{0.458}}   &\textcolor{red}{\textbf{0.449}}   &\textcolor{red}{\textbf{0.453}} &\textcolor{blue}{\textbf{0.453}}  &0.493 &0.476 &0.465 &0.465 &0.463 &0.462  &0.469 &0.461 &0.520 &0.484  &0.481 &0.473  &0.503 &0.491 &0.500 &0.488 &0.519 &0.516 \\  
         
         \cmidrule(lr){2-24}
         & Avg  &\textcolor{red}{\textbf{0.421}}   &\textcolor{red}{\textbf{0.416}}  &0.430 &\textcolor{blue}{\textbf{0.429}}  &0.439 &0.434 &\textcolor{blue}{\textbf{0.428}} &0.433  &0.437 &0.435 &0.437 &0.437 &0.457 &0.441  &0.441 &0.439  &0.454 &0.448 &0.469 &0.455 &0.456 &0.452 \\ 
         \midrule[1pt]

         \multirow{5}*{\rotatebox{90}{ETTh2}}
         & 96  &\textcolor{red}{\textbf{0.278}}   &\textcolor{red}{\textbf{0.325}}  &0.287 &\textcolor{blue}{\textbf{0.336}}  &0.290 &0.342 &0.289 &0.342  &0.289 &0.337 &\textcolor{blue}{\textbf{0.286}} &0.338 &0.298 &0.344  &0.293 &0.343  &0.297 &0.349 &0.302 &0.348 &0.333 &0.387 \\  
         
         & 192  &\textcolor{red}{\textbf{0.353}}   &\textcolor{red}{\textbf{0.376}}  &\textcolor{blue}{\textbf{0.358}} &\textcolor{blue}{\textbf{0.381}}  & 0.362 &0.389 &0.382 &0.400  &0.363 &0.385 &0.363 &0.389 &0.372 &0.396  &0.374 &0.396  &0.380 &0.400 &0.388 &0.400 &0.477 &0.476 \\  
         
         & 336  &\textcolor{red}{\textbf{0.401}}   &\textcolor{red}{\textbf{0.413}}  &\textcolor{blue}{\textbf{0.408}} &0.421  &0.414 &0.429 &0.424 &0.439  &0.419 &0.426 &0.414 &\textcolor{blue}{\textbf{0.423}} &0.431 &0.439  &0.417 &0.430  &0.428 &0.432 &0.426 &0.433 &0.594 &0.541 \\  
         
         & 720  &\textcolor{blue}{\textbf{0.412}}   &\textcolor{red}{\textbf{0.429}}  &0.416 &0.435   &0.423 &0.442 &0.433 &0.457 &0.415 &0.437 &\textcolor{red}{\textbf{0.408}} &\textcolor{blue}{\textbf{0.432}} &0.450 &0.458  &0.449 &0.460  &0.427 &0.445 &0.431 &0.446 &0.831 &0.657 \\  
         
         \cmidrule(lr){2-24}
         & Avg  &\textcolor{red}{\textbf{0.361}}   &\textcolor{red}{\textbf{0.386}}  &\textcolor{blue}{\textbf{0.367}} &\textcolor{blue}{\textbf{0.393}}  &0.372 &0.400 &0.382 &0.410 &0.371 &0.396 &0.368 &0.396 &0.388 &0.409  &0.383 &0.407 &0.383 &0.407 &0.387 &0.407 &0.559 &0.515 \\ 
         \midrule[1pt]

         \multirow{5}*{\rotatebox{90}{Electricity}}
         & 96  &\textcolor{red}{\textbf{0.134}}   &\textcolor{red}{\textbf{0.222}}  &0.137 &0.235   &\textcolor{red}{\textbf{0.134}} &\textcolor{blue}{\textbf{0.229}} &0.152 &0.257  &0.144 &0.233  &0.140 &0.242 &\textcolor{blue}{\textbf{0.136}} &\textcolor{blue}{\textbf{0.229}}  &0.147 &0.245  &0.148 &0.240 &0.181 &0.270 &0.197 &0.282 \\  
         
         & 192  &\textcolor{red}{\textbf{0.151}}   &\textcolor{red}{\textbf{0.238}}  &0.159 &0.257  &\textcolor{blue}{\textbf{0.152}} &0.245 &0.156 &0.257  &0.159 &0.247  &0.157 &0.256 &\textcolor{blue}{\textbf{0.152}} &\textcolor{blue}{\textbf{0.244}}  &0.160 &0.250  &0.162 &0.253 &0.188 &0.274 &0.196 &0.285 \\ 
         
         & 336  &\textcolor{red}{\textbf{0.170}}   &\textcolor{red}{\textbf{0.256}}  &0.178 &0.272  &\textcolor{blue}{\textbf{0.171}} &0.264 &\textcolor{blue}{\textbf{0.171}} &0.272  &0.172 &\textcolor{blue}{\textbf{0.263}}  &0.176 &0.275 &\textcolor{red}{\textbf{0.170}} &0.264  &0.173 &0.267  &0.178 &0.269 &0.204 &0.293 &0.209 &0.301 \\  
         
         & 720  &\textcolor{blue}{\textbf{0.208}}   &\textcolor{red}{\textbf{0.289}}  &0.212 &0.302  &\textcolor{blue}{\textbf{0.208}} &0.300 &0.221 &0.300  &\textcolor{red}{\textbf{0.204}} &\textcolor{blue}{\textbf{0.294}}  &0.211 &0.306 &0.212 &0.299  &0.210 &0.309  &0.225 &0.317 &0.246 &0.324 &0.245 &0.333 \\  
         
         \cmidrule(lr){2-24}
         & Avg  &\textcolor{red}{\textbf{0.166}}   &\textcolor{red}{\textbf{0.251}}  &0.172 &0.267 &\textcolor{red}{\textbf{0.166}} &0.260 &0.175 &0.272  &0.170 &\textcolor{blue}{\textbf{0.259}} &0.171 &0.270 &\textcolor{blue}{\textbf{0.168}} &\textcolor{blue}{\textbf{0.259}} &0.173 &0.268 &0.178 &0.270 &0.205 &0.290 &0.212 &0.300 \\ 
         \midrule[1pt]

         \multirow{5}*{\rotatebox{90}{Traffic}}
         & 96  &0.425   &\textcolor{red}{\textbf{0.243}}   &\textcolor{red}{\textbf{0.380}}  &\textcolor{blue}{\textbf{0.262}}  &0.440 &0.263 &\textcolor{blue}{\textbf{0.388}} &0.265  &0.391 &0.265  &0.428 &0.271 &0.458 &0.296  &0.430 &0.294 &0.395 &0.268 &0.462 &0.290 &0.650 &0.396 \\
         
         & 192  &0.442   &\textcolor{red}{\textbf{0.254}}  &\textcolor{blue}{\textbf{0.407}} &0.275  &0.454 &0.274 &\textcolor{red}{\textbf{0.400}} &0.278  &0.410 &\textcolor{blue}{\textbf{0.273}}  &0.448 &0.282 &0.457 &0.294  &0.452 &0.307  &0.417 &0.276 &0.466 &0.290 &0.598 &0.370 \\  
         
         & 336  &0.462   &\textcolor{red}{\textbf{0.262}}  &0.429 &0.284  &0.472 &0.282 &\textcolor{red}{\textbf{0.411}}  &0.287  &\textcolor{blue}{\textbf{0.424}} &\textcolor{blue}{\textbf{0.280}}  &0.473 &0.289 &0.470 &0.299  &0.470 &0.316  &0.433 &0.283 &0.482 &0.300 &0.605 &0.373 \\  
         
         & 720  &0.514   &\textcolor{red}{\textbf{0.279}}  &\textcolor{red}{\textbf{0.452}} &\textcolor{blue}{\textbf{0.297}}  &0.514 &0.301 &\textcolor{blue}{\textbf{0.456}} &0.305  &0.460 &0.298  &0.516 &0.307 &0.502 &0.314  &0.498 &0.323 &0.467 &0.302 &0.514 &0.320 &0.645 &0.394 \\  
         
         \cmidrule(lr){2-24}
         & Avg  &0.461   &\textcolor{red}{\textbf{0.260}}  &\textcolor{blue}{\textbf{0.417}} &\textcolor{blue}{\textbf{0.279}} &0.470 &0.280 &\textcolor{red}{\textbf{0.414}} &0.284  &0.421 &\textcolor{blue}{\textbf{0.279}} &0.466 &0.287 &0.472 &0.301  &0.463 &0.310 &0.428 &0.282 &0.481 &0.300 &0.625 &0.383 \\ 
         \midrule[1pt] 

         \multirow{5}*{\rotatebox{90}{Weather}}
         & 96  &\textcolor{red}{\textbf{0.154}}   &\textcolor{red}{\textbf{0.190}}  &0.164 &0.209  &\textcolor{red}{\textbf{0.154}} &\textcolor{blue}{\textbf{0.200}} &0.185 &0.241  &0.164 &0.202  &\textcolor{blue}{\textbf{0.157}} &0.205 &0.158 &0.203  &0.162 &0.207  &0.174 &0.214 &0.177 &0.210 &0.196 &0.255 \\  
         
         & 192  &\textcolor{red}{\textbf{0.195}}   &\textcolor{red}{\textbf{0.238}}  &0.212 &0.252 &\textcolor{blue}{\textbf{0.202}} &\textcolor{blue}{\textbf{0.244}} &0.237 &0.287  &0.220 &0.253  &0.204 &0.247 &0.207 &0.247 &0.210 &0.250  &0.221 &0.254 &0.225 &0.250 &0.237 &0.296 \\  
         
         & 336  &\textcolor{red}{\textbf{0.249}}   &\textcolor{red}{\textbf{0.284}}  &0.270 &0.295  &\textcolor{blue}{\textbf{0.259}} &\textcolor{blue}{\textbf{0.287}} &0.290 &0.327 &0.275 &0.294  &0.261 &0.290 &0.262 &0.289  &0.265 &0.290  &0.278 &0.296 &0.278 &0.290 &0.283 &0.335 \\  
         
         & 720  &\textcolor{red}{\textbf{0.324}}   &\textcolor{red}{\textbf{0.336}}  &0.348 &0.345  &0.341 &0.342 &0.367 &0.380  &0.356 &0.347  &\textcolor{blue}{\textbf{0.340}} &0.341 &0.344 &0.344  &0.342 &\textcolor{blue}{\textbf{0.340}}  &0.358 &0.349 &0.354 &\textcolor{blue}{\textbf{0.340}} &0.345 &0.381 \\ 
         
         \cmidrule(lr){2-24} 
         & Avg  &\textcolor{red}{\textbf{0.231}}   &\textcolor{red}{\textbf{0.262}}   &0.248 &0.275  &\textcolor{blue}{\textbf{0.239}} &\textcolor{blue}{\textbf{0.268}} &0.270 &0.309  &0.254 &0.274  &0.241 &0.271 &0.243 &0.271  &0.245 &0.272  &0.258 &0.278 &0.259 &0.273 &0.265 &0.317 \\ 
         \midrule[1pt]

        \multirow{5}*{\rotatebox{90}{Solar-Energy}}
         & 96  &\textcolor{red}{\textbf{0.175}}   &\textcolor{red}{\textbf{0.224}}   &0.185   &0.237   &\textcolor{blue}{\textbf{0.176}}  &\textcolor{blue}{\textbf{0.235}} &0.278 &0.278   &0.188   &0.240   &0.215 &0.295 &0.190 &0.247  &0.205 &0.242  &0.203 &0.237 &0.234 &0.286 &0.290 &0.378 \\  
         
         & 192  &\textcolor{red}{\textbf{0.192}}   &\textcolor{red}{\textbf{0.240}}   &0.202   &0.250   &\textcolor{blue}{\textbf{0.193}} &\textcolor{blue}{\textbf{0.244}} &0.297 &0.297   &0.206   &0.255   &0.236 &0.301 &0.210 &0.266   &0.233 &0.265  &0.233 &0.261 &0.267 &0.310 &0.320 &0.398 \\  
         
         & 336  &\textcolor{blue}{\textbf{0.204}}   &\textcolor{red}{\textbf{0.235}}   &0.212   &0.265    &\textcolor{red}{\textbf{0.201}} &\textcolor{blue}{\textbf{0.250}} &0.311 &0.311   &0.217   &0.272   &0.252 &0.307 &0.217 &0.266  &0.249 &0.278  &0.248 &0.273 &0.290 &0.315 &0.353 &0.415 \\  
         
         & 720  &\textcolor{blue}{\textbf{0.215}}   &\textcolor{red}{\textbf{0.238}}   &0.230   &0.268    &\textcolor{red}{\textbf{0.205}} &\textcolor{blue}{\textbf{0.251}} &0.318 &0.325   &0.231   &0.282   &0.244 &0.305 &0.223 &0.266  &0.253 &0.281  &0.249 &0.275 &0.289 &0.317 &0.356 &0.413 \\  
         
         \cmidrule(lr){2-24}
         & Avg  &\textcolor{blue}{\textbf{0.197}}   &\textcolor{red}{\textbf{0.234}}   &0.207   &0.255    &\textcolor{red}{\textbf{0.194}} &\textcolor{blue}{\textbf{0.245}} &0.301 &0.303  &0.211   &0.262   &0.237 &0.302 &0.210 &0.261  &0.235 &0.266  &0.233 &0.262 &0.270 &0.307 &0.330 &0.401 \\ 
         \midrule[1pt]

         \multirow{5}*{\rotatebox{90}{PEMS03}}
         & 12  &\textcolor{blue}{\textbf{0.058}}   &\textcolor{blue}{\textbf{0.158}}  &0.068  &0.174   &\textcolor{red}{\textbf{0.057}} &\textcolor{red}{\textbf{0.156}} &0.071 &0.175  &0.068  &0.172  &0.070 &0.173 &0.066 &0.172  &0.071 &0.177  &0.071 &0.174 &0.099 &0.216 &0.122 &0.243 \\  
         
         & 24  &\textcolor{blue}{\textbf{0.074}}   &\textcolor{blue}{\textbf{0.175}}  &0.094   &0.205  &\textcolor{red}{\textbf{0.070}}  &\textcolor{red}{\textbf{0.172}} &0.099 &0.201  &0.096 &0.205  &0.092 &0.194 &0.089 &0.201  &0.102 &0.213  &0.093 &0.201 &0.142 &0.259 &0.201 &0.317 \\  
         
         & 48  &\textcolor{blue}{\textbf{0.101}}   &\textcolor{blue}{\textbf{0.205}}  &0.138   &0.252   &\textcolor{red}{\textbf{0.094}}  &\textcolor{red}{\textbf{0.198}}  &0.153 &0.243  &0.128  &0.240   &0.129 &0.229 &0.136 &0.247  &0.162 &0.272  &0.125 &0.236 &0.211 &0.319 &0.333 &0.425 \\  
         
         & 96  &\textcolor{blue}{\textbf{0.142}}   &\textcolor{blue}{\textbf{0.236}}  &0.191   &0.301  &\textcolor{red}{\textbf{0.127}}  &\textcolor{red}{\textbf{0.228}} &0.253 &0.301   &0.161   &0.269  &0.157 &0.261 &0.182 &0.282  &0.244 &0.340 &0.164 &0.275 &0.269 &0.370 &0.457 &0.515 \\  
         
         \cmidrule(lr){2-24}
         & Avg  &\textcolor{blue}{\textbf{0.094}}   &\textcolor{blue}{\textbf{0.194}}  &0.123   &0.233    &\textcolor{red}{\textbf{0.087}} &\textcolor{red}{\textbf{0.189}} &0.144 &0.230  &0.113   &0.219   &0.112 &0.214 &0.118 &0.226 &0.145 &0.251  &0.113 &0.222 &0.180 &0.291 &0.278 &0.375 \\ 
         \midrule[1pt]

         
         
         
         

         \multirow{5}*{\rotatebox{90}{PEMS04}}
         & 12  &\textcolor{red}{\textbf{0.065}}  &\textcolor{red}{\textbf{0.160}}   &\textcolor{blue}{\textbf{0.071}}   &0.167    &\textcolor{red}{\textbf{0.065}}  &\textcolor{blue}{\textbf{0.164}} &0.077 &0.179   &0.073   &0.174  &0.074 &0.178 &0.078 &0.186  &0.082 &0.190  &0.078 &0.183 &0.105 &0.224 &0.148 &0.272 \\  
         
         & 24  &\textcolor{red}{\textbf{0.074}}   &\textcolor{red}{\textbf{0.171}}   &0.080   &0.181     &\textcolor{blue}{\textbf{0.075}}  &\textcolor{blue}{\textbf{0.177}}  &0.091 &0.195   &0.085   &0.186   &0.087 &0.195 &0.099 &0.212  &0.110 &0.224 &0.095 &0.205 &0.153 &0.275 &0.224 &0.340 \\  
         
         & 48  &\textcolor{red}{\textbf{0.090}}   &\textcolor{red}{\textbf{0.189}}   &0.097   &0.205     &\textcolor{blue}{\textbf{0.093}}  &\textcolor{blue}{\textbf{0.197}}  &0.111 &0.218  &0.101   &0.212   &0.110 &0.214 &0.133 &0.248  &0.160 &0.276  &0.120 &0.233 &0.229 &0.339 &0.355 &0.437 \\  
         
         & 96  &\textcolor{red}{\textbf{0.110}}   &\textcolor{red}{\textbf{0.207}}   &0.120   &0.228     &\textcolor{blue}{\textbf{0.114}}  &\textcolor{blue}{\textbf{0.219}}  &0.137 &0.247   &0.125   &0.235   &0.148 &0.251 &0.167 &0.281  &0.234 &0.343  &0.150 &0.262 &0.291 &0.389 &0.452 &0.504 \\  
         
         \cmidrule(lr){2-24}
         & Avg  &\textcolor{red}{\textbf{0.085}}   &\textcolor{red}{\textbf{0.182}}   &0.092   &0.195     &\textcolor{blue}{\textbf{0.087}}  &\textcolor{blue}{\textbf{0.189}} &0.104 &0.210   &0.096   &0.202   &0.105 &0.209 &0.119 &0.232  &0.146 &0.258  &0.111 &0.221 &0.195 &0.307 &0.295 &0.388 \\ 
         \midrule[1pt]

         \multirow{5}*{\rotatebox{90}{PEMS07}}
         & 12  &\textcolor{red}{\textbf{0.051}}   &\textcolor{red}{\textbf{0.138}}    &\textcolor{blue}{\textbf{0.056}}   &0.149    &\textcolor{red}{\textbf{0.051}}  &\textcolor{blue}{\textbf{0.142}} &0.062 &0.159    &0.060   &0.156  &0.057 &0.152 &0.062 &0.162  &0.064 &0.163  &0.067 &0.165 &0.095 &0.207 &0.115 &0.242 \\  
         
         & 24  &\textcolor{blue}{\textbf{0.064}}   &\textcolor{red}{\textbf{0.152}}   &0.073   &0.168     &\textcolor{red}{\textbf{0.063}} &\textcolor{blue}{\textbf{0.156}}  &0.078 &0.178   &0.077   &0.170  &0.079 &0.179 &0.086 &0.192  &0.093 &0.200  &0.088 &0.190 &0.150 &0.262 &0.210 &0.329 \\  
         
         & 48  &\textcolor{blue}{\textbf{0.084}}   &\textcolor{red}{\textbf{0.172}}   &0.089   &0.183     &\textcolor{red}{\textbf{0.082}} &\textcolor{blue}{\textbf{0.178}} &0.102 &0.203  &0.093   &0.188   &0.099 &0.191 &0.128 &0.234  &0.137 &0.248  &0.110 &0.215 &0.253 &0.340 &0.398 &0.458 \\  
         
         & 96  &\textcolor{blue}{\textbf{0.112}}   &\textcolor{red}{\textbf{0.194}}   &0.119   &0.209     &\textcolor{red}{\textbf{0.110}} &\textcolor{blue}{\textbf{0.202}} &0.135 &0.231    &0.124   &0.215  &0.107 &0.205 &0.176 &0.268  &0.198 &0.306  &0.139 &0.245 &0.346 &0.404 &0.594 &0.553 \\  
         
         \cmidrule(lr){2-24}
         & Avg  &\textcolor{blue}{\textbf{0.078}}   &\textcolor{red}{\textbf{0.164}}    &0.084   &0.177     &\textcolor{red}{\textbf{0.076}} &\textcolor{blue}{\textbf{0.169}}  &0.094 &0.193    &0.089   &0.182  &0.085 &0.182 &0.113 &0.214  &0.123 &0.229  &0.101 &0.204 &0.211 &0.303 &0.329 &0.396 \\ 
         \midrule[1pt]

        \multirow{5}*{\rotatebox{90}{PEMS08}}
         & 12  &\textcolor{red}{\textbf{0.069}}   &\textcolor{red}{\textbf{0.163}}  &0.076 &0.177  &\textcolor{blue}{\textbf{0.071}}  &\textcolor{blue}{\textbf{0.168}}  &0.078 &0.181  &0.080 &0.182  &0.075 &0.176 &0.082 &0.185  &0.080 &0.182 &0.079 &0.182 &0.168 &0.232 &0.154 &0.276 \\  
         
         & 24  &\textcolor{red}{\textbf{0.086}}   &\textcolor{red}{\textbf{0.181}}  &0.107 &0.210  &\textcolor{blue}{\textbf{0.095}}  &\textcolor{blue}{\textbf{0.192}}  &0.102 &0.205  &0.118 &0.220  &0.102 &0.201 &0.117 &0.226  &0.114 &0.219  &0.115 &0.219 &0.224 &0.281 &0.248 &0.353 \\  
         
         & 48  &\textcolor{red}{\textbf{0.118}}   &\textcolor{red}{\textbf{0.203}}  &0.171  &0.265  &\textcolor{blue}{\textbf{0.148}}  &\textcolor{blue}{\textbf{0.234}}  &\textcolor{blue}{\textbf{0.148}} &0.244  &0.206   &0.293    &0.158 &0.248 &0.169 &0.268  &0.184 &0.284  &0.186 &0.235 &0.321 &0.354 &0.440 &0.470  \\  

         & 96  &\textcolor{red}{\textbf{0.171}}   &\textcolor{red}{\textbf{0.229}}   &0.250   &0.325  &0.256 &0.293 &0.277 &0.307  &0.289  &0.325  &0.366 &0.377 &0.233 &0.306  &0.309 &0.356  &\textcolor{blue}{\textbf{0.221}}  &\textcolor{blue}{\textbf{0.267}}  &0.408 &0.417 &0.674 &0.565 \\  
         
         \cmidrule(lr){2-24}
         & Avg  &\textcolor{red}{\textbf{0.111}}  &\textcolor{red}{\textbf{0.194}}  &0.151  &0.244  &\textcolor{blue}{\textbf{0.142}} &\textcolor{blue}{\textbf{0.222}}  &0.151 &0.234 &0.173  &0.255   &0.175 &0.250 &0.150 &0.246 &0.172 &0.260  &0.150 &0.226 &0.280 &0.321 &0.379 &0.416 \\ 
         \midrule[1pt]

         
         

         

        \multicolumn{2}{c|}{$1^{st}$ Count} &35  &55  &3  &0  &19  &5  &3  &0  &1  &0  &1  &0   &5  &0  &0  &0 &0  &0     &0  &0    &0  &0  \\
         \bottomrule[2pt]
    \end{tabular}
    }
    \label{tab:input_96_appendix}
\end{table*}

%% file: tables/input_96_additional.tex
\begin{table*}[!b]
    \centering
    \caption{Additional performance comparison with a fixed lookback window $T=96$. 
    The prediction lengths are set as $\tau \in \{12, 24, 48, 96\}$ for PEMS datasets for short-term forecasting, and $\tau \in \{96, 192, 336, 720\}$ for the others for long-term forecasting. 
    The best and second-best results are highlighted in \textcolor{red}{\textbf{red}} and \textcolor{blue}{\textbf{blue}}, respectively. 
    The reported results with standard deviation of Paloma are averaged from 5 runs (with different random seeds of {2024, 2025, 2026, 2027, 2028}).}
    \scalebox{0.66}{
    \begin{tabular}{l | c|c c|c c|c c| c c|c c |c c |c c }
    \toprule[2pt]
       \multicolumn{2}{c|}{Models}  &\multicolumn{2}{c|}{\textbf{Paloma}} &\multicolumn{2}{c|}{OLinear} &\multicolumn{2}{c|}{TimeMixer} &\multicolumn{2}{c|}{FITS} &\multicolumn{2}{c|}{Fredformer}  &\multicolumn{2}{c|}{FreTS}  &\multicolumn{2}{c}{TimesNet}  \\

        \multicolumn{2}{c|}{}  &\multicolumn{2}{c|}{[\textbf{ours}]} &\multicolumn{2}{c|}{[NeurIPS'25]} &\multicolumn{2}{c|}{[ICLR'24]} &\multicolumn{2}{c|}{[ICLR'24]} &\multicolumn{2}{c|}{[KDD'24]}  &\multicolumn{2}{c|}{[NeurIPS'23]}   &\multicolumn{2}{c}{[NeurIPS'23]}  \\
       
       \cmidrule(r){1-2}\cmidrule(lr){3-4}\cmidrule(lr){5-6}\cmidrule(lr){7-8}\cmidrule(lr){9-10}\cmidrule(lr){11-12}\cmidrule(lr){13-14}\cmidrule(lr){15-16}
       
       \multicolumn{2}{c|}{Metrics} &MSE &MAE &MSE &MAE &MSE &MAE &MSE &MAE &MSE &MAE &MSE &MAE &MSE &MAE   \\
    
         \midrule[1pt]
         \multirow{5}*{\rotatebox{90}{ETTm1}}
         & 96  &\textcolor{red}{\textbf{0.295}} $\pm$ 0.001    &\textcolor{red}{\textbf{0.332}} $\pm$ 0.001   &\textcolor{blue}{\textbf{0.302}}  &\textcolor{blue}{\textbf{0.334}} &0.320 &0.357 &0.353 &0.375  &0.326 &0.361  &0.339 &0.374  &0.338 &0.379 \\  
         
         & 192  &\textcolor{red}{\textbf{0.343}} $\pm$ 0.001   &\textcolor{red}{\textbf{0.358}} $\pm$ 0.001  &\textcolor{blue}{\textbf{0.357}}  &\textcolor{blue}{\textbf{0.363}} &0.361 &0.381 &0.486 &0.445  &0.363 &0.380  &0.382 &0.397  &0.389 &0.400 \\  
         
         & 336  &\textcolor{red}{\textbf{0.376}} $\pm$ 0.001   &\textcolor{red}{\textbf{0.381}} $\pm$ 0.001  &\textcolor{blue}{\textbf{0.387}}  &\textcolor{blue}{\textbf{0.385}} &0.390 &0.404 &0.531 &0.475  &0.395 &0.403  &0.421 &0.426  &0.429 &0.428 \\  
         
         & 720  &\textcolor{red}{\textbf{0.439}} $\pm$ 0.001   &\textcolor{red}{\textbf{0.417}} $\pm$ 0.002  &\textcolor{blue}{\textbf{0.452}} &\textcolor{blue}{\textbf{0.426}} &0.454 &0.441 &0.600 &0.513  &0.453 &0.438  &0.485 &0.462  &0.495 &0.464 \\  
         
         \cmidrule(lr){2-16}
         & Avg  &\textcolor{red}{\textbf{0.363}} $\pm$ 0.001   &\textcolor{red}{\textbf{0.372}} $\pm$ 0.001   &\textcolor{blue}{\textbf{0.374}}  &\textcolor{blue}{\textbf{0.377}}  &0.381 &0.395 &0.493 &0.452  &0.384 &0.395  &0.407 &0.415 &0.413 &0.418 \\  
         \midrule[1pt]

        \multirow{5}*{\rotatebox{90}{ETTm2}}
         & 96  &\textcolor{red}{\textbf{0.164}} $\pm$ 0.001   &\textcolor{red}{\textbf{0.242}} $\pm$ 0.001   &\textcolor{blue}{\textbf{0.169}}  &\textcolor{blue}{\textbf{0.249}}  &0.175 &0.258  &0.182 &0.266  &0.177 &0.259  &0.190 &0.282 &0.185 &0.264 \\  
         
         & 192  &\textcolor{red}{\textbf{0.229}} $\pm$ 0.002   &\textcolor{red}{\textbf{0.284}} $\pm$ 0.001  &\textcolor{blue}{\textbf{0.232}}  &\textcolor{blue}{\textbf{0.290}} &0.237 &0.299  &0.253 &0.312  &0.243 &0.301  &0.260 &0.329 &0.254 &0.307 \\  
         
         & 336  &\textcolor{red}{\textbf{0.289}} $\pm$ 0.001   &\textcolor{red}{\textbf{0.323}} $\pm$ 0.003  &\textcolor{blue}{\textbf{0.291}}  &\textcolor{blue}{\textbf{0.328}} &0.298 &0.340  &0.313 &0.349  &0.302 &0.340  &0.373 &0.405 &0.314 &0.345 \\ 
         
         & 720  &\textcolor{red}{\textbf{0.387}} $\pm$ 0.001   &\textcolor{red}{\textbf{0.383}} $\pm$ 0.001  &\textcolor{blue}{\textbf{0.389}}  &\textcolor{blue}{\textbf{0.387}} &0.391 &0.396  &0.416 &0.406  &0.397 &0.396  &0.517 &0.499 &0.434 &0.413 \\  
         
         \cmidrule(lr){2-16}
         & Avg  &\textcolor{red}{\textbf{0.267}} $\pm$ 0.001   & \textcolor{red}{\textbf{0.308}} $\pm$ 0.001  &\textcolor{blue}{\textbf{0.270}}  &\textcolor{blue}{\textbf{0.313}} &0.275 &0.323 &0.291 &0.333  &0.279 &0.324  &0.335 &0.379 &0.297 &0.332 \\ 
         \midrule[1pt]

         \multirow{5}*{\rotatebox{90}{ETTh1}}
         & 96  &\textcolor{red}{\textbf{0.360}} $\pm$ 0.003   &\textcolor{red}{\textbf{0.378}} $\pm$ 0.002   &\textcolor{red}{\textbf{0.360}} &\textcolor{blue}{\textbf{0.382}} &0.375 &0.400 &0.385 &0.394  &\textcolor{blue}{\textbf{0.373}} &0.392  &0.399 &0.412 &0.422 &0.433 \\ 
         
         & 192  &\textcolor{red}{\textbf{0.414}} $\pm$ 0.004   &\textcolor{red}{\textbf{0.409}} $\pm$ 0.003  &\textcolor{blue}{\textbf{0.416}}  &\textcolor{blue}{\textbf{0.414}} &0.429 &0.421 &0.434 &0.422  &0.433 &0.420  &0.453 &0.443 &0.465 &0.457 \\  
         
         & 336  &\textcolor{red}{\textbf{0.452}} $\pm$ 0.002   &\textcolor{red}{\textbf{0.428}} $\pm$ 0.005  &\textcolor{blue}{\textbf{0.457}}  &\textcolor{blue}{\textbf{0.438}}  &0.484 &0.458 &0.476 &0.444  &0.470 &0.437  &0.503 &0.475 &0.492 &0.470 \\  
         
         & 720  &\textcolor{red}{\textbf{0.458}} $\pm$ 0.006   &\textcolor{red}{\textbf{0.449}} $\pm$ 0.009  &\textcolor{blue}{\textbf{0.463}} &0.462 &0.498 &0.482 &0.465 &0.462  &0.467 &\textcolor{blue}{\textbf{0.456}}  &0.596 &0.565 &0.532 &0.502 \\  
         
         \cmidrule(lr){2-16}
         & Avg  &\textcolor{red}{\textbf{0.421}} $\pm$ 0.004   &\textcolor{red}{\textbf{0.416}} $\pm$ 0.005   &\textcolor{blue}{\textbf{0.424}}  &\textcolor{blue}{\textbf{0.424}} &0.447 &0.440 &0.440 &0.431  &0.435 &0.426  &0.488 &0.474 &0.478 &0.466 \\ 
         \midrule[1pt]

         \multirow{5}*{\rotatebox{90}{ETTh2}}
         & 96  &\textcolor{red}{\textbf{0.278}} $\pm$ 0.002    &\textcolor{red}{\textbf{0.325}} $\pm$ 0.003  &\textcolor{blue}{\textbf{0.284}}  &\textcolor{blue}{\textbf{0.329}} &0.289 &0.341 &0.292 &0.340  &0.293 &0.342  &0.350 &0.403 &0.320 &0.364 \\  
         
         & 192  &\textcolor{red}{\textbf{0.353}} $\pm$ 0.001   &\textcolor{red}{\textbf{0.376}} $\pm$ 0.002  &\textcolor{blue}{\textbf{0.360}}  &\textcolor{blue}{\textbf{0.379}} &0.372 &0.392 &0.377 &0.391  &0.371 &0.389  &0.472 &0.475 &0.409 &0.417 \\  
         
         & 336  &0.401 $\pm$ 0.003   &\textcolor{blue}{\textbf{0.413}} $\pm$ 0.002  &0.409 &0.415 &\textcolor{blue}{\textbf{0.386}} &0.414 &0.416 &0.425  &\textcolor{red}{\textbf{0.382}} &\textcolor{red}{\textbf{0.409}}  &0.564 &0.528 &0.449 &0.451 \\  
         
         & 720  &\textcolor{red}{\textbf{0.412}} $\pm$ 0.003   &\textcolor{red}{\textbf{0.429}} $\pm$ 0.004  &\textcolor{blue}{\textbf{0.415}} &\textcolor{blue}{\textbf{0.431}} &\textcolor{red}{\textbf{0.412}} &0.434 &0.418 &0.437  &\textcolor{blue}{\textbf{0.415}} &0.434  &0.815 &0.654 &0.473 &0.474 \\  
         
         \cmidrule(lr){2-16}
         & Avg  &\textcolor{red}{\textbf{0.361}} $\pm$ 0.002   &\textcolor{red}{\textbf{0.386}} $\pm$ 0.003  &0.367 &\textcolor{blue}{\textbf{0.388}} &\textcolor{blue}{\textbf{0.365}} &0.395 &0.376 &0.398  &\textcolor{blue}{\textbf{0.365}} &0.393  &0.550 &0.515 &0.413 &0.426 \\ 
         \midrule[1pt]

         \multirow{5}*{\rotatebox{90}{Electricity}}
         & 96  &\textcolor{blue}{\textbf{0.134}} $\pm$ 0.001   &\textcolor{blue}{\textbf{0.222}} $\pm$ 0.001   &\textcolor{red}{\textbf{0.131}} &\textcolor{red}{\textbf{0.221}} &0.153 &0.247  &0.198 &0.274  &0.147 &0.241  &0.189 &0.277 &0.171 &0.273 \\  
         
         & 192  &\textcolor{blue}{\textbf{0.151}} $\pm$ 0.001   &\textcolor{red}{\textbf{0.238}} $\pm$ 0.001  &\textcolor{red}{\textbf{0.150}} &\textcolor{red}{\textbf{0.238}} &0.166 &\textcolor{blue}{\textbf{0.256}}  &0.363 &0.422  &0.165 &0.258  &0.193 &0.282 &0.188 &0.289 \\  
         
         & 336  &\textcolor{blue}{\textbf{0.170}} $\pm$ 0.001   &\textcolor{blue}{\textbf{0.256}} $\pm$ 0.001  &\textcolor{red}{\textbf{0.165}} &\textcolor{red}{\textbf{0.254}} &0.185 &0.277  &0.444 &0.490  &0.177 &0.273  &0.207 &0.296 &0.208 &0.304 \\  
         
         & 720  &\textcolor{blue}{\textbf{0.208}} $\pm$ 0.001   &\textcolor{blue}{\textbf{0.289}} $\pm$ 0.002  &\textcolor{red}{\textbf{0.191}} &\textcolor{red}{\textbf{0.279}} &0.225 &0.310  &0.532 &0.551  &0.213 &0.304  &0.245 &0.332 &0.289 &0.363 \\  
         
         \cmidrule(lr){2-16}
         & Avg  &\textcolor{blue}{\textbf{0.166}} $\pm$ 0.001  &\textcolor{blue}{\textbf{0.251}} $\pm$ 0.001   &\textcolor{red}{\textbf{0.159}} &\textcolor{red}{\textbf{0.248}} &0.182 &0.273 &0.384 &0.434  &0.176 &0.269 &0.209 &0.297 &0.214 &0.307 \\ 
         \midrule[1pt]

         \multirow{5}*{\rotatebox{90}{Traffic}}
         & 96  &0.425 $\pm$ 0.001   &\textcolor{blue}{\textbf{0.243}} $\pm$ 0.001   &\textcolor{red}{\textbf{0.398}} &\textcolor{red}{\textbf{0.226}} &0.462 &0.285  &0.601 &0.361  &\textcolor{blue}{\textbf{0.406}} &0.277  &0.528 &0.341 &0.504 &0.298 \\  
         
         & 192  &0.442 $\pm$ 0.002   &\textcolor{blue}{\textbf{0.254}} $\pm$ 0.001  &\textcolor{blue}{\textbf{0.439}} &\textcolor{red}{\textbf{0.241}} &0.473 &0.296  &0.603 &0.365  &\textcolor{red}{\textbf{0.426}} &0.290  &0.531 &0.338 &0.526 &0.305 \\  
         
         & 336  &\textcolor{blue}{\textbf{0.462}} $\pm$ 0.001   &\textcolor{blue}{\textbf{0.262}} $\pm$ 0.001  &0.464 &\textcolor{red}{\textbf{0.250}} &0.498 &0.296  &0.609 &0.366  &\textcolor{red}{\textbf{0.437}} &0.292  &0.551 &0.345 &0.540 &0.310 \\  
         
         & 720  &0.514 $\pm$ 0.000   &\textcolor{blue}{\textbf{0.279}} $\pm$ 0.001  &\textcolor{blue}{\textbf{0.502}} &\textcolor{red}{\textbf{0.270}} &0.506 &0.313  &0.648 &0.387  &\textcolor{red}{\textbf{0.462}} &0.305  &0.598 &0.367 &0.570 &0.324 \\  
         
         \cmidrule(lr){2-16}
         & Avg  &0.461 $\pm$ 0.001   &\textcolor{blue}{\textbf{0.260}} $\pm$ 0.001  &\textcolor{blue}{\textbf{0.451}} &\textcolor{red}{\textbf{0.247}} &0.485 &0.298 &0.615 &0.370  &\textcolor{red}{\textbf{0.433}} &0.291  &0.552 &0.348 &0.535 &0.309 \\ 
         \midrule[1pt] 

         \multirow{5}*{\rotatebox{90}{Weather}}
         & 96  &\textcolor{blue}{\textbf{0.154}} $\pm$ 0.001   &\textcolor{red}{\textbf{0.190}} $\pm$ 0.001   &\textcolor{red}{\textbf{0.153}} &\textcolor{red}{\textbf{0.190}} &0.163 &0.209  &0.196 &0.236  &0.163 &\textcolor{blue}{\textbf{0.207}}  &0.184 &0.239 &0.178 &0.226 \\  
         
         & 192  &\textcolor{red}{\textbf{0.195}} $\pm$ 0.001   &\textcolor{blue}{\textbf{0.238}} $\pm$ 0.001  &\textcolor{blue}{\textbf{0.200}} &\textcolor{red}{\textbf{0.235}} &0.208 &0.250  &0.240 &0.271  &0.211 &0.251  &0.223 &0.275 &0.227 &0.266 \\  
         
         & 336  &\textcolor{red}{\textbf{0.249}} $\pm$ 0.001   &\textcolor{blue}{\textbf{0.284}} $\pm$ 0.001  &0.258 &\textcolor{red}{\textbf{0.280}} &\textcolor{blue}{\textbf{0.251}} &0.287  &0.292 &0.307  &0.267 &0.292  &0.272 &0.316 &0.283 &0.305 \\  
         
         & 720  &\textcolor{red}{\textbf{0.324}} $\pm$ 0.001   &\textcolor{blue}{\textbf{0.336}} $\pm$ 0.001  &\textcolor{blue}{\textbf{0.337}} &\textcolor{red}{\textbf{0.333}} &0.339 &0.341  &0.365 &0.354  &0.343 &0.341  &0.340 &0.363 &0.359 &0.355 \\  
         
         \cmidrule(lr){2-16} 
         & Avg  &\textcolor{red}{\textbf{0.231}} $\pm$ 0.001   & \textcolor{blue}{\textbf{0.262}} $\pm$ 0.001  &\textcolor{blue}{\textbf{0.237}} &\textcolor{red}{\textbf{0.260}} &0.240 &0.272  &0.273 &0.292  &0.246 &0.272  &0.255 &0.299 &0.262 &0.288 \\ 
         \midrule[1pt]

        \multirow{5}*{\rotatebox{90}{Solar-Energy}}
         & 96  &\textcolor{red}{\textbf{0.175}} $\pm$ 0.003   &\textcolor{blue}{\textbf{0.224}} $\pm$ 0.002   &\textcolor{blue}{\textbf{0.179}} &\textcolor{red}{\textbf{0.191}} &0.189 &0.259  &0.319 &0.353  &0.185 &0.233   &0.237   &0.300   &0.250 &0.292 \\  
         
         & 192  &\textcolor{red}{\textbf{0.192}} $\pm$ 0.002   &\textcolor{blue}{\textbf{0.240}} $\pm$ 0.002  &\textcolor{blue}{\textbf{0.209}} &\textcolor{red}{\textbf{0.213}} &0.222 &0.283  &0.367 &0.387  &0.227 &0.253   &0.265   &0.321  &0.296 &0.318 \\ 
         
         & 336  &\textcolor{red}{\textbf{0.204}} $\pm$ 0.005   &\textcolor{blue}{\textbf{0.235}} $\pm$ 0.008  &\textcolor{blue}{\textbf{0.231}} &\textcolor{red}{\textbf{0.229}} &\textcolor{blue}{\textbf{0.231}} &0.292  &0.408 &0.403  &0.246 &0.284   &0.283   &0.330  &0.319 &0.330 \\ 
         
         & 720  &\textcolor{red}{\textbf{0.215}}  $\pm$ 0.006  &\textcolor{blue}{\textbf{0.238}} $\pm$ 0.004  &\textcolor{blue}{\textbf{0.241}} &\textcolor{red}{\textbf{0.236}} &0.223 &0.285  &0.411 &0.395  &0.247 &0.276   &0.286   &0.326  &0.338 &0.337 \\ 
         
         \cmidrule(lr){2-16}
         & Avg  &\textcolor{red}{\textbf{0.197}} $\pm$ 0.004  &\textcolor{blue}{\textbf{0.234}} $\pm$ 0.004  &\textcolor{blue}{\textbf{0.215}} &\textcolor{red}{\textbf{0.217}} &0.216 &0.280  &0.376 &0.384  &0.226 &0.262   &0.268   &0.322  &0.301 &0.319 \\ 
         \midrule[1pt]

        \multirow{5}*{\rotatebox{90}{PEMS03}}
         & 12  &\textcolor{red}{\textbf{0.058}} $\pm$ 0.001   &\textcolor{red}{\textbf{0.158}} $\pm$ 0.001  &\textcolor{blue}{\textbf{0.060}} &\textcolor{blue}{\textbf{0.159}} &0.076 &0.188  &0.117 &0.226  &0.068 &0.174  &0.083 &0.194  &0.085 &0.192 \\  
         
         & 24  &\textcolor{red}{\textbf{0.074}} $\pm$ 0.001  &\textcolor{red}{\textbf{0.175}} $\pm$ 0.001  &\textcolor{blue}{\textbf{0.078}} &\textcolor{blue}{\textbf{0.179}} &0.113 &0.226  &0.235 &0.324  &0.093 &0.202  &0.127 &0.241  &0.118 &0.223 \\ 
         
         & 48  &\textcolor{red}{\textbf{0.101}} $\pm$ 0.002   & \textcolor{red}{\textbf{0.205}} $\pm$ 0.001  &\textcolor{blue}{\textbf{0.104}} &\textcolor{blue}{\textbf{0.210}} &0.191 &0.292  &0.541 &0.521  &0.146 &0.258   &0.204  &0.311   &0.155 &0.260 \\ 
         
         & 96  &\textcolor{red}{\textbf{0.142}} $\pm$ 0.001   & \textcolor{red}{\textbf{0.236}} $\pm$ 0.003  &\textcolor{blue}{\textbf{0.140}} &\textcolor{blue}{\textbf{0.247}} &0.288 &0.363  &1.062 &0.790  &0.228 &0.330  &0.255   &0.345   &0.228 &0.317 \\ 
         
         \cmidrule(lr){2-16}
         & Avg  &\textcolor{red}{\textbf{0.094}} $\pm$ 0.001   &\textcolor{red}{\textbf{0.194}} $\pm$ 0.002  &\textcolor{blue}{\textbf{0.095}} &\textcolor{blue}{\textbf{0.199}} &0.167 &0.267 &0.489 &0.465  &0.135 &0.243  &0.167  &0.273   &0.147 &0.248 \\ 
         \midrule[1pt]

         
         
         
         

         \multirow{5}*{\rotatebox{90}{PEMS04}}
         & 12  &\textcolor{red}{\textbf{0.065}} $\pm$ 0.001   &\textcolor{red}{\textbf{0.160}} $\pm$ 0.001   &\textcolor{blue}{\textbf{0.068}} &\textcolor{blue}{\textbf{0.163}} &0.092 &0.204  &0.129 &0.239  &0.085 &0.189   &0.098   &0.210  &0.087 &0.195 \\  
         
         & 24  &\textcolor{red}{\textbf{0.074}} $\pm$ 0.001   &\textcolor{red}{\textbf{0.171}} $\pm$ 0.001   &\textcolor{blue}{\textbf{0.079}} &\textcolor{blue}{\textbf{0.176}} &0.128 &0.243  &0.246 &0.337  &0.117 &0.224   &0.148   &0.263  &0.103 &0.215 \\  
         
         & 48  &\textcolor{red}{\textbf{0.090}} $\pm$ 0.001   &\textcolor{red}{\textbf{0.189}} $\pm$ 0.001   &\textcolor{blue}{\textbf{0.095}} &\textcolor{blue}{\textbf{0.197}} &0.213 &0.315  &0.568 &0.539  &0.174 &0.276   &0.235   &0.341  &0.136 &0.250 \\  
         
         & 96  &\textcolor{red}{\textbf{0.110}} $\pm$ 0.001   &\textcolor{red}{\textbf{0.207}} $\pm$ 0.001   &\textcolor{blue}{\textbf{0.122}} &\textcolor{blue}{\textbf{0.226}} &0.307 &0.384  &1.181 &0.843  &0.273 &0.354   &0.304   &0.393  &0.190 &0.303 \\  
         
         \cmidrule(lr){2-16}
         & Avg  &\textcolor{red}{\textbf{0.085}} $\pm$ 0.001   & \textcolor{red}{\textbf{0.182}} $\pm$ 0.001   &\textcolor{blue}{\textbf{0.091}} &\textcolor{blue}{\textbf{0.190}} &0.185 &0.287  &0.531 &0.489  &0.162 &0.261   &0.196   &0.302  &0.129 &0.241 \\ 
         \midrule[1pt] 

         \multirow{5}*{\rotatebox{90}{PEMS07}}
         & 12  &\textcolor{red}{\textbf{0.051}} $\pm$ 0.002   &\textcolor{red}{\textbf{0.138}} $\pm$ 0.001  &\textcolor{blue}{\textbf{0.052}} &\textcolor{red}{\textbf{0.138}} &0.073 &0.184  &0.109 &0.222  &0.063 &\textcolor{blue}{\textbf{0.158}}   &0.080  &0.190  &0.082 &0.181 \\  
         
         & 24  &\textcolor{red}{\textbf{0.064}} $\pm$ 0.002   & \textcolor{blue}{\textbf{0.152}} $\pm$ 0.002  &\textcolor{blue}{\textbf{0.065}} &\textcolor{red}{\textbf{0.151}} &0.111 &0.219  &0.230 &0.327  &0.089 &0.192   &0.136   &0.250  &0.101 &0.204 \\ 
         
         & 48  &\textcolor{red}{\textbf{0.084}} $\pm$ 0.001   &\textcolor{blue}{\textbf{0.172}} $\pm$ 0.001  &\textcolor{red}{\textbf{0.084}} &\textcolor{red}{\textbf{0.171}} &0.237 &0.328  &0.551 &0.531  &0.136 &0.241   &0.248   &0.340  &\textcolor{blue}{\textbf{0.134}} &0.238 \\ 
         
         & 96  &\textcolor{blue}{\textbf{0.112}} $\pm$ 0.001   & \textcolor{red}{\textbf{0.194}} $\pm$ 0.001  &\textcolor{red}{\textbf{0.108}} &\textcolor{blue}{\textbf{0.196}} &0.303 &0.354  &1.112 &0.809  &0.197 &0.298  &0.352   &0.413   &0.181 &0.279 \\ 
         
         \cmidrule(lr){2-16}
         & Avg  &\textcolor{blue}{\textbf{0.078}} $\pm$ 0.002   &\textcolor{red}{\textbf{0.164}} $\pm$ 0.001  &\textcolor{red}{\textbf{0.077}} &\textcolor{red}{\textbf{0.164}} &0.181 &0.271  &0.500 &0.472  &0.121 &\textcolor{blue}{\textbf{0.222}}   &0.204   &0.298  &0.124 &0.225 \\ 
         \midrule[1pt]

        \multirow{5}*{\rotatebox{90}{PEMS08}}
         & 12  &\textcolor{blue}{\textbf{0.069}} $\pm$ 0.000   &\textcolor{blue}{\textbf{0.163}} $\pm$ 0.000  &\textcolor{red}{\textbf{0.068}} &\textcolor{red}{\textbf{0.159}} &0.091 &0.201  &0.122 &0.233  &0.081 &0.185  &0.095 &0.204  &0.112 &0.212 \\  
         
         & 24  &\textcolor{red}{\textbf{0.086}} $\pm$ 0.001   &\textcolor{blue}{\textbf{0.181}} $\pm$ 0.001  &\textcolor{blue}{\textbf{0.089}} &\textcolor{red}{\textbf{0.178}} &0.137 &0.246  &0.236 &0.330  &0.112 &0.214  &0.150 &0.259  &0.141 &0.238 \\  
         
         & 48  &\textcolor{red}{\textbf{0.118}} $\pm$ 0.000   &\textcolor{red}{\textbf{0.203}} $\pm$ 0.001  &\textcolor{blue}{\textbf{0.123}} &\textcolor{blue}{\textbf{0.204}} &0.265 &0.343  &0.562 &0.540  &0.174 &0.267  &0.250   &0.341   &0.198 &0.283 \\  
         
         & 96  &\textcolor{red}{\textbf{0.171}} $\pm$ 0.001   &\textcolor{red}{\textbf{0.229}} $\pm$ 0.001  &\textcolor{blue}{\textbf{0.173}} &\textcolor{blue}{\textbf{0.236}} &0.410 &0.407  &1.216 &0.846  &0.277 &0.335  &0.323   &0.376  &0.320 &0.351 \\  
         
         \cmidrule(lr){2-16}
         & Avg  &\textcolor{red}{\textbf{0.111}} $\pm$ 0.001  &\textcolor{red}{\textbf{0.194}} $\pm$ 0.001  &\textcolor{blue}{\textbf{0.113}} &\textcolor{red}{\textbf{0.194}} &0.226 &0.299 &0.534 &0.487  &0.161 &\textcolor{blue}{\textbf{0.250}}  &0.205 &0.295  &0.193 &0.271 \\ 
         \midrule[1pt]

         
         
         
         

        \multicolumn{2}{c|}{$1^{st}$ Count} &45  &37  &12  &27  &1  &0  &0  &0  &5  &1  &0  &0 &0  &0   \\
         \bottomrule[2pt]
    \end{tabular}
    }
    \label{tab:input_96_additional}
\end{table*}

%% file: tables/input_336_appendix.tex
\begin{table*}[h]
    \centering
    \caption{Performance comparison with longer lookback windows $T \in \{336, 720 \}$. Highlight the best results in \textcolor{red}{\textbf{red}} and the second-best results in \textcolor{blue}{\textbf{blue}}. The baseline results follow Table 11 in GTR~\cite{cao2026enhancing}.}
        
    \scalebox{0.65}{
    \begin{tabular}{l | c|c c|c c|c c| c c||c c |c c |c c |c c   }
    \toprule[2pt]
        \multicolumn{2}{c|}{LookBack} &\multicolumn{8}{c||}{$T=336$} &\multicolumn{8}{c}{$T=720$} \\
         \midrule[1pt]

       \multicolumn{2}{c|}{Models}  &\multicolumn{2}{c|}{PatchTST} &\multicolumn{2}{c|}{CycleNet} &\multicolumn{2}{c|}{GTR} &\multicolumn{2}{c||}{\textbf{Paloma}} &\multicolumn{2}{c|}{SparseTSF} &\multicolumn{2}{c|}{CycleNet}  &\multicolumn{2}{c|}{GTR}  &\multicolumn{2}{c}{\textbf{Paloma}} \\

        \multicolumn{2}{c|}{}  &\multicolumn{2}{c|}{[ICLR'23]} &\multicolumn{2}{c|}{[NeurIPS'24]} &\multicolumn{2}{c|}{[ICLR'26]} &\multicolumn{2}{c||}{[\textbf{ours}]}  &\multicolumn{2}{c|}{[ICML'25]}  &\multicolumn{2}{c|}{[NeurIPS'24]} &\multicolumn{2}{c|}{[ICLR'26]} &\multicolumn{2}{c}{[\textbf{ours}]}   \\
         \midrule[1pt]
       
       
       \multicolumn{2}{c|}{Metrics} &MSE &MAE &MSE &MAE &MSE &MAE &MSE &MAE &MSE &MAE &MSE &MAE &MSE &MAE &MSE &MAE    \\
    
         \midrule[1pt]
         \multirow{5}*{\rotatebox{90}{ETTh1}}
         & 96  &0.385 &0.405 &0.374 &0.396 &\textcolor{blue}{\textbf{0.369}} &\textcolor{blue}{\textbf{0.395}}  &\textcolor{red}{\textbf{0.355}}  &\textcolor{red}{\textbf{0.380}}   &\textcolor{blue}{\textbf{0.362}} &\textcolor{red}{\textbf{0.388}} &0.379 &0.403 &0.368 &0.400  &\textcolor{red}{\textbf{0.359}}  &\textcolor{blue}{\textbf{0.389}} \\
         
         & 192  &0.414 &0.421 &\textcolor{blue}{\textbf{0.406}} &\textcolor{blue}{\textbf{0.415}} &0.409 &0.420   &\textcolor{red}{\textbf{0.399}}  &\textcolor{red}{\textbf{0.408}}  &\textcolor{red}{\textbf{0.403}} &\textcolor{red}{\textbf{0.411}} &0.416 &0.425 &0.409 &0.424  &\textcolor{blue}{\textbf{0.406}}  &\textcolor{blue}{\textbf{0.418}} \\
         
         & 336  &0.440 &0.440 &\textcolor{blue}{\textbf{0.431}} &\textcolor{blue}{\textbf{0.430}} &0.432 &0.434   &\textcolor{red}{\textbf{0.428}}  &\textcolor{red}{\textbf{0.425}}  &\textcolor{red}{\textbf{0.434}} &\textcolor{red}{\textbf{0.428}} &0.447 &0.445 &0.443 &0.449  &\textcolor{blue}{\textbf{0.439}}  &\textcolor{blue}{\textbf{0.438}} \\
         
         & 720  &0.456 &0.470 &\textcolor{blue}{\textbf{0.450}} &0.464 &0.451 &\textcolor{blue}{\textbf{0.463}}   &\textcolor{red}{\textbf{0.434}}  &\textcolor{red}{\textbf{0.449}}  &\textcolor{red}{\textbf{0.426}} &\textcolor{red}{\textbf{0.447}} &0.477 &0.483 &0.464 &0.476  &\textcolor{blue}{\textbf{0.442}}  &\textcolor{blue}{\textbf{0.460}} \\
         
         \cmidrule(r){2-18}
         & Avg  &0.424 &0.434 &\textcolor{blue}{\textbf{0.415}} &\textcolor{blue}{\textbf{0.426}} &\textcolor{blue}{\textbf{0.415}} &0.428   &\textcolor{red}{\textbf{0.404}}  &\textcolor{red}{\textbf{0.416}}   &\textcolor{red}{\textbf{0.406}} &\textcolor{red}{\textbf{0.419}} &0.430 &0.439 &0.421 &0.437  &\textcolor{blue}{\textbf{0.412}}  &\textcolor{blue}{\textbf{0.426}} \\
         \midrule[1pt]

        \multirow{5}*{\rotatebox{90}{ETTh2}}
         & 96  &0.275 &\textcolor{blue}{\textbf{0.337}} &0.279 &0.341 &\textcolor{blue}{\textbf{0.274}} &0.341  &\textcolor{red}{\textbf{0.273}}  &\textcolor{red}{\textbf{0.331}}   &0.294 &0.346 &\textcolor{blue}{\textbf{0.271}}  &\textcolor{blue}{\textbf{0.337}} &0.275 &0.343  &\textcolor{red}{\textbf{0.269}}  &\textcolor{red}{\textbf{0.328}} \\
         
         & 192  &\textcolor{red}{\textbf{0.338}} &\textcolor{blue}{\textbf{0.379}} &0.342 &0.385 &\textcolor{blue}{\textbf{0.340}} &0.385   &\textcolor{blue}{\textbf{0.340}}  &\textcolor{red}{\textbf{0.374}}  &0.339 &\textcolor{blue}{\textbf{0.377}} &\textcolor{red}{\textbf{0.332}} &0.380 &0.340 &0.388  &\textcolor{blue}{\textbf{0.336}}  &\textcolor{red}{\textbf{0.372}} \\
         
         & 336  &\textcolor{red}{\textbf{0.365}} &\textcolor{red}{\textbf{0.398}} &\textcolor{blue}{\textbf{0.371}} &0.413 &0.363 &0.409   &0.374  &\textcolor{blue}{\textbf{0.404}}  &\textcolor{red}{\textbf{0.359}} &\textcolor{red}{\textbf{0.397}} &\textcolor{blue}{\textbf{0.362}} &0.408 &0.377 &0.416  &0.367  &\textcolor{blue}{\textbf{0.398}} \\
         
         & 720  &\textcolor{red}{\textbf{0.391}} &\textcolor{blue}{\textbf{0.429}} &0.426 &0.451 &0.405 &0.439   &\textcolor{blue}{\textbf{0.397}}  &\textcolor{red}{\textbf{0.427}}  &\textcolor{red}{\textbf{0.383}} &\textcolor{blue}{\textbf{0.424}} &0.415 &0.449 &0.406 &0.442  &\textcolor{blue}{\textbf{0.386}}  &\textcolor{red}{\textbf{0.420}} \\
         
         \cmidrule(r){2-18}
         & Avg  &\textcolor{red}{\textbf{0.342}} &\textcolor{blue}{\textbf{0.386}} &0.355 &0.398 &\textcolor{blue}{\textbf{0.345}} &0.393   &0.346  &\textcolor{red}{\textbf{0.384}}  &\textcolor{blue}{\textbf{0.344}} &\textcolor{blue}{\textbf{0.386}} &0.345 &0.394 &0.349 &0.397  &\textcolor{red}{\textbf{0.340}}  &\textcolor{red}{\textbf{0.380}} \\
         \midrule[1pt]

         \multirow{5}*{\rotatebox{90}{ETTm1}}
         & 96   &0.291 &0.343 &0.299 &0.348 &\textcolor{blue}{\textbf{0.283}} &\textcolor{blue}{\textbf{0.337}}   &\textcolor{red}{\textbf{0.276}}  &\textcolor{red}{\textbf{0.325}}  &0.312 &0.354 &0.307 &0.353 &\textcolor{blue}{\textbf{0.290}}  &\textcolor{blue}{\textbf{0.345}}  &\textcolor{red}{\textbf{0.273}}  &\textcolor{red}{\textbf{0.325}} \\
         
         & 192  &0.334 &0.370 &0.334 &0.367 &\textcolor{blue}{\textbf{0.323}}  &\textcolor{blue}{\textbf{0.365}}   &\textcolor{red}{\textbf{0.313}}  &\textcolor{red}{\textbf{0.349}}  &0.347 &0.376 &0.337 &\textcolor{blue}{\textbf{0.371}} &\textcolor{blue}{\textbf{0.328}} &\textcolor{blue}{\textbf{0.371}}  &\textcolor{red}{\textbf{0.313}}  &\textcolor{red}{\textbf{0.352}} \\
         
         & 336  &0.367 &0.392 &0.368 &0.386 &\textcolor{blue}{\textbf{0.355}}  &\textcolor{blue}{\textbf{0.384}}   &\textcolor{red}{\textbf{0.345}}  &\textcolor{red}{\textbf{0.370}}  &0.367 &\textcolor{blue}{\textbf{0.386}} &0.364 &0.387 &\textcolor{blue}{\textbf{0.357}} &0.388  &\textcolor{red}{\textbf{0.344}}  &\textcolor{red}{\textbf{0.374}} \\
         
         & 720  &0.422 &0.426 &\textcolor{blue}{\textbf{0.417}} &\textcolor{blue}{\textbf{0.414}} &0.420 &0.422   &\textcolor{red}{\textbf{0.404}}  &\textcolor{red}{\textbf{0.403}}  &0.419 &0.413 &\textcolor{blue}{\textbf{0.410}} &\textcolor{blue}{\textbf{0.411}} &0.417 &0.420  &\textcolor{red}{\textbf{0.407}}  &\textcolor{red}{\textbf{0.406}} \\
         
         \cmidrule(r){2-18}
         & Avg  &0.354 &0.383 &0.355 &0.379 &\textcolor{blue}{\textbf{0.345}} &\textcolor{blue}{\textbf{0.377}}   &\textcolor{red}{\textbf{0.335}}  &\textcolor{red}{\textbf{0.362}}  &0.361 &0.382 &0.355 &\textcolor{blue}{\textbf{0.381}} &\textcolor{blue}{\textbf{0.348}} &\textcolor{blue}{\textbf{0.381}}  &\textcolor{red}{\textbf{0.334}}  &\textcolor{red}{\textbf{0.364}} \\
         \midrule[1pt]

         \multirow{5}*{\rotatebox{90}{ETTm2}}
         & 96   &0.164 &0.254 &\textcolor{blue}{\textbf{0.159}} &\textcolor{blue}{\textbf{0.247}} &0.161 &0.252   &\textcolor{red}{\textbf{0.158}}  &\textcolor{red}{\textbf{0.240}}  &0.163 &0.252 &\textcolor{blue}{\textbf{0.159}} &\textcolor{blue}{\textbf{0.249}} &0.169 &0.258  &\textcolor{red}{\textbf{0.157}}  &\textcolor{red}{\textbf{0.239}} \\
         
         & 192  &0.221 &0.293 &\textcolor{red}{\textbf{0.214}} &\textcolor{blue}{\textbf{0.286}} &0.221 &0.292   &\textcolor{blue}{\textbf{0.216}}  &\textcolor{red}{\textbf{0.281}}  &\textcolor{blue}{\textbf{0.217}} &0.290 &\textcolor{red}{\textbf{0.214}} &\textcolor{blue}{\textbf{0.289}} &0.220 &0.295  &\textcolor{red}{\textbf{0.214}}  &\textcolor{red}{\textbf{0.282}} \\
         
         & 336  &0.276 &0.328 &\textcolor{red}{\textbf{0.269}} &\textcolor{blue}{\textbf{0.322}} &0.278 &0.326   &\textcolor{blue}{\textbf{0.270}}  &\textcolor{red}{\textbf{0.316}}  &0.270 &0.327 &\textcolor{blue}{\textbf{0.268}} &\textcolor{blue}{\textbf{0.326}} &0.270 &0.328  &\textcolor{red}{\textbf{0.263}}  &\textcolor{red}{\textbf{0.316}} \\
         
         & 720  &0.366 &0.383 &0.363 &\textcolor{blue}{\textbf{0.382}} &\textcolor{blue}{\textbf{0.359}} &0.388   &\textcolor{red}{\textbf{0.356}}  &\textcolor{red}{\textbf{0.372}}  &\textcolor{red}{\textbf{0.352}} &\textcolor{blue}{\textbf{0.379}} &\textcolor{blue}{\textbf{0.353}} &0.384 &0.355 &0.389  &\textcolor{red}{\textbf{0.352}}  &\textcolor{red}{\textbf{0.375}} \\
         
         \cmidrule(r){2-18}
         & Avg  &0.257 &0.315 &\textcolor{blue}{\textbf{0.251}} &\textcolor{blue}{\textbf{0.309}} &0.254 &0.314   &\textcolor{red}{\textbf{0.250}}  &\textcolor{red}{\textbf{0.303}}  &0.251 &\textcolor{blue}{\textbf{0.312}} &\textcolor{blue}{\textbf{0.249}} &\textcolor{blue}{\textbf{0.312}} &0.253 &0.317  &\textcolor{red}{\textbf{0.247}}  &\textcolor{red}{\textbf{0.304}} \\
         \midrule[1pt]

         \multirow{5}*{\rotatebox{90}{Electricity}}
         & 96   &0.131 &0.225 &\textcolor{blue}{\textbf{0.128}} &0.223 &\textcolor{red}{\textbf{0.127}} &\textcolor{blue}{\textbf{0.222}}   &\textcolor{red}{\textbf{0.127}}  &\textcolor{red}{\textbf{0.216}}  &\textcolor{blue}{\textbf{0.138}} &0.233 &\textcolor{red}{\textbf{0.128}} &\textcolor{blue}{\textbf{0.223}} &\textcolor{red}{\textbf{0.128}} &0.226  &\textcolor{red}{\textbf{0.128}}  &\textcolor{red}{\textbf{0.220}} \\
         
         & 192  &0.148 &0.240 &\textcolor{red}{\textbf{0.144}} &\textcolor{blue}{\textbf{0.237}} &\textcolor{blue}{\textbf{0.147}} &0.242   &\textcolor{blue}{\textbf{0.147}}  &\textcolor{red}{\textbf{0.234}}  &0.151 &\textcolor{blue}{\textbf{0.244}} &\textcolor{red}{\textbf{0.143}} &\textcolor{red}{\textbf{0.237}} &\textcolor{blue}{\textbf{0.146}} &\textcolor{blue}{\textbf{0.244}}  &0.148  &\textcolor{red}{\textbf{0.237}} \\
         
         & 336  &\textcolor{blue}{\textbf{0.165}} &0.259 &\textcolor{red}{\textbf{0.160}} &\textcolor{blue}{\textbf{0.254}} &\textcolor{blue}{\textbf{0.165}} &0.260   &0.166  &\textcolor{red}{\textbf{0.252}}  &0.166 &\textcolor{blue}{\textbf{0.260}} &\textcolor{red}{\textbf{0.159}} &\textcolor{red}{\textbf{0.254}} &\textcolor{blue}{\textbf{0.162}} &0.261  &0.163  &\textcolor{red}{\textbf{0.254}} \\
         
         & 720  &0.202 &0.291 &\textcolor{red}{\textbf{0.198}} &\textcolor{blue}{\textbf{0.287}} &\textcolor{blue}{\textbf{0.199}} &0.292   &0.202  &\textcolor{red}{\textbf{0.284}}  &0.205 &0.293 &\textcolor{red}{\textbf{0.197}} &\textcolor{blue}{\textbf{0.287}} &\textcolor{red}{\textbf{0.197}} &0.295  &\textcolor{blue}{\textbf{0.199}}  &\textcolor{red}{\textbf{0.285}} \\
         
         \cmidrule(r){2-18}
         & Avg  &0.162 &0.254 &\textcolor{red}{\textbf{0.158}} &\textcolor{blue}{\textbf{0.250}} &\textcolor{blue}{\textbf{0.159}} &0.254   &0.161  &\textcolor{red}{\textbf{0.247}}  &0.165 &0.258 &\textcolor{red}{\textbf{0.157}} &\textcolor{blue}{\textbf{0.250}} &\textcolor{blue}{\textbf{0.158}} &0.256  &0.160  &\textcolor{red}{\textbf{0.249}} \\
         \midrule[1pt]

         \multirow{5}*{\rotatebox{90}{Solar-Energy}}
         & 96   &\textcolor{blue}{\textbf{0.190}}  &0.278 &0.200 &0.250 &0.180 &\textcolor{blue}{\textbf{0.237}}   &\textcolor{red}{\textbf{0.174}}  &\textcolor{red}{\textbf{0.214}}  &0.195 &0.243 &0.194 &0.255 &\textcolor{blue}{\textbf{0.178}} &\textcolor{blue}{\textbf{0.242}}  &\textcolor{red}{\textbf{0.175}}  &\textcolor{red}{\textbf{0.206}} \\
         
         & 192  &0.206 &\textcolor{blue}{\textbf{0.252}}  &0.221 &0.261 &\textcolor{blue}{\textbf{0.196}} &0.254   &\textcolor{red}{\textbf{0.190}}  &\textcolor{red}{\textbf{0.230}}  &0.215 &0.254 &0.205 &\textcolor{blue}{\textbf{0.251}} &\textcolor{blue}{\textbf{0.198}} &0.260  &\textcolor{red}{\textbf{0.188}}  &\textcolor{red}{\textbf{0.216}} \\
         
         & 336  &0.217 &\textcolor{blue}{\textbf{0.254}} &0.236 &0.272 &\textcolor{blue}{\textbf{0.201}} &0.257   &\textcolor{red}{\textbf{0.198}}  &\textcolor{red}{\textbf{0.225}}  &0.232 &0.262 &0.218 &\textcolor{blue}{\textbf{0.257}} &\textcolor{blue}{\textbf{0.202}} &0.264  &\textcolor{red}{\textbf{0.201}}  &\textcolor{red}{\textbf{0.225}} \\
         
         & 720  &0.219 &\textcolor{blue}{\textbf{0.255}} &0.245 &0.277 &\textcolor{red}{\textbf{0.205}} &0.258   &\textcolor{blue}{\textbf{0.208}}  &\textcolor{red}{\textbf{0.221}}  &0.237 &\textcolor{blue}{\textbf{0.263}} &0.239 &0.278 &\textcolor{blue}{\textbf{0.209}} &0.269  &\textcolor{red}{\textbf{0.207}}  &\textcolor{red}{\textbf{0.226}} \\
         
         \cmidrule(r){2-18}
         & Avg  &0.208 &0.260 &0.226 &0.265 &\textcolor{blue}{\textbf{0.196}}  &\textcolor{blue}{\textbf{0.252}}   &\textcolor{red}{\textbf{0.193}}  &\textcolor{red}{\textbf{0.223}}  &0.220 &\textcolor{blue}{\textbf{0.256}} &0.214 &0.260 &\textcolor{blue}{\textbf{0.197}} &0.259  &\textcolor{red}{\textbf{0.193}}  &\textcolor{red}{\textbf{0.218}} \\
         \midrule[1pt]

         \multirow{5}*{\rotatebox{90}{Weather}}
         & 96   &0.155 &0.204 &0.167 &0.221 &\textcolor{blue}{\textbf{0.147}} &\textcolor{blue}{\textbf{0.199}}   &\textcolor{red}{\textbf{0.146}}  &\textcolor{red}{\textbf{0.185}}  &0.169 &0.223 &0.164 &0.220 &\textcolor{blue}{\textbf{0.146}} &\textcolor{blue}{\textbf{0.200}}  &\textcolor{red}{\textbf{0.144}}  &\textcolor{red}{\textbf{0.189}} \\
         
         & 192  &0.195 &0.242 &0.212 &0.258 &\textcolor{blue}{\textbf{0.192}} &\textcolor{blue}{\textbf{0.240}}   &\textcolor{red}{\textbf{0.188}}  &\textcolor{red}{\textbf{0.233}}  &0.214 &0.262 &0.209 &0.258 &\textcolor{blue}{\textbf{0.192}} &\textcolor{blue}{\textbf{0.244}}  &\textcolor{blue}{\textbf{0.190}}  &\textcolor{blue}{\textbf{0.236}} \\
         
         & 336  &0.249 &0.283 &0.260 &0.293 &\textcolor{blue}{\textbf{0.243}} &\textcolor{blue}{\textbf{0.281}}   &\textcolor{red}{\textbf{0.237}}  &\textcolor{red}{\textbf{0.276}}  &0.257 &0.293 &0.255 &0.292 &\textcolor{blue}{\textbf{0.243}} &\textcolor{blue}{\textbf{0.283}}  &\textcolor{red}{\textbf{0.238}}  &\textcolor{red}{\textbf{0.275}} \\
         
         & 720  &\textcolor{blue}{\textbf{0.321}}  &\textcolor{blue}{\textbf{0.334}} &0.328 &0.339 &0.324 &0.337   &\textcolor{red}{\textbf{0.313}}  &\textcolor{red}{\textbf{0.329}}  &0.321 &0.340 &0.320 &0.338 &\textcolor{blue}{\textbf{0.317}} &\textcolor{blue}{\textbf{0.336}}  &\textcolor{red}{\textbf{0.302}}  &\textcolor{red}{\textbf{0.327}} \\
         
         \cmidrule(r){2-18}
         & Avg  &0.230 &0.266 &0.242 &0.278 &\textcolor{blue}{\textbf{0.227}} &\textcolor{blue}{\textbf{0.264}}   &\textcolor{red}{\textbf{0.221}}  &\textcolor{red}{\textbf{0.256}}  &0.240 &0.280 &0.237 &0.277 &\textcolor{blue}{\textbf{0.225}}  &\textcolor{blue}{\textbf{0.266}}  &\textcolor{red}{\textbf{0.219}}  &\textcolor{red}{\textbf{0.257}}  \\
         \midrule[1pt]

        \multicolumn{2}{c|}{$1^{st}$ Count} &4  &1  &6  &0  &2  &0  &24  &34  &7  &6  &7  &2  &2  &0  &23  &28  \\
         \bottomrule[2pt]
    \end{tabular}
    }
    \label{tab:input_336_appendix}
\end{table*}

%% file: tables/ablation_appendix.tex
\begin{table}[htbp]
\centering
\caption{Full ablation study of the Phase Rotation (PR) and Amplitude Scaling (AS) modules on six datasets. The lookback window is fixed at 96, and results are averaged over four prediction lengths.}
\vspace{2mm}
\scalebox{0.68}{
\begin{tabular}{c|c|cc|cc|cc|cc|cc|cc}
\toprule[2pt]
\multicolumn{2}{c|}{Design} &\multicolumn{2}{c|}{ETTm1} &\multicolumn{2}{c|}{ETTh1} &\multicolumn{2}{c|}{Electricity}  &\multicolumn{2}{c|}{Weather}  &\multicolumn{2}{c|}{Traffic}  &\multicolumn{2}{c}{Solar-Energy} \\
\midrule

 \multicolumn{2}{c|}{Metric}   & MSE & MAE & MSE & MAE & MSE & MAE & MSE & MAE & MSE & MAE & MSE & MAE \\
    \midrule
     \multirow{5}*{\rotatebox{90}{Paloma}}   
     &96   &0.295  &0.332      &0.360  &0.378     &0.134   &0.222  &0.154  &0.190   &0.425  &0.243   &0.175  &0.224  \\
     &192  &0.343  &0.358     &0.414  &0.409    &0.151  &0.238   &0.195  &0.238   &0.442   &0.254  &0.192  &0.240  \\
     &336  &0.376  &0.381      &0.452  &0.428    &0.170  &0.256   &0.249  &0.284   &0.462   &0.262   &0.204   &0.235  \\
     &720  &0.439  &0.417    &0.458  &0.449   &0.208  &0.289    &0.324  &0.336   &0.514  &0.279   &0.215  &0.238  \\
     \cmidrule(lr){2-14}
     &Avg  &0.363  &0.372    &0.421  &0.416    &0.166  &0.251   &0.231  &0.262   &0.461  &0.260   &0.197  &0.234  \\

    \midrule
     \multirow{5}*{\rotatebox{90}{w/o PR}}   
     &96 &0.300  &0.335   &0.365  &0.383   &0.140  &0.231   &0.160  &0.197   &0.434  &0.250   &0.179  &0.232   \\
     &192 &0.346  &0.360   &0.420  &0.414   &0.156  &0.245   &0.203  &0.246   &0.452  &0.262   &0.205  &0.252   \\
     &336 &0.382  &0.385   &0.458  &0.432   &0.172  &0.261   &0.256  &0.290   &0.474  &0.269   &0.220  &0.258   \\
     &720 &0.445  &0.417   &0.464  &0.451   &0.211  &0.294   &0.332  &0.345   &0.522  &0.288   &0.221  &0.244   \\
     \cmidrule(lr){2-14}
     &Avg &0.368  &0.374   &0.427  &0.420   &0.170  &0.258   &0.238  &0.270   &0.471  &0.267   &0.206  &0.247 \\
     
    \midrule
     \multirow{5}*{\rotatebox{90}{w/o AS}}   
     &96  &0.300  &0.336   &0.366  &0.381   &0.139  &0.230   &0.163  &0.202   &0.430  &0.249   &0.178  &0.231  \\
     &192 &0.348  &0.359   &0.423  &0.420   &0.154  &0.243   &0.205  &0.245   &0.450  &0.261   &0.198  &0.246  \\
     &336 &0.378  &0.384   &0.465  &0.440   &0.174  &0.264   &0.254  &0.291   &0.471  &0.275  &0.207  &0.251  \\
     &720 &0.448  &0.418   &0.464  &0.454   &0.214  &0.297   &0.330  &0.342   &0.522  &0.290   &0.219  &0.242 \\
     \cmidrule(lr){2-14}
     &Avg &0.369  &0.374   &0.430  &0.424   &0.170  &0.259   &0.238  &0.270   &0.468  &0.269   &0.201  &0.243 \\

    \midrule
     \multirow{5}*{\rotatebox{90}{w/o PR,AS}}   
     &96 &0.314  &0.346   &0.368  &0.382   &0.142  &0.232   &0.166  &0.207   &0.443  &0.264   &0.185  &0.237   \\
     &192 &0.351  &0.362   &0.420  &0.421   &0.159  &0.248   &0.213  &0.259   &0.453  &0.281   &0.202  &0.249    \\
     &336 &0.385  &0.388   &0.461  &0.441   &0.178  &0.266   &0.259  &0.300   &0.473  &0.288    &0.225  &0.261     \\
     &720 &0.454  &0.423   &0.463  &0.455   &0.218  &0.301   &0.335  &0.353   &0.526  &0.298   &0.225  &0.244   \\
     \cmidrule(lr){2-14}
     &Avg &0.376  &0.380   &0.428  &0.425   &0.174  &0.262   &0.243  &0.280   &0.474  &0.283   &0.209  &0.248   \\
    
\bottomrule[2pt]
\end{tabular}
}
\label{tab:ablation_appendix}
\end{table}

%% file: tables/efficiency_ETTh1.tex
\begin{table}[h]
\centering
\caption{Efficiency comparison between Paloma and other models on the ETTh1 dataset with lookback window T = 96 and forecast horizon $\tau$ = 720. Training Time denotes the average time required per epoch for the model.}
\vspace{2mm}
\scalebox{0.85}{
\begin{tabular}{lcccccc}
\toprule[2pt]
Model &Parameters &MACs &Training Time &Memory Usage &MSE &MAE \\
\midrule
GTR  &350.99K  &2.5M  &2.4s   &1738MB   &0.493   &0.476   \\
CycleNet  &210.04K   &1.47M   &2.0s    &1400MB   &0.520   &0.484  \\
DLinear  &139.68K   &979.44K   &1.8s   &1398MB    &0.519   &0.516  \\
FilterNet  &369.37K   &2.4M   &2.6s    &1412MB   &0.481   &0.473  \\
iTransformer  &737.49K   &5.16M   &2.7s    &2016MB   &0.503   &0.491  \\
PatchTST  &2.75M   &60.74M   &2.9s    &1600MB   &0.500   &0.488  \\
TimeXer  &2.11M   &37.04M   &4.3s   &2046MB    &0.469   &0.461  \\
\midrule
Paloma(ours)  &591.4K   &4.07M   &2.5s  &1406MB    &0.458   &0.449  \\

\bottomrule[2pt]
\end{tabular}}\label{tab:efficiency_ETTh1}
\end{table}

%% file: tables/efficiency_traffic.tex
\begin{table}[h]
\centering
\caption{Efficiency comparison between Paloma and other models on the Traffic dataset with lookback window T = 96 and forecast horizon $\tau$ = 720. Training Time denotes the average time required per epoch for the model.}
\vspace{2mm}
\scalebox{0.85}{
\begin{tabular}{lcccccc}
\toprule[2pt]
Model &Parameters &MACs &Training Time &Memory Usage &MSE &MAE \\
\midrule
GTR  &1.1M   &827.95M   &30.3s  &2336MB   &0.514  &0.301   \\
CycleNet  &563.84K   & 362.08M   &25.5s   &1764MB  &0.502    &0.314  \\
DLinear  &139.7K   &120.61M   &22.6s   &1696MB   &0.645   &0.394  \\
FilterNet  &502.41K   &408.64M   &28.5s   &2078MB    &0.498   &0.323  \\
iTransformer  &6.72M   &5.79G   &42.3s   &5860MB   &0.467   &0.302  \\
PatchTST  &10.74M   &69.46G   &205.8s    &16614MB   &0.514   &0.320  \\
TimeXer  &11.49M   &42.16G   &257.9s    &13644MB   &0.516   &0.307   \\
\midrule
Paloma(ours)  &2.0M   &1.34G   &29.9s   &2422MB   &0.514   &0.279  \\

\bottomrule[2pt]
\end{tabular}}\label{tab:efficiency_traffic}
\end{table}

%% file: checklist.tex
\section*{NeurIPS Paper Checklist}

\begin{enumerate}

\item {\bf Claims}
    \item[] Question: Do the main claims made in the abstract and introduction accurately reflect the paper's contributions and scope?
    \item[] Answer: \answerYes{} 
    \item[] Justification: The main claims are clearly written in the abstract and introduction.
    \item[] Guidelines:
    \begin{itemize}
        \item The answer \answerNA{} means that the abstract and introduction do not include the claims made in the paper.
        \item The abstract and/or introduction should clearly state the claims made, including the contributions made in the paper and important assumptions and limitations. A \answerNo{} or \answerNA{} answer to this question will not be perceived well by the reviewers. 
        \item The claims made should match theoretical and experimental results, and reflect how much the results can be expected to generalize to other settings. 
        \item It is fine to include aspirational goals as motivation as long as it is clear that these goals are not attained by the paper. 
    \end{itemize}

\item {\bf Limitations}
    \item[] Question: Does the paper discuss the limitations of the work performed by the authors?
    \item[] Answer: \answerYes{} 
    \item[] Justification: We have discussed the limitation of our method in the Conclusion and Limitation section.
    \item[] Guidelines:
    \begin{itemize}
        \item The answer \answerNA{} means that the paper has no limitation while the answer \answerNo{} means that the paper has limitations, but those are not discussed in the paper. 
        \item The authors are encouraged to create a separate ``Limitations'' section in their paper.
        \item The paper should point out any strong assumptions and how robust the results are to violations of these assumptions (e.g., independence assumptions, noiseless settings, model well-specification, asymptotic approximations only holding locally). The authors should reflect on how these assumptions might be violated in practice and what the implications would be.
        \item The authors should reflect on the scope of the claims made, e.g., if the approach was only tested on a few datasets or with a few runs. In general, empirical results often depend on implicit assumptions, which should be articulated.
        \item The authors should reflect on the factors that influence the performance of the approach. For example, a facial recognition algorithm may perform poorly when image resolution is low or images are taken in low lighting. Or a speech-to-text system might not be used reliably to provide closed captions for online lectures because it fails to handle technical jargon.
        \item The authors should discuss the computational efficiency of the proposed algorithms and how they scale with dataset size.
        \item If applicable, the authors should discuss possible limitations of their approach to address problems of privacy and fairness.
        \item While the authors might fear that complete honesty about limitations might be used by reviewers as grounds for rejection, a worse outcome might be that reviewers discover limitations that aren't acknowledged in the paper. The authors should use their best judgment and recognize that individual actions in favor of transparency play an important role in developing norms that preserve the integrity of the community. Reviewers will be specifically instructed to not penalize honesty concerning limitations.
    \end{itemize}

\item {\bf Theory assumptions and proofs}
    \item[] Question: For each theoretical result, does the paper provide the full set of assumptions and a complete (and correct) proof?
    \item[] Answer: \answerYes{} 
    \item[] Justification: The complete proofs of the propositions are provided in Appendix~\ref{appendix_Theoretical_Supplement}.
    \item[] Guidelines:
    \begin{itemize}
        \item The answer \answerNA{} means that the paper does not include theoretical results. 
        \item All the theorems, formulas, and proofs in the paper should be numbered and cross-referenced.
        \item All assumptions should be clearly stated or referenced in the statement of any theorems.
        \item The proofs can either appear in the main paper or the supplemental material, but if they appear in the supplemental material, the authors are encouraged to provide a short proof sketch to provide intuition. 
        \item Inversely, any informal proof provided in the core of the paper should be complemented by formal proofs provided in appendix or supplemental material.
        \item Theorems and Lemmas that the proof relies upon should be properly referenced. 
    \end{itemize}

    \item {\bf Experimental result reproducibility}
    \item[] Question: Does the paper fully disclose all the information needed to reproduce the main experimental results of the paper to the extent that it affects the main claims and/or conclusions of the paper (regardless of whether the code and data are provided or not)?
    \item[] Answer: \answerYes{} 
    \item[] Justification: We have included the implementation details in Appendix~\ref{appendix_implementation_detials} and provided the code as supplementary material.
    \item[] Guidelines:
    \begin{itemize}
        \item The answer \answerNA{} means that the paper does not include experiments.
        \item If the paper includes experiments, a \answerNo{} answer to this question will not be perceived well by the reviewers: Making the paper reproducible is important, regardless of whether the code and data are provided or not.
        \item If the contribution is a dataset and\slash or model, the authors should describe the steps taken to make their results reproducible or verifiable. 
        \item Depending on the contribution, reproducibility can be accomplished in various ways. For example, if the contribution is a novel architecture, describing the architecture fully might suffice, or if the contribution is a specific model and empirical evaluation, it may be necessary to either make it possible for others to replicate the model with the same dataset, or provide access to the model. In general. releasing code and data is often one good way to accomplish this, but reproducibility can also be provided via detailed instructions for how to replicate the results, access to a hosted model (e.g., in the case of a large language model), releasing of a model checkpoint, or other means that are appropriate to the research performed.
        \item While NeurIPS does not require releasing code, the conference does require all submissions to provide some reasonable avenue for reproducibility, which may depend on the nature of the contribution. For example
        \begin{enumerate}
            \item If the contribution is primarily a new algorithm, the paper should make it clear how to reproduce that algorithm.
            \item If the contribution is primarily a new model architecture, the paper should describe the architecture clearly and fully.
            \item If the contribution is a new model (e.g., a large language model), then there should either be a way to access this model for reproducing the results or a way to reproduce the model (e.g., with an open-source dataset or instructions for how to construct the dataset).
            \item We recognize that reproducibility may be tricky in some cases, in which case authors are welcome to describe the particular way they provide for reproducibility. In the case of closed-source models, it may be that access to the model is limited in some way (e.g., to registered users), but it should be possible for other researchers to have some path to reproducing or verifying the results.
        \end{enumerate}
    \end{itemize}

\item {\bf Open access to data and code}
    \item[] Question: Does the paper provide open access to the data and code, with sufficient instructions to faithfully reproduce the main experimental results, as described in supplemental material?
    \item[] Answer:  \answerYes{} 
    \item[] Justification: We have provided the code as supplementary material, which also includes the sources of the datasets.
    \item[] Guidelines:
    \begin{itemize}
        \item The answer \answerNA{} means that paper does not include experiments requiring code.
        \item Please see the NeurIPS code and data submission guidelines (\url{https://neurips.cc/public/guides/CodeSubmissionPolicy}) for more details.
        \item While we encourage the release of code and data, we understand that this might not be possible, so \answerNo{} is an acceptable answer. Papers cannot be rejected simply for not including code, unless this is central to the contribution (e.g., for a new open-source benchmark).
        \item The instructions should contain the exact command and environment needed to run to reproduce the results. See the NeurIPS code and data submission guidelines (\url{https://neurips.cc/public/guides/CodeSubmissionPolicy}) for more details.
        \item The authors should provide instructions on data access and preparation, including how to access the raw data, preprocessed data, intermediate data, and generated data, etc.
        \item The authors should provide scripts to reproduce all experimental results for the new proposed method and baselines. If only a subset of experiments are reproducible, they should state which ones are omitted from the script and why.
        \item At submission time, to preserve anonymity, the authors should release anonymized versions (if applicable).
        \item Providing as much information as possible in supplemental material (appended to the paper) is recommended, but including URLs to data and code is permitted.
    \end{itemize}

\item {\bf Experimental setting/details}
    \item[] Question: Does the paper specify all the training and test details (e.g., data splits, hyperparameters, how they were chosen, type of optimizer) necessary to understand the results?
    \item[] Answer: \answerYes{} 
    \item[] Justification: Please see Appendix~\ref{appendix_implementation_detials} for implementation details.
    \item[] Guidelines:
    \begin{itemize}
        \item The answer \answerNA{} means that the paper does not include experiments.
        \item The experimental setting should be presented in the core of the paper to a level of detail that is necessary to appreciate the results and make sense of them.
        \item The full details can be provided either with the code, in appendix, or as supplemental material.
    \end{itemize}

\item {\bf Experiment statistical significance}
    \item[] Question: Does the paper report error bars suitably and correctly defined or other appropriate information about the statistical significance of the experiments?
    \item[] Answer: \answerYes{} 
    \item[] Justification: We have reported the standard deviations of the results for our method in Table~\ref{tab:input_96_additional}.
    \item[] Guidelines:
    \begin{itemize}
        \item The answer \answerNA{} means that the paper does not include experiments.
        \item The authors should answer \answerYes{} if the results are accompanied by error bars, confidence intervals, or statistical significance tests, at least for the experiments that support the main claims of the paper.
        \item The factors of variability that the error bars are capturing should be clearly stated (for example, train/test split, initialization, random drawing of some parameter, or overall run with given experimental conditions).
        \item The method for calculating the error bars should be explained (closed form formula, call to a library function, bootstrap, etc.)
        \item The assumptions made should be given (e.g., Normally distributed errors).
        \item It should be clear whether the error bar is the standard deviation or the standard error of the mean.
        \item It is OK to report 1-sigma error bars, but one should state it. The authors should preferably report a 2-sigma error bar than state that they have a 96\% CI, if the hypothesis of Normality of errors is not verified.
        \item For asymmetric distributions, the authors should be careful not to show in tables or figures symmetric error bars that would yield results that are out of range (e.g., negative error rates).
        \item If error bars are reported in tables or plots, the authors should explain in the text how they were calculated and reference the corresponding figures or tables in the text.
    \end{itemize}

\item {\bf Experiments compute resources}
    \item[] Question: For each experiment, does the paper provide sufficient information on the computer resources (type of compute workers, memory, time of execution) needed to reproduce the experiments?
    \item[] Answer: \answerYes{} 
    \item[] Justification: We have reported the computational resource requirements of our method in Table~\ref{tab:efficiency_weather}, Table~\ref{tab:efficiency_ETTh1}, and Table~\ref{tab:efficiency_traffic}.
    \item[] Guidelines:
    \begin{itemize}
        \item The answer \answerNA{} means that the paper does not include experiments.
        \item The paper should indicate the type of compute workers CPU or GPU, internal cluster, or cloud provider, including relevant memory and storage.
        \item The paper should provide the amount of compute required for each of the individual experimental runs as well as estimate the total compute. 
        \item The paper should disclose whether the full research project required more compute than the experiments reported in the paper (e.g., preliminary or failed experiments that didn't make it into the paper). 
    \end{itemize}
    
\item {\bf Code of ethics}
    \item[] Question: Does the research conducted in the paper conform, in every respect, with the NeurIPS Code of Ethics \url{https://neurips.cc/public/EthicsGuidelines}?
    \item[] Answer: \answerYes{} 
    \item[] Justification: Our research aligns with the NeurIPS Code of Ethics.
    \item[] Guidelines:
    \begin{itemize}
        \item The answer \answerNA{} means that the authors have not reviewed the NeurIPS Code of Ethics.
        \item If the authors answer \answerNo, they should explain the special circumstances that require a deviation from the Code of Ethics.
        \item The authors should make sure to preserve anonymity (e.g., if there is a special consideration due to laws or regulations in their jurisdiction).
    \end{itemize}

\item {\bf Broader impacts}
    \item[] Question: Does the paper discuss both potential positive societal impacts and negative societal impacts of the work performed?
    \item[] Answer: \answerNA{} 
    \item[] Justification: This work aims to improve the accuracy of time series forecasting.
To the best of our knowledge, it is not expected to introduce significant negative societal impacts.
    \item[] Guidelines:
    \begin{itemize}
        \item The answer \answerNA{} means that there is no societal impact of the work performed.
        \item If the authors answer \answerNA{} or \answerNo, they should explain why their work has no societal impact or why the paper does not address societal impact.
        \item Examples of negative societal impacts include potential malicious or unintended uses (e.g., disinformation, generating fake profiles, surveillance), fairness considerations (e.g., deployment of technologies that could make decisions that unfairly impact specific groups), privacy considerations, and security considerations.
        \item The conference expects that many papers will be foundational research and not tied to particular applications, let alone deployments. However, if there is a direct path to any negative applications, the authors should point it out. For example, it is legitimate to point out that an improvement in the quality of generative models could be used to generate Deepfakes for disinformation. On the other hand, it is not needed to point out that a generic algorithm for optimizing neural networks could enable people to train models that generate Deepfakes faster.
        \item The authors should consider possible harms that could arise when the technology is being used as intended and functioning correctly, harms that could arise when the technology is being used as intended but gives incorrect results, and harms following from (intentional or unintentional) misuse of the technology.
        \item If there are negative societal impacts, the authors could also discuss possible mitigation strategies (e.g., gated release of models, providing defenses in addition to attacks, mechanisms for monitoring misuse, mechanisms to monitor how a system learns from feedback over time, improving the efficiency and accessibility of ML).
    \end{itemize}
    
\item {\bf Safeguards}
    \item[] Question: Does the paper describe safeguards that have been put in place for responsible release of data or models that have a high risk for misuse (e.g., pre-trained language models, image generators, or scraped datasets)?
    \item[] Answer: \answerNA{} 
    \item[] Justification: This paper does not have this risk.
    \item[] Guidelines:
    \begin{itemize}
        \item The answer \answerNA{} means that the paper poses no such risks.
        \item Released models that have a high risk for misuse or dual-use should be released with necessary safeguards to allow for controlled use of the model, for example by requiring that users adhere to usage guidelines or restrictions to access the model or implementing safety filters. 
        \item Datasets that have been scraped from the Internet could pose safety risks. The authors should describe how they avoided releasing unsafe images.
        \item We recognize that providing effective safeguards is challenging, and many papers do not require this, but we encourage authors to take this into account and make a best faith effort.
    \end{itemize}

\item {\bf Licenses for existing assets}
    \item[] Question: Are the creators or original owners of assets (e.g., code, data, models), used in the paper, properly credited and are the license and terms of use explicitly mentioned and properly respected?
    \item[] Answer: \answerYes{} 
    \item[] Justification: The code and datasets used in the paper are publicly available and properly credited.
    \item[] Guidelines:
    \begin{itemize}
        \item The answer \answerNA{} means that the paper does not use existing assets.
        \item The authors should cite the original paper that produced the code package or dataset.
        \item The authors should state which version of the asset is used and, if possible, include a URL.
        \item The name of the license (e.g., CC-BY 4.0) should be included for each asset.
        \item For scraped data from a particular source (e.g., website), the copyright and terms of service of that source should be provided.
        \item If assets are released, the license, copyright information, and terms of use in the package should be provided. For popular datasets, \url{paperswithcode.com/datasets} has curated licenses for some datasets. Their licensing guide can help determine the license of a dataset.
        \item For existing datasets that are re-packaged, both the original license and the license of the derived asset (if it has changed) should be provided.
        \item If this information is not available online, the authors are encouraged to reach out to the asset's creators.
    \end{itemize}

\item {\bf New assets}
    \item[] Question: Are new assets introduced in the paper well documented and is the documentation provided alongside the assets?
    \item[] Answer: \answerNA{} 
    \item[] Justification: N/A.
    \item[] Guidelines:
    \begin{itemize}
        \item The answer \answerNA{} means that the paper does not release new assets.
        \item Researchers should communicate the details of the dataset\slash code\slash model as part of their submissions via structured templates. This includes details about training, license, limitations, etc. 
        \item The paper should discuss whether and how consent was obtained from people whose asset is used.
        \item At submission time, remember to anonymize your assets (if applicable). You can either create an anonymized URL or include an anonymized zip file.
    \end{itemize}

\item {\bf Crowdsourcing and research with human subjects}
    \item[] Question: For crowdsourcing experiments and research with human subjects, does the paper include the full text of instructions given to participants and screenshots, if applicable, as well as details about compensation (if any)? 
    \item[] Answer: \answerNA{} 
    \item[] Justification: N/A.
    \item[] Guidelines:
    \begin{itemize}
        \item The answer \answerNA{} means that the paper does not involve crowdsourcing nor research with human subjects.
        \item Including this information in the supplemental material is fine, but if the main contribution of the paper involves human subjects, then as much detail as possible should be included in the main paper. 
        \item According to the NeurIPS Code of Ethics, workers involved in data collection, curation, or other labor should be paid at least the minimum wage in the country of the data collector. 
    \end{itemize}

\item {\bf Institutional review board (IRB) approvals or equivalent for research with human subjects}
    \item[] Question: Does the paper describe potential risks incurred by study participants, whether such risks were disclosed to the subjects, and whether Institutional Review Board (IRB) approvals (or an equivalent approval/review based on the requirements of your country or institution) were obtained?
    \item[] Answer: \answerNA{} 
    \item[] Justification: N/A.
    \item[] Guidelines:
    \begin{itemize}
        \item The answer \answerNA{} means that the paper does not involve crowdsourcing nor research with human subjects.
        \item Depending on the country in which research is conducted, IRB approval (or equivalent) may be required for any human subjects research. If you obtained IRB approval, you should clearly state this in the paper. 
        \item We recognize that the procedures for this may vary significantly between institutions and locations, and we expect authors to adhere to the NeurIPS Code of Ethics and the guidelines for their institution. 
        \item For initial submissions, do not include any information that would break anonymity (if applicable), such as the institution conducting the review.
    \end{itemize}

\item {\bf Declaration of LLM usage}
    \item[] Question: Does the paper describe the usage of LLMs if it is an important, original, or non-standard component of the core methods in this research? Note that if the LLM is used only for writing, editing, or formatting purposes and does \emph{not} impact the core methodology, scientific rigor, or originality of the research, declaration is not required.
    \item[] Answer: \answerNA{} 
    \item[] Justification: LLMs are used only for grammar and spell checking.
    \item[] Guidelines:
    \begin{itemize}
        \item The answer \answerNA{} means that the core method development in this research does not involve LLMs as any important, original, or non-standard components.
        \item Please refer to our LLM policy in the NeurIPS handbook for what should or should not be described.
    \end{itemize}

\end{enumerate}